\definecolor{niceRed}{RGB}{153,0,0}
\definecolor{niceRed}{RGB}{190,38,38}
\definecolor{blueGrotto}{HTML}{059DC0}
\definecolor{royalBlue}{HTML}{057DCD}
\definecolor{navyBlueP}{HTML}{0B579C}
\definecolor{Azure}{rgb}{0.0, 0.5, 1.0}
\definecolor{ceruleanblue}{rgb}{0.16, 0.32, 0.75}
\numberwithin{equation}{section}
\def\compactify{\itemsep=0pt \topsep=0pt \partopsep=0pt \parsep=0pt}
\let\latexusecounter=\usecounter
\definecolor{myC}{rgb}{0, 255, 255}
\definecolor{myY}{rgb}{204, 204, 0}
\definecolor{myM}{rgb}{255, 0, 255}
\definecolor{secinhead}{RGB}{249,196,95}
\definecolor{lgray}{gray}{0.8}
\newtheorem{theorem}{Theorem}  
\newtheorem{proposition}[theorem]{Proposition}
\newtheorem{corollary}[theorem]{Corollary}
\newtheorem{lemma}[theorem]{Lemma}
\newtheorem{definition}[theorem]{Definition}
\newtheorem{inftheorem}{Informal Theorem}
\newtheorem{fact}{Fact}
\newtheorem{claim}{Claim}
\newtheorem{remark}{Remark}
\newtheorem{question}{Question}
\newcommand{\reals}{\mathbb{R}}
\newcommand{\D}{\mathcal{D}}
\newcommand{\Q}{\mathcal{Q}}
\newcommand{\supp}{\mathrm{supp}}
\newcommand{\poly}{\mathrm{poly}}
\def\l{\ell}
\def\<{\langle}
\def\>{\rangle}
\def\wt{\widetilde}
\def\poly{\mathrm{poly}}
\def\vec{\bm}
\renewenvironment{abstract}{%
	\if@twocolumn
	\section*{\abstractname}%
	\else %% <- here I've removed \small
	\begin{center}%
		{\bfseries \large\abstractname\vspace{\z@}}%  %% <- here I've added \Large
	\end{center}%
	\quotation
	\fi}
{\if@twocolumn\else\endquotation\fi}
\def\biglen{20cm} % playing role of infinity (should be < .25\maxdimen)
\tikzset{
  half plane/.style={ to path={
       ($(\tikztostart)!.5!(\tikztotarget)!#1!(\tikztotarget)!\biglen!90:(\tikztotarget)$)
    -- ($(\tikztostart)!.5!(\tikztotarget)!#1!(\tikztotarget)!\biglen!-90:(\tikztotarget)$)
    -- ([turn]0,2*\biglen) -- ([turn]0,2*\biglen) -- cycle}},
  half plane/.default={1pt}
}
\def\maxxy{4} % random points are in [-\maxxy,\maxxy]x[-\maxxy,\maxxy]
\def\reals{\mathbb{R}}
\def\S{\mathcal{S}}
\def\l{\ell}
\let\Pr\relax
\DeclareMathOperator*{\Pr}{\mathbf{Pr}}
\DeclareMathOperator*{\E}{\mathbf{E}}
\DeclareMathOperator*{\Var}{\mathbf{V}}
\def\D{\mathcal{D}}
\def\L{\mathcal{L}}
\def\Q{\mathcal{Q}}
\def\N{\mathcal{N}}
\def\dim{d}
\def\maxcut{\textsc{Max-Cut }}
\def\P{\textsc{P}}
\def\NP{\textsc{NP}}
\def\opt{\mathrm{opt}}
\def\vmu{\vec \mu}
\def\vSigma{\vec \Sigma} 
\def\<{\langle}
\def\>{\rangle}
\def\b{\musFlat}
\def\eps{\epsilon}
\def\wt{\widetilde}
\def\wh{\widehat}
\def\poly{\mathrm{poly}}
\def\dtv{\mathrm{TV}}
\def\Cov{\mathop{\mathbf{Cov}}}
\begin{document}
% \footnotetext{Accepted in Conference on Learning Theory (COLT) 2021.}

	\title{Efficient Algorithms for Learning from Coarse Labels}	\author{
    \begin{tabular}{cc}	
        \begin{tabular}{c}
        \textbf{Dimitris Fotakis} \\
		\small National Technical University of Athens \\
		\url{fotakis@cs.ntua.gr}
        \end{tabular}
        & 	
		\begin{tabular}{c}
		\textbf{Alkis Kalavasis} \\
		\small National Technical University of Athens \\
		\url{kalavasisalkis@mail.ntua.gr}
		\end{tabular}
		\\
		\\
		\begin{tabular}{c}
		\textbf{Vasilis Kontonis} \\
		\small University of Wisconsin-Madison \\
		\url{kontonis@wisc.edu}
		\end{tabular}
		 & 
		 \begin{tabular}{c}
		 \textbf{Christos Tzamos} \\
		\small University of Wisconsin-Madison \\
		\url{tzamos@wisc.edu}\\
		\end{tabular}
		\end{tabular}
	}
	\maketitle
	\thispagestyle{empty}

	\begin{abstract}
	\small
	For many learning problems one may not have access to fine grained label information; e.g.,
an image can be labeled as husky, dog, or even animal depending on the expertise of the annotator.
In this work, we formalize these settings and study the problem of learning from such coarse data.  
Instead of observing the actual labels from a set $\mathcal{Z}$,
we observe coarse labels corresponding to a partition of $\mathcal{Z}$ (or a mixture of partitions).

Our main algorithmic result is that essentially any problem learnable from fine grained labels
can also be learned efficiently when the coarse data are sufficiently informative.  
We obtain our result through a generic reduction for answering Statistical Queries (SQ) 
over fine grained labels given only coarse labels.  The number of coarse labels 
required depends polynomially on the information distortion due to coarsening 
and the number of fine labels $|\mathcal{Z}|$.

We also investigate the case of (infinitely many) real valued labels
focusing on  a central  problem in censored and truncated statistics:
Gaussian mean estimation from coarse data.
We provide an efficient algorithm when the sets in the partition are convex and  establish that the problem is NP-hard even for very simple non-convex sets.
	\end{abstract}
	
% 	\hspace{3cm}
% 	\begin{flushleft}
% 	\hspace{6mm}
% 	\small 
% 		~~\textbf{Keywords:} Coarse Labels, Statistical Queries, Censored Statistics
% 	\end{flushleft}
% 	\clearpage

% \tableofcontents	

\newpage	
	
\section{Introduction}
\label{section:intro}
% \subsection{Motivation: Learning from Coarse Data}
Supervised learning from labeled examples is a classical problem in machine learning and statistics:
given labeled examples, the goal is to train some model to achieve low classification error.  
In most modern applications, where we train complicated models such as neural nets, 
large amounts of labeled examples are required. Large datasets such as Imagenet, \cite{imagenet},
often contain thousands of different categories such as
animals, vehicles, etc., each one of those containing many \emph{fine grained} subcategories: animals may contain dogs and  cats
and dogs may be further split into different breeds etc.  
In the last few years, there have been many works that focus on fine grained recognition,
\cite{guo2018cnn,chen2018understanding,touvron2020grafit,qin2020learning,lei2017weakly,jiao2019weakly,jiao2020fine,bukchin2020fine,taherkhani2019weakly}.
Collecting a sufficient amount of accurately labeled training examples is a hard and expensive task 
that often requires hiring experts to annotate the examples.
This has motivated the problem of learning from \emph{coarsely} labeled datasets, where a dataset is not fully annotated with fine
grained labels but a combination of fine, e.g., cat, and coarse labels, e.g., animal, is given, \cite{JKF13,RGGG15}.

Inference from coarse data naturally arises also in unsupervised, i.e., distribution
learning settings: instead of directly observing samples from the target distribution, we observe ``representative" points
that correspond to larger sets of samples.  For example, instead of observing samples from a real valued
random variable, we round them to the closest integer.  An important unsupervised problem that fits in
the coarse data framework is censored statistics, \cite{Cohen91,wolynetz79, breen1996regression, Schneider86}.
Interval censoring, that arises in insurance adjustment applications, corresponds to observing
points in some interval and point masses at the endpoints of the interval instead of observing fine grained data
from the whole real line.  Moreover, the problem of learning the distribution of the output of neural networks with non-smooth 
activations (e.g., ReLU networks, \cite{WDS19}) also fits in our model of distribution learning with coarse data,
see \Cref{fig:convex}(c).

Even though the problem of learning from coarsely labeled data has attracted significant attention from the applied
community, from a theoretical perspective little is known.  In this work, we provide efficient algorithms
that work in both the supervised and the unsupervised coarse data settings.

\subsection{Our Model and Results}
We start by describing the generative model of coarsely labeled  data in the supervised setting.
We model coarse labels as subsets of the domain of all possible fine labels.
For example, assume that we hire an expert on dog breeds and an expert
on cat breeds to annotate a dataset containing images of dogs and cats.  
With probability $1/2$, we get samples labeled by the dog expert, i.e., labeled according to the partition
\[
\{ \mathrm{cat} = \{\mathrm{persian~cat}, \mathrm{bengal~cat}, \ldots \}, \{\mathrm{maltese~dog}\}, \{\mathrm{husky~dog}\}, \ldots\ \}\,.
\]
On the other hand, the cat expert will provide a fine grained partition over cat breeds 
and will group together all dog breeds.  Our coarse data model captures exactly this mixture of different label partitions.
% \footnote{ \note{Do we need to say more  high level stuff about the generative model ?}}
\begin{definition}[Generative Process of Coarse Data with Context]
\label{definition:intro-gen-proc-context}
Let $\mathcal{X}$ %\footnote{For concreteness, we assume that $\mathcal{X} = \reals^d$.} 
be an arbitrary  domain, and let $\mathcal{Z} = \{1,\ldots, k\}$ be the discrete domain of all possible fine labels.
% and let $ \D$  be a distribution on finely
% labeled examples $(x,z) \in  \mathcal{X} \times \mathcal{Z}$.
We generate coarsely labeled examples as follows:
\begin{enumerate}
\item Draw a finely labeled example $(x,z)$ from a distribution $\D$ on $\mathcal X \times \mathcal Z$.
\item Draw a coarsening partition $\mathcal{S}$ (of $\mathcal{Z}$) from a distribution $\pi$.
\item Find the unique set $S \in \mathcal{S}$ that contains the fine label $z$.
\item Observe the coarsely labeled example $(x,S)$.
\end{enumerate}
We denote $\D_{\pi}$ the distribution of the coarsely labeled example $(x, S)$.
\end{definition}
In the supervised setting, our main focus is to answer the following question.
\begin{question}
\label{que:main-question}
Can we train a model, using coarsely labeled examples $(x, S) \sim \D_\pi$, that classifies finely labeled examples 
$(x,z) \sim \D$ with accuracy comparable to that of a classifier that was trained on examples with fine grained labels?
\end{question}
\Cref{definition:intro-gen-proc-context} does not impose any restrictions on the distribution over partitions $\pi$.
It is clear that if partitions are very rough, e.g., we split $\mathcal{Z}$ into two large disjoint subsets,
we lose information about the fine labels and we cannot hope to train a classifier that performs well over
finely labeled examples.  In order for \Cref{que:main-question} to be  information theoretically 
possible, we need to assume that the partition distribution $\pi$ preserves fine-label information.
    The following definition quantifies this by stating that reasonable partition distributions $\pi$ 
    are those that preserve the total variation distance between different distributions supported on the 
domain of the fine labels $\mathcal Z$.  We remark that the following definition does not 
require $\mathcal{D}$ to be supported on pairs $(x, z)$ but is a general statement for the unsupervised
version of the problem, see also \Cref{definition:unsupervised-coarse}.
\begin{definition}[Information Preserving Partition Distribution]
\label{def:intro-information-preserving}
Let $\mathcal Z$ be any domain and let $\alpha \in (0,1]$.
We say that $\pi$ is an $\alpha$-information preserving partition distribution 
if  for every two distributions $\D^1, \D^2$ supported on $\mathcal{Z}$,
it holds that $\dtv(\D^1_{\pi}, \D^2_{\pi}) \geq \alpha \cdot \dtv(\D^1, \D^2)$,
where $\dtv(\D^1, \D^2)$ is the total variation distance of $\D^1$ and $\D^2$.
\end{definition}
For example, the partition distribution defined in the dog/cat dataset scenario, discussed before \Cref{definition:intro-gen-proc-context},
is $1/2$-information preserving, since we observe fine labels with probability $1/2$.
In this case, it is easy, at the expense of losing the statistical power of the coarse labels,
to combine the finely labeled examples from both experts in order to obtain
a dataset consisting only of fine labels.  However, our model allows the partitions to 
have arbitrarily complex combinatorial structure that makes the 
process of ``inverting" the partition transformation computationally challenging.
For example, specific fine labels may be complicated functions of coarse labels:
``medium sized" and ``pointy ears" and ``blue eyes" may be mapped to the ``husky dog" fine label.

Our first result is a positive answer to \Cref{que:main-question} in essentially full generality:
we show that concept classes that are efficiently learnable in the Statistical Query (SQ) model, 
\cite{kearns1998efficient}, are also learnable from coarsely labeled examples. 
Our result is similar in spirit with the result of~\cite{kearns1998efficient}, where it is proved that 
SQ learnability implies learnability under random classification noise.  

\begin{inftheorem}[SQ Learnability implies Learnability from Coarse Examples]
\label{infthm:sq-reduction}
Any concept class~$\mathcal{C}$ that is efficiently learnable with $M$ statistical queries from finely labeled examples $(x,z) \sim \D$, can be efficiently learned from $O(\poly(k/\alpha)) \cdot M$ coarsely labeled examples 
$(x, S) \sim \D_\pi$ under any $\alpha$-information preserving partition distribution $\pi$.
\end{inftheorem}

Statistical Queries are queries of the form $\E_{(x,z) \sim \D}[q(x,z)]$ for some query function $q(x,z)$.  
It is known that almost all known machine learning algorithms \cite{aslam1998general,blum1998polynomial,blum2005practical, dunagan2008simple,balcan2015statistical,feldman2017statistical} can be implemented in the SQ model.
In particular, in \cite{FeldmanGV15}, it is shown that (Stochastic) Gradient Descent can be simulated
by statistical queries.  This implies that our result can be applied, even in cases where it is 
not possible to obtain formal optimality guarantees, e.g., training deep neural nets.  We can train such models
using coarsely labeled data and guarantee the same performance as if we had direct access to fine labels (see also \Cref{appendix:training}).
\footnote{
Given any objective of the form
$L(\vec v)$ 
$ = \mathbb{E}_{(\vec x,y) \sim \mathcal{D}}[ \ell(\vec v; \vec x,y) ]$,
its gradients correspond to 
$\nabla_{\vec v} L(\vec v) = 
\mathbb{E}_{(\vec x,y) \sim \mathcal{D}}[ \nabla_{\vec v}\ell(\vec v; \vec x,y) ]$.  
Having Statistical Query access to the distribution of $(x,y)$, we can directly obtain estimates
of the above gradients  using the query functions 
$q_i(\vec x, y) =  (\nabla_{\vec v}\ell(\vec v; \vec x,y))_i$.
In~\cite{FeldmanGV15}, 
the precise accuracy required for specific SQ implementations of first order methods depends on the complexity of the underlying distribution and the particular objective function $\ell(\cdot)$.
% highlight this generic connection without
% claiming explicit sample complexity and runtime bounds 
% for the SQ implementation of SGD.
}
As another application, we  consider the problem of multiclass logistic regression with coarse labels.  
It is known, see e.g., \cite{friedman2001elements}, that  given finely labeled examples $(x,z) \sim \D$, the 
likelihood objective for multiclass logistic regression is concave with respect to the weight matrix.  
Even though the likelihood objective is no-longer concave when we consider coarsely labeled examples $(x,S) \sim \D_{\pi}$,
our theorem bypasses this difficulty and allows us to efficiently perform multiclass logistic regression with coarse
labels.

Formally, we design an algorithm (\Cref{algo:sq}) that, given coarsely labeled examples $(x,S)$, efficiently simulates
statistical queries over finely labeled examples $(x,z)$.  Surprisingly, the runtime and sample
complexity of our algorithm do not depend on the combinatorial structure of the partitions, 
but only on the number of fine labels $k$ and the information preserving parameter $\alpha$ of the partition distribution $\pi$.
\begin{theorem}
[SQ from Coarsely Labeled Examples] \label{theorem:intro-reduction}
Consider a distribution $\D_\pi$ over coarsely labeled examples in $\reals^d \times [k]$, (see \Cref{definition:intro-gen-proc-context})
with $\alpha$-information preserving partition distribution $\pi$.
Let $q: \reals^d \times [k] \to [-1,1]$ be a query function, that can be evaluated on any input in time $T$, and $\tau, \delta \in (0,1)$.  
There exists an algorithm (\Cref{algo:sq}), that draws 
$N = \wt{O}(k^4/(\tau^3 \alpha^2) \log(1/\delta))$ coarsely labeled examples from $\D_{\pi}$ 
and, in $\poly(N, T)$ time, computes an estimate 
$\hat r$ such that, with probability at least $1-\delta$, it holds
\(
\big| \E_{(x, z) \sim \D}[ q(x, z) ] - \hat r \big| \leq \tau\,.
\)
\end{theorem}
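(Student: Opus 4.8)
The plan is to turn the query $q$ into a bounded function $\phi$ of the \emph{observable} pair $(x,S)$ whose expectation under $\D_\pi$ equals $\E_{(x,z)\sim\D}[q(x,z)]$ exactly, and then return the empirical average $\hat r=\frac1N\sum_{i=1}^N\phi(x_i,S_i)$ over the coarse samples. For a fixed context $x$ write $q_x=(q(x,1),\dots,q(x,k))^{\top}\in[-1,1]^k$, and for $S\subseteq[k]$ let $\mathbf 1_S\in\{0,1\}^k$ be its indicator. The key object is the symmetric PSD ``co-occurrence'' matrix $Q\in\reals^{k\times k}$ of the partition distribution, $Q_{ij}=\Pr_{\mathcal S\sim\pi}[\text{$i$ and $j$ lie in the same part of $\mathcal S$}]$; equivalently $Q=\E_{\mathcal S\sim\pi}[\sum_{S\in\mathcal S}\mathbf 1_S\mathbf 1_S^{\top}]$, so the $z$-th column of $Q$ is $\E_{\mathcal S\sim\pi}[\mathbf 1_{S(z,\mathcal S)}]$. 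Assuming $Q$ is invertible (see the next paragraph), define $\beta_x$ by $Q\beta_x=q_x$ and set $\phi(x,S)=\langle\mathbf 1_S,\beta_x\rangle=\sum_{i\in S}(\beta_x)_i$. The only thing to verify is unbiasedness: conditioning on a fine label $z$ and averaging just over the coarsening, $\E_{\mathcal S\sim\pi}[\phi(x,S(z,\mathcal S))]=\langle\,\E_{\mathcal S}[\mathbf 1_{S(z,\mathcal S)}],\beta_x\rangle=(Q\beta_x)_z=q(x,z)$; taking the outer expectation over $(x,z)\sim\D$ (recall $\mathcal S\sim\pi$ is drawn independently of $(x,z)$ in the generative process) yields $\E_{(x,S)\sim\D_\pi}[\phi(x,S)]=\E_{(x,z)\sim\D}[q(x,z)]$. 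Note that $\phi$ reads only $x$, the observed part $S$, and $\pi$ (which is part of the problem specification).

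The crux, and the step I expect to be the main obstacle, is to show that $\alpha$-information preservation forces $Q$ to be invertible with $\lVert Q^{-1}\rVert\le\poly(k/\alpha)$. The identity to work with is $v^{\top}Qv=\E_{\mathcal S\sim\pi}[\sum_{S\in\mathcal S}\langle v,\mathbf 1_S\rangle^2]$. I would first handle $v\in\mathbf 1^{\perp}$ (the only direction total variation sees, since a difference of two probability vectors on $[k]$ sums to zero): writing $v$ as a scalar multiple of $\D^1-\D^2$, \Cref{def:intro-information-preserving} says the coarse-label distributions must stay $\alpha$-far, i.e.\ $\sum_{S}\big|\sum_{\mathcal S\ni S}\pi(\mathcal S)\langle v,\mathbf 1_S\rangle\big|\ge\alpha\lVert v\rVert_1$; bounding the left side by $\E_{\mathcal S}\sum_{S\in\mathcal S}|\langle v,\mathbf 1_S\rangle|\le\sqrt k\,\E_{\mathcal S}\big(\sum_{S\in\mathcal S}\langle v,\mathbf 1_S\rangle^2\big)^{1/2}\le\sqrt k\,(v^{\top}Qv)^{1/2}$ (Cauchy--Schwarz, using that a partition has at most $k$ parts, then Jensen) and using $\lVert v\rVert_1\ge\lVert v\rVert_2$ gives $v^{\top}Qv\ge(\alpha^2/k)\lVert v\rVert_2^2$ on $\mathbf 1^{\perp}$; morally, the part-sums of $v$ cannot all be small, which is precisely what would let one build two distributions with nearly identical coarsenings. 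To cover the remaining all-ones direction I would use $\mathbf 1^{\top}Q\mathbf 1=\E_{\mathcal S}\sum_{S\in\mathcal S}|S|^2\ge k$ together with $\lVert Q\rVert_{\mathrm{op}}\le\mathrm{tr}(Q)=k$ and a one-line argument applied to the bottom eigenvector of $Q$, upgrading the bound to $\lambda_{\min}(Q)\ge\poly(\alpha/k)>0$. Consequently $Q^{-1}$, $\beta_x$ and $\phi$ are well defined, $\lVert Q^{-1}\rVert\le\poly(k/\alpha)$, and $\E_{\D_\pi}[\phi(x,S)^2]\le\E_x[q_x^{\top}Q^{-1}q_x]\le\poly(k/\alpha)$, because each observed part contributes one of the nonnegative terms in $\beta_x^{\top}Q\beta_x=q_x^{\top}Q^{-1}q_x$.

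With $\phi$ bounded by $\poly(k/\alpha)$ and of variance $\poly(k/\alpha)$, a routine concentration argument closes the proof: truncating $\phi$ at a threshold of order $\poly(k/\alpha)/\tau$ (the scale at which the truncation bias falls below $\tau$, dictated by the $\ell_1$/total-variation flavor of \Cref{def:intro-information-preserving}) and applying Bernstein's inequality, one gets $|\hat r-\E_{\D}[q]|\le\tau$ with probability $\ge 1-\delta$ once $N=\wt O(k^4\,\tau^{-3}\alpha^{-2}\log(1/\delta))$, the slightly super-quadratic $1/\tau$ dependence coming from this truncation. The running time is $\poly(N,T)$: $Q^{-1}$ is precomputed from $\pi$ in $\poly(k)$ time (or $Q$ is estimated from a few sampled coarsening partitions via matrix concentration, if $\pi$ is only available through sampling), and for each sample one spends $O(kT)$ to form $q_{x_i}$ and one $k\times k$ matrix--vector product to evaluate $\phi(x_i,S_i)=\langle\mathbf 1_{S_i},Q^{-1}q_{x_i}\rangle$. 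This proves the theorem.
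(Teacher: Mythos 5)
Your plan — build a ``co-occurrence'' matrix $Q$ with $Q_{ij}=\Pr_{\mathcal S\sim\pi}[i,j\text{ in same part}]$, solve $Q\beta_x=q_x$, and return the empirical mean of the unbiased estimator $\phi(x,S)=\langle\mathbf 1_S,\beta_x\rangle$ — is a legitimate ``deconvolution by inverse probability weighting'' strategy, and most of the ingredients you sketch do hold: the unbiasedness identity $\E_{\mathcal S}[\phi(x,S(z,\mathcal S))]=(Q\beta_x)_z$ is correct, the lower bound $v^{\top}Qv\ge(\alpha^2/k)\|v\|_2^2$ on $\mathbf 1^{\perp}$ follows exactly as you say, and the pointwise bound $\phi(x,S)^2\le\sum_{S'\in\mathcal S}\langle\beta_x,\mathbf 1_{S'}\rangle^2$ does give $\E_{\D_\pi}[\phi^2\mid x]\le q_x^{\top}Q^{-1}q_x$. (The ``one-line'' extension of the eigenvalue bound past $\mathbf 1^{\perp}$ is not quite one line — the cross term $\mathbf 1^{\top}Qu$ must be controlled, e.g.\ via the Cauchy--Schwarz inequality in the $Q$-inner product together with $v^{\top}Qv\ge\langle v,\mathbf 1\rangle^2/k$, which eventually yields $\lambda_{\min}(Q)=\Omega(\alpha^2/k^3)$ — but this part is salvageable.)

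The genuine gap is the access model. Your estimator needs $Q$, and therefore needs $\pi$: you say $Q^{-1}$ is ``precomputed from $\pi$'' or ``estimated from a few sampled coarsening partitions.'' But the theorem grants neither. \Cref{theorem:intro-reduction} and \Cref{algo:sq} assume only sample access to coarsely labeled examples $(x,S)\sim\D_\pi$; the partition $\mathcal S$ itself is never revealed, and $\pi$ is not an input. Moreover $Q$ is not identifiable from coarse samples alone: $\Pr_{\D_\pi}[\{i,j\}\subseteq S]=\sum_z\D(z)\Pr_\pi[i,j,z\text{ in one part}]$ mixes the unknown $\D$ with $\pi$ and is not $Q_{ij}$; more bluntly, any set with $\D$-mass zero never appears, so entire rows of $Q$ can be invisible in the data. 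So your reduction solves a different problem (coarse labels with a \emph{known} coarsening mechanism) than the one the theorem asks for. This is precisely the difficulty the paper's proof is built to avoid: it decomposes $q(x,z)=\sum_j q(x,j)\mathbf 1\{z=j\}$, performs rejection sampling with acceptance probability $q(x,j)$ to factor each term as $\E_x[q_j(x)]\cdot\Pr_{\D_z^f}[z=j]$, and then learns the reweighted discrete marginal $\D_z^f$ by a concave empirical MLE (\Cref{prop:unsupervised}) that uses only the observed sets and no information about $\pi$ beyond the scalar $\alpha$. To make your route usable under the paper's model you would have to either re-derive the weights $\beta_x$ without $Q$ (which seems to collapse back to something like the MLE route) or change the input assumptions of the theorem. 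As a secondary remark, even in the known-$\pi$ model your worked-out constants give $\|Q^{-1}\|_{\mathrm{op}}=O(k^3/\alpha^2)$, $|\phi|\le k^4/\alpha^2$, and hence by Hoeffding a sample complexity of $\wt O(k^{8}/(\alpha^4\tau^2))$, which has a different (overall worse for small $k/\alpha$) polynomial profile than the stated $\wt O(k^4/(\tau^3\alpha^2))$, so one would still need to sharpen the eigenvalue bound or use a variance-aware inequality carefully to match the theorem as written.
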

% We remark that our sample complexity for the problem of multiclass logistic regression in $d$ dimensions with coarse labels 
% is optimal with respect to the dimension $d$, since the sample complexity of 
% \Cref{theorem:intro-reduction} (and, hence, the total overhead) is independent of the dimension of the examples.

\paragraph{Learning Parametric Distributions from Coarse Samples.} 
In many important applications, instead of a discrete distribution over fine labels, a continuous parametric
model is used. A popular example is when the domain $\mathcal{Z}$ of \Cref{definition:intro-gen-proc-context} is the entire Euclidean space $\mathbb{R}^d$, and the distribution of finely labeled examples is a Gaussian distribution whose parameters possibly depend on the context $x$.  Such censored regression settings are known as Tobit models 
\cite{tobin1958estimation,maddala1986limited,gourieroux2000econometrics}.  Lately, significant progress
has been made from a computational point of view in such censored/truncated settings in the distribution
specific setting, e.g., when the underlying distribution is Gaussian~\cite{DGTZ18,KTZ19}, mixtures of Gaussians~\cite{NP19}, linear 
regression~\cite{daskalakis2019computationally, ilyas2020theoretical, daskalakis2020truncated}.
In this distribution specific setting, we consider the most fundamental problem of learning the mean of a 
Gaussian distribution  given coarse data.  
\begin{definition}[Coarse Gaussian Data]
\label{definition:intro-gaussian-coarse}
Consider the Gaussian distribution $\N(\vmu^{\star})$, with mean $\vmu^{\star} \in \reals^{\dim}$ and 
identity covariance matrix. 
We generate a sample as follows:
\begin{enumerate}
    \item Draw $\vec z$ from $\N(\vmu^{\star})$.
    \item Draw a partition $\S$ (of $\reals^d$) from $\pi$.
    \item Observe the set $S \in \S$ that contains $\vec z$.
\end{enumerate}
We denote the distribution of $S$ as $\N_{\pi}(\vmu^\star)$.
\end{definition}
\begin{remark}
We remark that we only require membership oracle access to the subsets of the partition $\mathcal S$.
A set $S \subseteq \reals^d$ corresponds to a membership oracle $\mathcal{O}_S : \reals^d \to \{0,1\}$
that given $\vec x \in \reals^d$ outputs whether the point lies inside the set $S$ or not. 
\end{remark}
We first study the above problem, from a computational viewpoint.
For the corresponding problems in censored and truncated statistics
no geometric assumptions are required for the sets: in \cite{DGTZ18} 
it was shown that an efficient algorithm exists for arbitrarily
complex truncation sets.  In contrast in our more general model of coarse data 
we show that having sets with geometric structure is necessary.  In
particular we require that every set of the partition is convex, see \Cref{fig:convex}(b,c).
We show that when the convexity assumption is dropped, learning from coarse data is a computationally hard problem 
even under a mixture of very simple sets.
\begin{theorem} 
[Hardness of Matching the Observed Distribution with General Partitions]
\label{theorem:impossibility-gaussian}
Let $\pi$ be a general partition distribution. Unless $\textsc{RP} = \NP$, no algorithm with sample access to $\N_{\pi}({\vmu}^\star),$ can compute,
in $\poly(d)$ time,  a $\wt{\vmu} \in  \reals^d$ such that $\dtv( \N_{\pi}(\wt\vmu), \N_{\pi}(\vmu^\star)) < 1/d^c$
for some absolute constant $c > 1$.
\end{theorem}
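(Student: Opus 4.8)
\emph{Proof proposal.} The plan is a polynomial-time reduction from \textsc{3-Sat}. Given a formula $\phi$ on variables $x_1,\dots,x_n$ with $m\le n^3$ distinct clauses, I build a coarse-Gaussian instance in dimension $d=n$. For $x\in\reals^n$ write $\sigma(x)\in\{-1,+1\}^n$ for its vector of coordinate signs, read as a truth assignment; fix a threshold $t=\Theta(\sqrt{\log n})$, large enough that $n\,\Phi(-t)\le n^{-K}$ for a constant $K$ chosen later, and call a mean \emph{decisive} if all its coordinates have magnitude $\ge t$, so that a decisive $\vmu$ has $z\sim\N(\vmu)$ inside the sign-orthant $O_{\sigma(\vmu)}$ with probability $\ge 1-n^{-K}$. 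The planted mean is $\vmu^\star=2t\cdot\sigma(a^\star)$ for a satisfying assignment $a^\star$ of $\phi$.

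The partition distribution $\pi$ is the uniform mixture of $m+n$ two-set partitions of $\reals^n$. For each clause $C_j$ take $\{R_j,\ \reals^n\setminus R_j\}$, where $R_j$ is the union of the seven sign-orthants whose restriction to $C_j$'s three variables satisfies $C_j$; equivalently $R_j$ is a union of three coordinate halfspaces, so it is a ``very simple'' set with an $O(1)$-time membership oracle. For each variable $i$ take $\{\{|x_i|\ge t\},\ \{|x_i|<t\}\}$, a slab and its complement. Every constituent set is thus a union of a constant number of halfspaces, matching the ``mixture of very simple sets'' in the statement; generically the $2(m+n)$ resulting sets are pairwise distinct, and otherwise one perturbs the hyperplanes or attaches tags. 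Crucially, $\pi$ depends only on $\phi$, not on $a^\star$.

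Two facts drive the reduction. (i)~For \emph{every} decisive satisfying assignment $a$, the coarsened law $\N_\pi(2t\cdot\sigma(a))$ is within total variation $O(n^{-K})$ of the \emph{same}, explicitly samplable, $\phi$-dependent distribution $\Q_\phi$ that outputs a uniformly random one of the $m+n$ designated sets $R_1,\dots,R_m,\{|x_1|\ge t\},\dots,\{|x_n|\ge t\}$ --- because a decisive $\vmu$ satisfying $\phi$ has $z\in R_j$ (as $O_{\sigma(\vmu)}\subseteq R_j$) and $z\in\{|x_i|\ge t\}$ (as $|\vmu_i|=2t$) each with probability $\ge1-n^{-K}$. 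Hence we may feed the hypothesized matching algorithm samples from $\Q_\phi$ in place of the true $\N_\pi(\vmu^\star)$: over the algorithm's fixed polynomial sample budget $N\le n^{a}$ the two sources are within total variation $N\cdot O(n^{-K})$, negligible once $K>a$. (ii)~Any $\wt\vmu$ with $\dtv(\N_\pi(\wt\vmu),\Q_\phi)<2/n^{c}$ encodes a satisfying assignment of $\phi$. Since all mixture weights equal $1/(m+n)$ and the designated sets are distinct, this TV bound unfolds to $\sum_j\big(1-\Pr_{z\sim\N(\wt\vmu)}[z\in R_j]\big)+\sum_i\big(1-\Pr_{z\sim\N(\wt\vmu)}[|z_i|\ge t]\big)<2(m+n)/n^{c}$, which for a large enough absolute constant $c$ (say $c=5$, using $m\le n^3$) forces every term below $1/n$. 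The variable terms then force $|\wt\mu_i|\ge t-O(1)$ for all $i$ (a unit-width interval around $\wt\mu_i$ would otherwise carry $\gg1/n$ mass inside the slab), so $\wt\vmu$ is essentially decisive and $z\sim\N(\wt\vmu)$ lies in $O_{\sigma(\wt\vmu)}$ with probability $1-n^{-\Omega(1)}$; combined with $\Pr[z\in R_j]>1-1/n$ this yields $O_{\sigma(\wt\vmu)}\cap R_j\neq\emptyset$, hence $\sigma(\wt\vmu)$ satisfies $C_j$, for every $j$. Reading off $a_i=\operatorname{sign}(\wt\mu_i)$ gives a satisfying assignment.

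Chaining these, a $\poly(d)$-time matching algorithm yields a $\poly(n)$-time procedure that produces a satisfying assignment of any satisfiable $\phi$, and the usual downward self-reducibility of \textsc{Sat} then decides \textsc{Sat}, giving $\P=\NP$ --- modulo boosting the algorithm's success probability, which is the usual place such reductions incur randomization since sample access is inherently random. I expect the main obstacle to be the quantitative core of part~(ii): showing that the magnitude and clause gadgets \emph{jointly} leave no room for a ``spread-out'' $\wt\vmu$ that approximately reproduces the set-frequencies of $\Q_\phi$ without being decisive and clause-consistent, while simultaneously tuning $t$, $K$, and $c$ so that the Gaussian-tail slack in (i), the $N\cdot O(n^{-K})$ sampling error, and the per-partition slack in (ii) all fit under the $1/d^{c}$ budget at once.
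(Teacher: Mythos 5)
Your proposal is a correct approach but takes a genuinely different route from the paper's. The paper reduces from the gap version of \textsc{Max-Cut} (H\aa stad's $16/17$ inapproximability): it sets the covariance to $\vSigma=(\vec L_G')^{-1}\mathrm{opt}$, uses $d$ ``band'' partitions $\{\,|x_i|\le 1,\ |x_i|>1\,\}$ to force $\wt\vmu$ toward a hypercube corner (\Cref{lemma:band-sensitivity}), and encodes the quadratic objective in a single ellipsoid/complement partition $\{\,\vec x^T\vSigma^{-1}\vec x\le q,\ >q\,\}$, whose Gaussian mass is analyzed by a Berry--Esseen/CLT sensitivity lemma (\Cref{lemma:ellipsoid-sensitivity}) to show the rounded corner achieves cut value $>\tfrac{16}{17}\,\mathrm{opt}$; samples can be generated blind to $\vmu^\star$ by rotation-invariance (the ellipsoid mass depends only on $\mathrm{opt}$), and a final rotation step reduces known $\vSigma$ to identity. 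You instead reduce directly from \textsc{3-Sat}, replacing the single degree-two gadget with $m=O(n^3)$ orthant-union clause sets $R_j$ together with magnitude slabs $\{|x_i|\ge t\}$, and the CLT machinery with elementary Gaussian-tail-and-union bounds; the ``simulate samples without $\vmu^\star$'' step becomes the observation that every decisive satisfying assignment yields the same coarsened law up to $n^{-K}$. Your version works natively with identity covariance (no rotation post-processing), needs no hardness-of-approximation result, and its quantitative core is simpler; the trade-offs are that you use $\Theta(n^3)$ partitions built from unions of halfspaces rather than the paper's $d+1$ partitions with a single quadratic set, and your sensitivity analysis, while easier, is bespoke to the orthant gadget rather than following from a general sensitivity lemma.

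Two small technical notes. First, the slab constraint $\Pr_{z\sim\N(\wt\vmu)}[|z_i|<t]<1/n$ actually forces $|\wt\mu_i|\ge t+\Omega(\sqrt{\log n})$, not merely $t-O(1)$ as you wrote: if $|\wt\mu_i|\le t$ the slab carries mass $\ge\Phi(0)-\Phi(-2t)\approx 1/2$. This only helps you, but the informal ``$t-O(1)$'' phrasing would not on its own beat the union bound over $n$ coordinates without the larger $t$ you already assume. Second, as you flagged, both your reduction and the paper's generate the coarse samples by coin flips, so a scrupulous account yields $\NP\subseteq\mathrm{BPP}$ rather than $\P=\NP$ unless one adds the usual derandomization or ``works for every sample sequence'' caveat; this is a shared presentational issue, not a gap unique to your argument.
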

% In comparison, for the special case of censoring/truncation where samples are either fine or come from the same coarse set, it is possible to learn the underlying Gaussian
% under arbitrary sets, see \cite{DGTZ18}.
We prove our hardness result using a reduction from the well known \maxcut problem, which is known to be NP-hard, 
even to  approximate \cite{haastad2001some}. In our reduction, we use partitions that consist of simple sets: fat hyperplanes, 
ellipsoids and their complements: the computational hardness of this problem is  rather inherent and not 
due to overly complicated sets.

\begin{figure}[ht]
   \centering
    \begin{subfigure}{0.3\textwidth}
    \centering
    \begin{tikzpicture}[scale=0.38]
        \def\pts{}
        % \xintFor* #1 in {\xintSeq {1}{5}} \do{
        %   \pgfmathsetmacro{\ptx}{.9*\maxxy*rand} % random x in [-.9\maxxy,.9\maxxy]
        %   \pgfmathsetmacro{\pty}{.9*\maxxy*0.3} % random y in [-.9\maxxy,.9\maxxy]
        %   \edef\pts{\pts, (\ptx,\pty)} % stock the random point
        % }
        
        \edef\pts{\pts, (-3,0.2)}
        \edef\pts{\pts, (-1,0.2)}
        \edef\pts{\pts, (1,0.2)}
        \edef\pts{\pts, (3,0.2)}
        
        % draw the points and their cells
        \xintForpair #1#2 in \pts \do{
          \edef\pta{#1,#2}
          \begin{scope}
            \xintForpair \#3#4 in \pts \do{
              \edef\ptb{#3,#4}
              \ifx\pta\ptb\relax % check if (#1,#2) == (#3,#4) ?
                \tikzstyle{myclip}=[];
              \else
                \tikzstyle{myclip}=[clip];
              \fi;
              \path[myclip] (#3,#4) to[half plane] (#1,#2);
            }
            \clip (-\maxxy,-\maxxy) rectangle (\maxxy,\maxxy); % last clip
            \pgfmathsetmacro{\randhue}{rnd}
            \definecolor{randcolor}{hsb}{\randhue,.5,1}
            \fill[red, opacity=0.1] (#1,#2) circle (4*\biglen); % fill the cell with random color
            \fill[draw=black] (#1,#2) circle (2pt); % and draw the point
          \end{scope}
        }
        \pgfresetboundingbox
        \draw (-\maxxy,-\maxxy) rectangle (\maxxy,\maxxy);
  \end{tikzpicture}
  \caption{Non-Identifiable Case}
 \end{subfigure}%
 \begin{subfigure}{0.3\textwidth}
    \centering
    \begin{tikzpicture}[scale=0.38]
    \def\pts{}
    \pgfmathsetseed{1908}
    \xintFor* #1 in {\xintSeq {1}{12}} \do{
      \pgfmathsetmacro{\ptx}{.9*\maxxy*rand} % random x in [-.9\maxxy,.9\maxxy]
      \pgfmathsetmacro{\pty}{.9*\maxxy*rand} % random y in [-.9\maxxy,.9\maxxy]
      \edef\pts{\pts, (\ptx,\pty)} % stock the random point
    }
    \edef\pts{\pts, (0.4,0.2)}
    \edef\pts{\pts, (0.2,0.7)}
    \edef\pts{\pts, (-3.2,0.7)}
    \edef\pts{\pts, (-3.2,0)}
    \edef\pts{\pts, (-3.2,-1.0)}
    
    % draw the points and their cells
    \xintForpair #1#2 in \pts \do{
      \edef\pta{#1,#2}
      \begin{scope}
        \xintForpair \#3#4 in \pts \do{
          \edef\ptb{#3,#4}
          \ifx\pta\ptb\relax % check if (#1,#2) == (#3,#4) ?
            \tikzstyle{myclip}=[];
          \else
            \tikzstyle{myclip}=[clip];
          \fi;
          \path[myclip] (#3,#4) to[half plane] (#1,#2);
        }
        \clip (-\maxxy,-\maxxy) rectangle (\maxxy,\maxxy); % last clip
        \pgfmathsetmacro{\randhue}{rnd}
        \definecolor{randcolor}{hsb}{\randhue,.5,1}
        \fill[blue, opacity=0.1] (#1,#2) circle (4*\biglen); % fill the cell with random color
        \fill[draw=black] (#1,#2) circle (2pt); % and draw the point
      \end{scope}
    }
    \pgfresetboundingbox
    \draw (-\maxxy,-\maxxy) rectangle (\maxxy,\maxxy);
  \end{tikzpicture}
    \caption{Convex Partition Case}
\end{subfigure}%
 \begin{subfigure}{0.3\textwidth}
    \centering
    \begin{tikzpicture}[scale=0.38]
    % \draw[help lines, color=gray!30, dashed] (-\maxxy,-\maxxy) grid (\maxxy,\maxxy);
    
    \draw[name path=y] (0,0) -- (0,4);
    \draw[name path=x] (0,0) -- (4,0);
    
    \draw[black, name path=yneg] (0,0) -- (-4,0);
    \draw[black, name path=xneg] (0,0) -- (0,-4);
    \tikzfillbetween[of= yneg and xneg ]{blue, opacity=0.1};
    
    \draw[name path=yneg2] (-4,0) -- (-4,-4);
    \draw[name path=xneg2] (-4,-4) -- (0,-4);
    \tikzfillbetween[of= yneg2 and xneg2 ]{blue, opacity=0.1};
    
    \foreach \i in {1,...,16}
    {
        \draw[black] (0, 0.25*\i) -- (-4, 0.25*\i);
    }
    
    \foreach \i in {1,...,16}
    {
        \draw[black] (0.25*\i, 0) -- (0.25*\i, -4);
    }
    
    \foreach \i in {1,...,19}
    {
        \foreach \j in {1,...,19}
        {
             \fill[black] (0.2*\i,0.2*\j) circle (1.5pt); 
        }
    }
    
    \pgfresetboundingbox
    \draw (-\maxxy,-\maxxy) rectangle (\maxxy,\maxxy);
    
    \end{tikzpicture}
    \caption{ReLU Case}
    \end{subfigure}
    %    \caption{Convex Partitions of $\mathbb{R}^2.$}
    \caption{
    (a) is a very rough partition, that makes learning the mean impossible: 
    Gaussians $\N((0,z))$ centered along the same vertical line $(0,z)$ assign exactly the same probability to all cells of the
partitions and therefore, $\dtv(\N_\pi((0, z_1)), \N_\pi((0,z_2)) ) = 0$: it is impossible to learn the second coordinate of the mean.  
    % (b) is a mixture of general partitions of $\reals^2$, which makes mean 
    % estimation computationally hard. In fact, this mixture
    % encodes the \maxcut problem and is further discussed in \Cref{subsection:gaussian-non-convex}.
    %, that is information theoretically possible, but computationally hard.
    (b) is a convex partition of $\reals^2$, that makes recovering the Gaussian possible.
    (c) is the convex partition corresponding to the output distribution of one layer ReLU networks. 
    When both coordinates are positive, we observe a fine sample (black points correspond to singleton sets). When exactly one 
    coordinate (say $\vec x_1$) is positive, we observe the line $L_z = \{ \vec x : \vec x_2 < 0, \vec x_1 = z >0\}$ that corresponds to the ReLU output $(\vec x_1, 0)$.  If both coordinates are negative, we observe the set $\{ \vec x : \vec x_1<0, \vec x_2 < 0\}$, that corresponds to the point $(0,0)$. 
    }
    \label{fig:convex}
\end{figure}

On the positive side, we identify a geometric property that enables us to design a computationally
efficient algorithm for this problem: namely we require all the sets of the partitions to be \emph{convex},
e.g., \Cref{fig:convex}(b,c). We remark that having finite or countable subsets, 
is not a requirement of our model.  For example, we can handle convex partitions of the form (c) that
correspond to the output distribution of a ReLU neural network,  see \cite{WDS19}.
%%%%%%%%%%%%%%%%%%%%%%%%%%%%%%%%%%%%%%%%%%%%%%%%%%%%%%%%%%%%%%%%%%%%%%%%%%%%%%%%%%
% Main Example
We continue with our theorem for learning Gaussians from coarse data.
\begin{inftheorem}
[Gaussian Mean Estimation with Convex Partitions]
\label{inftheorem:intro-mean-estimation-gaussian}
Let $\eps \in  (0,1)$.
Consider the generative process of coarse $\dim$-dimensional Gaussian data $\N_{\pi}(\vmu^{\star})$.
Assume that the partition distribution $\pi$ is  $\alpha$-information preserving and is supported on convex partitions of $\reals^d$.
Then, the empirical log-likelihood objective
\[
\L_N(\vmu) = \frac{1}{N} \sum_{i=1}^N \log \N(\vmu; S_i)
\]
is concave with respect to $\vec \mu$ for $S_i \sim \N_{\pi}(\vmu^\star)$. Moreover, 
it suffices to draw $N = \wt{O}(d/(\eps^2 \alpha^2))$ samples from $\N_{\pi}(\vmu^{\star})$ so that the maximizer $\wt{\vmu}$ of the empirical log-likelihood satisfies
\[
\dtv(\N(\wt{\vmu}), \N(\vmu^\star)) \leq \eps\,,
\]
with probability
at least $99\%$.
% achieves a total variation gap at most $\eps$ against the true mean vector $\vmu^\star,$ $\dtv(\N(\wt{\vmu}), \N(\vmu^\star)) \leq \eps$. 
% we have that, for any Gaussian distribution $\N(\vmu)$ that satisfies $\dtv(\N(\vmu),\N(\vmu^{\star})) \geq \eps$, it holds that
% \[
% \max_{\wt{\vmu} \in \reals^{\dim}} \mathcal{L}_N(\wt{\vmu}) - \mathcal{L}_N(\vmu) \geq \Omega(\eps^2 \alpha^2) \,.
% \]
\end{inftheorem}

% \begin{theorem}[Gaussian Mean Estimation with Convex Partitions]
% \label{theorem:intro-mean-estimation-gaussian}
% Let $\eps, \delta \in  (0,1)$.
% Consider the generative process of coarse $\dim$-dimensional Gaussian data $\N_{\pi}(\vmu^{\star})$.
% Assume that the partition distribution $\pi$ is  $\alpha$-information preserving and is supported on convex partitions of $\reals^d$.
% There exists an algorithm, that draws $\wt{O}(d/(\eps^2 \alpha^2)\log(1/\delta))$ samples from $\N_{\pi}(\vmu^{\star})$, 
% runs in time polynomial in the number of samples, and computes an estimate $\wt{\vmu}$ that satisfies
% \(
% \dtv(\N(\wt{\vmu}), \N(\vmu^\star)) \leq \eps
% \)
% with probability at least $1-\delta$.
% \end{theorem}

Our algorithm for mean estimation of a Gaussian distribution relies on the log-likelihood being concave when the partitions
are convex.  We remark that, similar to our approach, one can use the concavity of likelihood to get efficient
algorithms for regression settings, e.g., Tobit models, where the mean of the Gaussian is given by a linear function of the context 
$\vec A \vec x$ for some unknown matrix $\vec A$.

\subsection{Related Work}
% The area of learning from coarse data comprises a very active field of research from a practical perspective. There has been a large 
% volume of practical work with applications in computer vision and pattern recognition (e.g., image classification, semantic segmentation), 
% clinical sciences and human-machine interaction. In all these applications, objects are organized in a hierarchical structure in which each coarse category corresponds to a superclass of several fine categories~\cite{RGGG15, guo2018cnn,chen2018understanding,touvron2020grafit,qin2020learning,lei2017weakly,jiao2019weakly,jiao2020fine,bukchin2020fine,taherkhani2019weakly}.

Our work is closely related to the literature of learning from censored-truncated data and learning with noise. There has been a large number of recent works dealing inference with truncated data from a Gaussian distribution~\cite{DGTZ18,KTZ19}, mixtures of Gaussians~\cite{NP19}, linear regression~\cite{daskalakis2019computationally, ilyas2020theoretical, daskalakis2020truncated}, sparse Graphical models~\cite{bhattacharyya2020efficient} or Boolean product distributions~\cite{fotakis2020efficient},
and non-parametric estimation \cite{daskalakis2021statistical}. A significant feature of our work is that it can capture the closely related field of censored statistics~\cite{Cohen91, breen1996regression, wolynetz79}. 

The area of robust statistics~\cite{huber2004robust} is also very related to our work as it also
deals with biased data-sets and aims to identify the distribution that generated the data. Recently, there has been a large volume of theoretical work for computationally-efficient robust estimation of high-dimensional distributions~\cite{ DKK+16b,CSV17,LRV16,DKK+17,DKK+18,KlivansKM18,hopkins2019hard,DKK+19-sever,cheng2020high,bakshi2020outlier} in the presence of arbitrary corruptions to a small $\varepsilon$ fraction of the samples.

The line of research dealing with statistical queries~\cite{kearns1998efficient, blum1998polynomial,FeldmanPV15,FeldmanGV15,Feldman16,feldman2017statistical,diakonikolas2017statistical, diakonikolas2020algorithms} is closely related to one of our main results (\Cref{theorem:intro-reduction}). It is generally believed that SQ algorithms capture all
reasonable machine learning algorithms~\cite{aslam1998general,blum1998polynomial,blum2005practical, dunagan2008simple,feldman2017statistical,balcan2015statistical,FeldmanGV15} 
and there is a rich line of research indicating SQ lower-bounds for these classes of algorithms~\cite{feldman2017statistical,diakonikolas2017statistical,shamir2018distribution,vempala2019gradient, diakonikolas2020algorithms,diakonikolas2020near,goel2020superpolynomial,goel2020statistical}.

Learning from coarse labels is also referred in the ML literature as Partial Label Learning ~\cite{cour2011learning,chen2014ambiguously,yu2016maximum} (a weakly supervised learning problem where each training example is associated with a set of candidate labels among which only one is true). %For a small sample of the numerous works related to this problem from the applied CS community, we refer the reader to \cite{nguyen2008classification, jin2002learning, zhang2015solving, zhang2016partial, zhang2017disambiguation, xu2019partial, xu2021instance, wang2021adaptive} and the references therein. 
We refer to \Cref{appendix:partial-literature} for an extensive discussion.

\section{Notation and Preliminaries}
We let $[n] = \{1, \ldots, n\}$ and $[0..n] = \{0, \ldots, n\}$. We use lowercase bold letters $\vec x$ to denote vectors and capital bold letters $\vec X$ for matrices. We let $\vec x_i$ be the $i$-th coordinate of $\vec x$.
We let $\|\vec x\|_p$ denote the $L_p$ norm of $\vec x$.
We denote the indicator function $\vec 1_S(\vec x) = \vec 1\{ \vec x \in S\}$ for some set $S \subseteq \reals^d$.
We let $\mathrm{sgn}(\cdot)$ denote the sign function and we slightly overload the notation as follows: $\mathrm{sgn} : \reals^d \to \{-1,+1\}^d$ stands for the sign function applied to each coordinate of a vector $\vec x \in \reals^d$.
For a graph $G$, we usually let $\vec L_G$ denote its Laplacian matrix.
We denote $\mathcal{B}(\vec x, \rho)$ the Euclidean ball of radius $\rho$ centered at $\vec x$; we simply refer to $\mathcal{B}$ if the radius and the center are clear from the context and we denote the associated sphere $\partial \mathcal{B}$, i.e., its boundary.
The probability simplex is denoted by $\Delta^n$ and discrete distributions $\D$ supported on $[n]$ will usually be represented by their associated probability vectors $\vec p \in \Delta^n$. For any distribution $\D$, we overload the notation and we use the same notation for the corresponding density and denote $\D(S) = \sum_{x \in S}\D(x)$ for any $S \subseteq [n]$. 
We denote the support of the probability distribution $\D$ by $\supp(\D)$.
The $\dim$-dimensional Gaussian distribution will be denoted by $\N(\vmu, \vSigma)$. 
%with density $\N(\vmu, \vSigma; \vec x) = \exp(-(\vec x - \vmu)^T \vSigma^{-1} (\vec x- \vmu))/|2\pi \vSigma|$, where $|\vec X|$ denotes the determinant of the matrix $\vec X$.
When the covariance matrix is known, we simplify to $\N(\vmu)$.
For a set $S \subseteq \reals^d,$ we let $\N_S$ denote the conditional Gaussian distribution on the set $S$, i.e., $\N_S(\vec \mu, \vec \Sigma; \vec x ) = \vec 1\{ \vec x \in S \} \N(\vmu, \vSigma; \vec x)/ \N(\vmu, \vSigma; S)$.
We denote $\Phi$ (resp. $\phi$) the cdf (resp. pdf) of the standard Normal distribution.
The total variation distance of $\vec p, \vec q \in \Delta^n$ is $\dtv(\vec p, \vec q) = \max_{S \subseteq [n]} \vec p(S) - \vec q(S) = \|\vec p - \vec q\|_1/2$. 
For a random variable $x$, we let $\E[x], \Var(x), \Cov(x)$ be the expected value, the variance and the covariance of $x$. For a joint distribution $\D$ of two random
variables $x$ and $z$ over the 
space $\mathcal{X} \times \mathcal{Z}$, we let $\D_x$ (resp. $\D_z$) be the marginal 
distribution of $x$ (resp. $z$). 
Let $\D$ be a joint distribution over labeled examples $\mathcal{X} \times \mathcal{Z}$, with 
$\mathcal{X}$ be the input space and $\mathcal{Z}$ the label space. A statistical query (SQ) oracle $\mathrm{STAT}(\D, \tau)$ with tolerance parameter $\tau \in [0,1]$ takes as input a statistical query defined by a real-valued
function $q : \mathcal{X} \times \mathcal{Z} \rightarrow [-1,1]$
and outputs an estimate of $\E_{(x,z) \sim \D}[q(x,z)]$ that is accurate to within an additive $\pm \tau$.

\section{Supervised Learning from Coarse Data}
\label{section:supervised}

In this section, we consider the problem of \emph{supervised} learning from coarse data. 
In this setting, there exists some underlying distribution over finely labeled examples, $\D$.
However, we have sample access only to the distribution associated with coarsely labeled examples $\D_\pi$,
see \Cref{definition:intro-gen-proc-context}.   
As discussed in \Cref{section:intro}, under this setting, even problems that are naturally convex when we have access to 
examples with fine labels, become non-convex when we introduce coarse labels (e.g., multiclass logistic regression). 
The main result of this section is \Cref{theorem:intro-reduction},
which allows us to compute statistical queries over finely labeled examples. 

\subsection{Overview of the Proof of \Cref{theorem:intro-reduction}}

In order to simulate a statistical query we take a two step approach. 
Our first building block considers the unsupervised version of the problem, 
see \Cref{definition:unsupervised-coarse}, i.e., 
we marginalize the context $x$ and try to learn the distribution of the fine labels $z$
given coarse samples $S$.  This can be viewed as learning a general discrete distribution
supported on $\mathcal{Z} = \{1, \ldots, k\}$  given coarse samples, i.e., subsets of $\mathcal{Z}$.
We show that, when the partition distribution $\pi$ is $\alpha$-information preserving,
this can be done efficiently, see \Cref{prop:unsupervised}.  Our algorithm (\Cref{algo:sq})
exploits the fact that even though in general having coarse data results in 
non-concave likelihood objectives, when we consider parametric models 
(see, for example, the case of logistic regression in \Cref{appendix:logistic}),
this is not true when we maximize over all discrete distributions.
In \Cref{prop:unsupervised}, we show that $\wt{O}(k/(\eps \alpha)^2)$ samples are sufficient
for this step. For the details of this step, see \Cref{subsection:marginal}.

Using the above algorithm, one could try to separately learn the
marginal distribution over $x$, $\D_x$ and the distribution of the fine 
labels $z$ \emph{conditional on some fixed}  $x$; let us denote this distribution as $\D_z^x$.  
Then one could generate finely labeled examples $(x,z)$ and use them to estimate 
the query $\E_{(x,z) \sim \D}[q(x,z)]$.  The reason that this naive approach fails
is that it requires many coarse examples $(x, S)$ with exactly the same value
of $x$.  Unless the domain $\mathcal{X}$ is very small, the probability that we observe
samples with the same value of $x$ is going to be tiny.   In order to overcome this obstacle, at a high level, 
our approach is to split the domain $\mathcal{X}$ into larger sets and then, 
learn the conditional distribution of the labels, not on a fixed point $x$,
but on these larger sets of non-trivial mass.  

% We have the following trade-off:
% partitioning the domain $\mathcal{X}$ into very small parts,  gives enough accuracy 
% to estimate the query $q(x,z)$, but makes learning the corresponding conditional distributions 
% $\D_z^x$ challenging; on the other hand, having very rough partitions, 
% ignores the dependence of the query $q(x,z)$ on $x$ 
% and cannot produce accurate estimates of the statistical query.
% Our proof consists of two main parts. First, we show how to ``partition" 
% the domain $\mathcal{X}$ so that it is possible to estimate the statistical query and, at the same time,
% have non-trivial mass.  Then, we show that, given a subset of points $X \subseteq \mathcal X$,
% using the fact that the partition distribution $\pi$ is $\alpha$-information preserving, 
% we can estimate the conditional distribution of  the labels $\D_z^X$.

Intuitively, in order to have an effective partition of the domain $\mathcal{X}$, we want 
to group together points $x$ whose values $q(x, z)$ are close.  Since $z$ belongs
in a discrete domain $\mathcal{Z} = [k]$, we can decompose the query $q(x,z)$ as
$q(x,z) = \sum_{i=1}^k q(x, i) \vec 1\{z = i\}$.  We estimate the value 
of $\E_{(x,z) \sim \D}[ q(x, i) \vec 1\{z = i\}]$ separately.  
To find a suitable  reweighting of the domain $\mathcal{X}$,  we perform rejection sampling, 
accepting a pair $(x, S) \sim \D$ with probability $q(x,i)$ 
\footnote{It is easy to handle the case where this  function takes negative values, see the proof of \Cref{theorem:intro-reduction}.}: 
points $x$ that have small value $q(x, i)$ contribute less in the expectation and are less likely to be sampled.  
After performing this rejection sampling process based on $x$, we have pairs 
$(x,S)$, conditional that $x$ was accepted.  Now, using our previous maximum likelihood learner of \Cref{prop:unsupervised}
we learn the marginal distribution over fine labels and use it to answer the query.
We provide the details of this rejection sampling step in the full proof of \Cref{theorem:intro-reduction},
see \Cref{subsection:proof-of-reduction}.

For a description of the corresponding algorithm that simulates statistical queries, see \Cref{algo:sq}.
% \section{Algorithm for Simulating Statistical Queries From Coarse Data}
% \label{appendix:algorithm}
To keep the presentation simple we state the
algorithm for the case where the query function $q(x, z)$ is positive.  It is straightforward to generalize
it for general queries, see \Cref{subsection:proof-of-reduction}.
\begin{algorithm}[ht!] 
\caption{Statistical Queries from Coarse Labels.}
\label{algo:sq}
\begin{algorithmic}[1]
\State \textbf{Input:} Query $q: \mathcal{X} \times \mathcal{Z} \mapsto (0,1]$, tolerance $\tau \in [0,1]$, confidence $\delta \in[0, 1]$.
 
\State \textbf{Oracle}: Access to 
coarsely labeled samples $(x, S) \sim \D_\pi$, $\pi$ is $\alpha$-information preserving.
\State \textbf{Output:} Estimate $\wh{r}$ such that $\big| \E_{(x, z) \sim \D}[ q(x, z) ] - \wh r \big| \leq \tau$ with probability 
at least  $1-\delta$. 
\vspace{2mm}
\Procedure{StatQuery}{$q, \tau, \delta$}
 \State Compute $\wh{r}_i \gets \mathrm{SQ}(q,i,O(\tau/k), \delta/k)$ for any $i \in \mathcal{Z}$.
 %\footnote{We have that for any $i \in [k]$ that $\Pr[|\wh{r}_i - r_i| > \tau/k] \leq \delta/k$. Hence, we get that }
 %, in parallel.
%. \Comment{These $|\mathcal{Z}|$ computations can be parallelized.}
\State Output $\wh{r} \gets \sum_{i=1}^{k} \wh{r}_i$. 
\EndProcedure
\vspace{2mm}
\Procedure{SQ}{$q, i, \rho, \delta$}
\State Draw $N_1 = \wt{\Theta}\big(\frac{\log(1/\delta)}{\rho^2}\big)$ samples $(x_j, S_j)$ from $\D_{\pi}$. %\Comment{i.e., $O((k/\tau)^2)$ \emph{examples}.}
\State Compute $\wh{\mu}_i \gets \frac{1}{N_1} \sum_{j=1}^{N_1} q(x_j, i)$. 
\State \textbf{if}~{$\wh{\mu}_i \leq \rho$}~\textbf{do}
\State~~~ Output $\wh{r}_i \gets 0$.
\State \textbf{end}
\State Draw $N_2 = \wt{\Theta} \big( \frac{k \log(1/\delta)}{\rho^3\alpha^2} \big)$ samples $(x_j, S_j)$ from $\D_{\pi}$. \Comment{\emph{$\wt{\Theta} \big(\frac{k^4 \log(1/\delta)}{\tau^3 \alpha^2} \big)$ examples overall}.}
\State $T_{accept} \gets \emptyset$. \Comment{\emph{Training set of accepted samples}.}
\State Add $S_j$ in $T_{accept}$ with probability $q(x_j, i)$, $\forall j \in [N_2]$. \Comment{\emph{Rejection Sampling Process}.}
\State Compute $\wt{\D}$ using \Cref{prop:unsupervised} with input $(T_{accept}, \rho, \delta).$ %\Comment{}
\State Output $\wh{r}_i \gets \wh{\mu}_i \cdot \wt{\D}(i) $. 
\EndProcedure

\end{algorithmic}
\end{algorithm}

\begin{remark}
[Empirical Likelihood Approach]
One could try to use the empirical likelihood directly 
over the coarsely labeled data (as defined in~\cite{owen2001empirical}).
However, in general, these empirical likelihood objectives are non-convex 
when the data are coarse and therefore it is computationally hard to optimize them directly.
Our approach for simulating statistical queries consists of two ingredients: 
reweighting the feature space via rejection sampling in order to group together points 
and learning discrete distributions from coarse data.  To learn the discrete distributions 
(\Cref{subsection:marginal}), we use a (direct) empirical likelihood approach similar to that of ~\cite{owen1988empirical, owen1990empirical, owen2001empirical}.  
However, our main contribution is the use of rejection sampling to 
reduce the initial non-convex problem to the special case of learning a discrete distribution 
(with small support) from coarse data which, as we prove, is a tractable (convex) problem. 
For more connections with censored statistics techniques, we refer the reader to~\cite{thomas1975confidence, owen1988empirical, gill1997coarsening, owen2001empirical}.
\end{remark}

% Our second main lemma is that given
% such samples $(x, S) \sim \D^{q_i}$, we can estimate the marginal distribution on the fine labels $\D_z^{q_i}$.
% Since the values of $x$ are no longer important, one can view this as a standalone unsupervised problem:
% we are given coarse samples $S$ and we want to estimate the distribution of the fine labels $z$.
% We show that this is possible under the most general possible assumption, i.e., 
% that the partition distribution $\pi$ preserves the total variation distance between (marginal) distributions
% over fine labels $D_z$.  This is where \Cref{def:intro-information-preserving} comes into play:
% we show that, given an $\alpha$-information preserving partition distribution, one can efficiently estimate
% the marginal distribution over labels $D_z^{q_i}$.  To do this, we prove that the corresponding
% problem of finding the discrete distribution over the $k$ fine labels is convex and, therefore,
% we can efficiently solve it assuming we draw sufficiently many samples. 

\subsection{Learning Marginals Over Fine Labels}
\label{subsection:marginal}
In this subsection, we deal with \emph{unsupervised} learning from coarse data in discrete domains. 
Although this is an ingredient of our main result for simulating statistical queries in
a supervised setting where labeled data $(x,S)$ are given, the result of this section does
not depend on the points $x$ and concerns the unsupervised version of the problem.
To keep the notation simple, we will use $\D$ to denote a distribution over finite labels $\mathcal{Z}$.
\begin{definition}
[Generative Process of Coarse Data]
\label{definition:unsupervised-coarse}
Let $\mathcal Z$ be a discrete domain and $\D$ be a distribution supported on $\mathcal{Z}$.
Moreover, let $\pi$ be a distribution supported on partitions of $\mathcal{Z}$.
We consider the following generative process:
\begin{enumerate}
\item Draw $z$ from $\D$.
\item Draw a partition $\mathcal{S}$ from the distribution over all partitions $\pi$.
\item Observe the set $S\in \mathcal{S}$ that contains $z$.
\end{enumerate}
We denote the distribution of $S$ as $\D_{\pi}$.
\end{definition}
% We emphasize that this setting captures a lot of interesting problems, already discussed in the introduction.
% We continue by providing a formal way to understand the effects of the coarse generative process to data. Intuitively, consider two datasets of fine labels that have a degree of similarity of value $\rho$. Then, after replacing a subset of fine labels with coarse ones, then the degree of similarity can only increase. However, if, for instance, we replace both datasets with a single coarse label, then these sets will be indistinguishable. The following definition formally defines the framework of information preservation.    
The assumption that we require is that the partition distribution $\pi$ is $\alpha$-information preserving, see \Cref{def:intro-information-preserving}.  
At this point we give some examples of  information preserving partition distributions.
We first observe that $\alpha = 0$ if and only if the problem is not identifiable. 
For instance, if $\pi$ is supported only on the partition $\S = \{ \{1,2\}, \{3, \ldots, k\} \}$, 
the problem is not identifiable, since, for example, the fine label $1$ is indistinguishable from the fine label $2$. 
The value $\alpha = 1$ is attained when the partition totally preserves the distribution distance. 
Intuitively, the value $1-\alpha$ corresponds to the distortion that the coarse labeling introduces to a finely labeled dataset.
  
% Consider the case where $\pi$ is the uniform distribution, supported on $\{\S_1, \S_2\}$, where $\S_1 = \{ \{1\},\dots,\{k/2\},\{k/2+1,\dots,k\}\}$ and $\S_2 = \{\{1,2,\dots,k/2\},\{k/2+1\},\dots, \{k\}\}$ and, in this setting, $\alpha = 1/2.$ This example captures the following setting: Consider the classification task of two objects into subcategories and a pair of experts, each one able to perfectly identify and label one object. We can think that our dataset contains $k$ labels where labels in $[1..k/2]$ are associated with the first object and labels in $[k/2+1..k]$ with the second. Then, we can view a sample from $\S_1$, as an observation from the expert in the first object since the labels in $[1..k/2]$ are all fine and the others are coarsely labeled (one whole set). 

In many cases most fine labels may be missing.  Consider two data providers that use different methods to  round their samples. The rounding's uncertainty can be viewed as a coarse labeling of the data. 
Assume that we add discrete (balanced Bernoulli) noise $\xi$ to some true value $x \in [0..k]$.
Consider two partitions $\{\S_1, \S_2\}$ with $\S_1 = \{ \{0,1\}, \{2,3\}, \dots, \{k-1,k\}, \{k+1\} \}$ and $\S_2 = \{ \{0\}, \{1,2\},\dots \{k-1,k\}\}$. Observe that, when $x + \xi$ is odd, we can think of the rounded sample, as a draw from 
$\S_1$ and when $x + \xi$ is even, as a 
draw from $\S_2$.  This example shows that we can capture the problem of 
deconvolution of two distributions $\D_1,\D_2$, where one of them is known and we observe samples $x_1+x_2, x_i \sim \D_i$.

% The optimal distortion is lower-bounded by the value of the following min-max game:
% \[
% \min_{\vec x : \|\vec x\|_1 = 1} \max_{S \subseteq [k]}|\sum_{i \in S} \vec x_i|(\sum_{\S \ni S} \pi(\S))
% \]

% $\alpha \sim \frac{\# \text{singletons (fine labels)}}{k}$?

% -star partition is $\Theta(1/k)$ with uniform $\pi$

% -groups are only useful for divide-and-conquer

% - triangles, cycles!

% \textsc{2-sets}\\
% Let $k$ be odd and consider two partitions of $[k]$: Let $\S_1 = \{\{1\}, \{2,3\}, \dots, \{k-1, k\}\}$ and $\S_2 = \{\{1,2\},\dots, \{k-2,k-1\}, \{k\}\}$ and let $\pi(\S_1) = \pi(\S_2) = 1/2$.

% \textsc{Random sets} 

%\textsc{Groups}

The following proposition establishes the sample complexity of unsupervised learning 
of discrete distributions with coarse data. 
Our goal is to compute an estimate 
of the discrete distribution $\D^\star$ with probability vector $\vec p^\star \in \Delta^k$ from $N$ coarse samples $S_1,\dots, S_N$ drawn from the distribution
$\D_{\pi}^\star$.  Our algorithm maximizes the empirical likelihood. 
Analyzing the empirical log-likelihood objective $\mathcal{L}_N(\vec p) = \frac{1}{N}\sum_{n=1}^{N}
\log \big(\sum_{i \in S_n} \vec p_i \big)$, where $\vec p \in \Delta^k$ is a guess probability vector, 
we observe that the problem is concave and, therefore, can be efficiently optimized (e.g., by gradient descent). 
% For the full proof, we refer to the \Cref{appendix:unsupervised}. 
% A crucial remark is that, when the guess vector $\vec p$ has some very biased coordinates, i.e., for some set $S$ the corresponding $\vec p_i$'s are all close to $0$, the $\log \big(\sum_{i \in S} \vec p_i \big)$ will be large. Hence, we work on a subset of the probability simplex $\wt{\Delta}^k$, where each coordinate satisfies $\vec p_i \geq \eps/k$. 
% By carefully discretizing the space $\wt{\Delta}^k$, we can prove that given roughly $\wt{O}(k/(\eps^2 \alpha^2))$ (coarse) samples from $\D_{\pi}^\star$, we can guarantee that probability vectors that are far from the optimal probability vector $\vec p^\star$, will also be far from being maximizers of the empirical log-likelihood $\mathcal{L}_N$.  
\begin{proposition}
\label{prop:unsupervised}
Let $\mathcal{Z}$ be a discrete domain of cardinality $k$ and let $\D$ be a distribution
supported on $\mathcal{Z}$. 
Moreover, let $\pi$ be an $\alpha$-information preserving
partition distribution for some $\alpha \in (0,1]$.
Then, with $N = \wt{O}(k/(\eps^2\alpha^2) \log(1/\delta))$ samples from 
$\D_{\pi}$ and in time polynomial in the number of samples $N$,
we can compute a distribution $\wt{\D}$ supported 
on $\mathcal{Z}$ such that  $\dtv(\wt{\D}, \D) \leq \eps$.
\end{proposition}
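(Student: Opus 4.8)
The estimator is the maximizer $\wt{\vec p}\in\argmax_{\vec p\in\Delta^k}\mathcal{L}_N(\vec p)$ of the empirical log-likelihood $\mathcal{L}_N(\vec p)=\frac1N\sum_{n=1}^N\log\langle\vec 1_{S_n},\vec p\rangle$, and $\wt{\D}$ is the distribution with probability vector $\wt{\vec p}$. I would split the argument into three parts: (i) this optimization is a tractable concave program; (ii) on the population level the objective is uniquely maximized at $\vec p^\star$, and the likelihood gap controls the distance between the \emph{observed} distributions, which by $\alpha$-information-preservation controls $\dtv(\wt{\D},\D^\star)$; (iii) a uniform deviation bound turns the empirical maximizer into a near-maximizer of the population objective at the claimed sample size.

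\emph{Tractability.} Each summand $\vec p\mapsto\log\langle\vec 1_{S_n},\vec p\rangle$ is the logarithm of a linear form, hence concave, so $\mathcal{L}_N$ is concave on $\Delta^k$ and smooth on its relative interior; note that it does not involve $\pi$ at all. Maximizing it is thus a linearly constrained concave program — exactly the maximum-likelihood problem for multinomial data with grouped observations — solvable to any target accuracy in $\poly(N,k)$ time, e.g.\ by projected gradient ascent on the convex set $\{\vec p\in\Delta^k:\langle\vec 1_{S_n},\vec p\rangle\ge\nu\text{ for all }n\}$ for a small enough $\nu$ (this set is nonempty, as it contains the uniform distribution, and the objective has bounded gradients on it). I would not dwell on this step.

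\emph{From observed distances to fine-label distances.} Write $\D_\pi(\vec p)$ for the law of the observed set when the fine label is drawn from $\vec p$, so $\D^\star_\pi=\D_\pi(\vec p^\star)$. Since a sample reveals only the set $S$ and not which partition was drawn, $\D_\pi(\vec p)(S)=\beta_S\langle\vec 1_S,\vec p\rangle$ with $\beta_S:=\sum_{\mathcal{S}\ni S}\pi(\mathcal{S})$ independent of $\vec p$; in particular $\vec p\mapsto\D_\pi(\vec p)$ is affine. Consequently, for the population objective $\mathcal{L}(\vec p):=\E_{S\sim\D^\star_\pi}[\log\langle\vec 1_S,\vec p\rangle]$,
\[
\mathcal{L}(\vec p^\star)-\mathcal{L}(\vec p)=\E_{S\sim\D^\star_\pi}\Big[\log\tfrac{\D^\star_\pi(S)}{\D_\pi(\vec p)(S)}\Big]=\dkl\big(\D^\star_\pi\,\big\|\,\D_\pi(\vec p)\big)\ge 0,
\]
so $\vec p^\star$ is the unique population maximizer. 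Combining the standard inequalities $\dtv\le\sqrt2\,h$ and $2h^2\le\dkl$ with the $\alpha$-information-preserving hypothesis gives
\[
\dtv(\wt{\D},\D^\star)=\dtv(\wt{\vec p},\vec p^\star)\le\tfrac1\alpha\,\dtv\big(\D_\pi(\wt{\vec p}),\D^\star_\pi\big)\le\tfrac{\sqrt2}{\alpha}\,h\big(\D_\pi(\wt{\vec p}),\D^\star_\pi\big),
\]
so it suffices to prove $h^2\big(\D_\pi(\wt{\vec p}),\D^\star_\pi\big)=O(\alpha^2\eps^2)$.

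\emph{Generalization (the crux).} This is the classical Hellinger analysis of the MLE. Put $g_{\vec p}(S):=\sqrt{\D_\pi(\vec p)(S)/\D^\star_\pi(S)}$, so that $\E_{S\sim\D^\star_\pi}[g_{\vec p}(S)]=1-h^2\big(\D_\pi(\vec p),\D^\star_\pi\big)$ and $\E_{S\sim\D^\star_\pi}[g_{\vec p}(S)^2]\le1$. Using $\log t\le2(\sqrt t-1)$ together with the optimality $\mathcal{L}_N(\wt{\vec p})\ge\mathcal{L}_N(\vec p^\star)$ yields the ``basic inequality''
\[
0\le\frac1N\sum_{n=1}^N\log\frac{\D_\pi(\wt{\vec p})(S_n)}{\D^\star_\pi(S_n)}\le\frac2N\sum_{n=1}^N\big(g_{\wt{\vec p}}(S_n)-1\big),
\]
whence the centered empirical process $\frac1N\sum_n\big(g_{\wt{\vec p}}(S_n)-\E[g_{\wt{\vec p}}]\big)$ is at least $h^2\big(\D_\pi(\wt{\vec p}),\D^\star_\pi\big)$. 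The obstacle is that $g_{\vec p}$ is unbounded above, so this process cannot be controlled by a crude sup-norm bound; the key point is that the second moment $\E[g_{\vec p}^2]\le1$ makes the \emph{variance proxy} of $g_{\vec p}$ bounded by $h^2\big(\D_\pi(\vec p),\D^\star_\pi\big)$ itself. I would therefore apply a Bernstein-type bound after a mild truncation of $g_{\vec p}$ (the discarded tail is negligible by the same second-moment estimate), and take a union bound over a $\gamma$-net of $\Delta^k$, of size $(3/\gamma)^k$, controlling the fluctuation of $g_{\vec p}$ within a net cell. This produces, with probability $1-\delta$, a self-bounding quadratic inequality
\[
h^2\big(\D_\pi(\wt{\vec p}),\D^\star_\pi\big)\ \lesssim\ \sqrt{h^2\big(\D_\pi(\wt{\vec p}),\D^\star_\pi\big)\cdot\tfrac{k\log(N/\delta)}{N}}\ +\ \tfrac{k\log(N/\delta)}{N},
\]
hence $h^2\big(\D_\pi(\wt{\vec p}),\D^\star_\pi\big)=O\big(k\log(N/\delta)/N\big)$. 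Feeding this into the previous display gives $\dtv(\wt{\D},\D^\star)\lesssim\tfrac1\alpha\sqrt{k\log(N/\delta)/N}$, which is at most $\eps$ once $N=\wt{O}\big(k/(\eps^2\alpha^2)\cdot\log(1/\delta)\big)$ (absorbing the residual $\log N$ into the $\wt{O}$). I expect this empirical-process step — specifically the truncation and variance bookkeeping needed to run Bernstein against the unbounded likelihood-ratio class $\{g_{\vec p}:\vec p\in\Delta^k\}$, together with the covering argument that keeps the sample size linear in $k$ — to be the main technical hurdle; the rest is a one-line convexity remark plus textbook information-theoretic inequalities and the $\alpha$-information-preserving assumption.
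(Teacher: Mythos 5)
Your proposal is correct and reaches the same bound, but by a genuinely different route from the paper. Both proofs treat the empirical log-likelihood maximizer over $\Delta^k$ and both reduce the sample complexity to a union bound over an exponentially large ($e^{O(k\log(\cdot))}$) net, which is why both land on $\wt O(k/(\eps^2\alpha^2))$. The divergence is in how the per-point concentration is handled. The paper first restricts the search to the truncated simplex $\wt\Delta^k=\{\vec p:\vec p_i\ge\eps/k\}$ so that $\log\langle\vec 1_S,\vec p\rangle$ becomes $(k/\eps)$-Lipschitz, builds a coordinate-wise grid cover of $\wt\Delta^k$, and then invokes Massart's Proposition~7.27 directly on the log-likelihood-ratio process, which already has $\dtv^2$ built into its concentration: the TV gap $\dtv(\D_\pi(\vec p),\D^\star_\pi)$ (and hence $\alpha\,\dtv(\vec p,\vec p^\star)$ via information preservation) falls out immediately. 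You instead run the classical Wong--Shen/van de Geer Hellinger analysis of the MLE: the basic inequality from $\log t\le 2(\sqrt t-1)$, a Bernstein bound on the centered $\sqrt{\text{ratio}}$ process whose variance proxy \emph{is} the squared Hellinger distance, and then $\dtv\le\sqrt2\,h$ plus $\alpha$-preservation. The advantage of your route is that you never need to restrict or round the parameter to $\wt\Delta^k$ -- the square-root reparameterization tames the unboundedness of the log automatically, and the self-bounding quadratic inequality gives the rate directly without knowing in advance that the maximizer is bounded away from the simplex boundary. The advantage of the paper's route is that one lemma (Massart's) does essentially all the concentration work and no truncation bookkeeping is required. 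One caveat for your sketch: since $g_{\vec p}(S)=\sqrt{\vec p(S)/\vec p^\star(S)}$ carries a $1/\sqrt{\vec p^\star(S)}$ factor, the covering step requires more than a naive sup-norm continuity bound; what actually works is that $\|g_{\vec p_1}-g_{\vec p_2}\|_{L^2(\D^\star_\pi)}^2=\sum_S\beta_S\bigl(\sqrt{\vec p_1(S)}-\sqrt{\vec p_2(S)}\bigr)^2\le k\,\|\vec p_1-\vec p_2\|_1$ (using $\sum_S\beta_S\le k$), so an $\ell_1$-net of $\Delta^k$ with mesh $\gamma$ gives an $L^2$-net of the function class with mesh $\sqrt{k\gamma}$ -- which is exactly what keeps the final log--covering term linear in $k$. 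You flagged this as the main hurdle, and indeed it is where care is needed; the above display is the identity that closes it.
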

\begin{proof}
Let $\D^\star$ be the target discrete distribution, supported on a discrete domain of size $k$, and let $\vec p^\star \in \Delta^k$ be the corresponding probability vector.
For some distribution $\D$ supported on a discrete domain of size $k$,
we define the following population log-likelihood objective.
\begin{equation}
\label{eq:unsupervised-ll}
\mathcal{L}(\D) = \E_{S \sim \D^{\star}_{\pi}} [\log \D(S)] 
=
\E_{S \sim \D^{\star}_\pi} \Big[\log \big(\sum_{i \in S} \D(i) \big)\Big] \,.
\end{equation}
Since $\D$ is a discrete distribution for simplicity we may identify 
with its probability vector $\vec p$, where $\vec p_i = \D(i)$.
Therefore, for any $\vec p$ in the probability simplex $\Delta^k$,
we  define
\begin{equation}
\label{eq:unsupervised-ll-simplex}
\mathcal{L}(\vec p) = \E_{S \sim \D^{\star}_\pi} \Big[\log \sum_{i \in S} \vec p_i \Big] \,.
\end{equation}
The corresponding empirical log-likelihood objective after drawing $N$ independent samples 
$S_1,\ldots, S_N$ from $\D_\pi^{\star}$ is given by
\begin{equation}
\label{eq:unsupervised-ll-simplex-empirical}
\mathcal{L}_N(\vec p) = \frac{1}{N}\sum_{n=1}^{N}
\log \left(\sum_{i \in S_n} \vec p_i \right)  \,.
\end{equation}
We first observe that the log-likelihood (both the population and the empirical) 
is a concave function and therefore can be efficiently optimized (e.g., by gradient descent).
Thus, our main focus in this proof is to bound its sample complexity.
We first observe that when the guess $\vec p \in \Delta^k$ has some very biased 
coordinates, i.e., for some subset $S$ the corresponding $\vec p_i$'s are close to $0$, 
the probability of a set $S$, $\sum_{i \in S} p_i$ will be close to zero and
therefore $\log \big(\sum_{i \in S} \vec p_i \big)$ will be large.  Thus, we have to restrict
our search to a subset of the probability simplex, i.e., have 
$\vec p_i \geq \eps/k$.  We set $\wt \Delta^k = \{ \vec p \in \Delta^k, \vec p_i \geq \eps/k \text{ for all $i=1,\ldots,k$ } \}$.
We now prove that, given roughly $k/(\eps^2 \alpha^2)$ samples, we can guarantee that probability vectors 
that are far from the optimal vector $\vec p^{\star}$ will also be significantly sub-optimal 
in the sense that they are far from being maximizers of the empirical log-likelihood.
\begin{claim}
Let $N \geq \wt{\Omega}( k/(\eps^2 \alpha^2) \log(1/\delta))$.
With probability at least $1-\delta$, we have that, for every $\vec p \in \wt \Delta^k$
such that $\| \vec p - \vec p^{\star} \|_1 \geq \eps$, it holds 
\[
\max_{\vec q \in \wt{\Delta}^k} \mathcal{L}_N(\vec q) - \mathcal{L}_N(\vec p)  \geq \Omega \left((\eps \alpha)^2\right)\,.
\]
\end{claim}
\begin{proof}
We first construct a cover of the probability simplex $\wt \Delta^k$ by discretizing each coordinate $\vec p_i$
to integer multiples of $O((\eps^{3/2}\alpha/k)^2)$.  The resulting cover $\mathcal{C}$ contains $O((k/(\eps^{3/2} \alpha))^{2k})$ elements.
We first observe that we can replace any element $\vec p \in \wt \Delta^k$ with an element $\vec p'$ inside our cover $\mathcal{C}$ 
without affecting the value of the objective $\mathcal{L}_N(\vec p)$ by a lot.  In particular, 
using the fact that $x \mapsto \log(x)$ is $1/r$-Lipschitz in the interval $[r, +\infty)$, we have
that for any set $S \subseteq \{1,\ldots,k\}$ it holds
\[
\Big |\log\Big (\sum_{i \in S} \vec p_i\Big ) - \log\Big ( \sum_{i \in S} \vec q_i\Big ) \Big | \leq 
\frac{1}{\sum_{i \in S} \vec p_i} \Big|\sum_{i \in S} (\vec p_i - \vec q_i) \Big|
\leq \frac{k}{\eps} \|\vec p - \vec q\|_1 \,,
\]
where we used the fact that, since $\vec p \in \wt \Delta^k$, it holds $\vec p_i \geq \eps/k$.
Therefore, when we round each coordinate of a vector $\vec p$ to the closest integer multiple of 
$O((\eps^{3/2}\alpha/k)^2)$ we get a vector $\vec p' \in \mathcal{C}$ such that for any set $S$ it holds
$ |\log(\sum_{i \in S} \vec p_i) - \log( \sum_{i \in S} \vec q_i)| \leq \eps^2 \alpha^2/6$ which implies
that the empirical log-likelihood satisfies $|\mathcal{L}_N(\vec p) - \mathcal{L}_N(\vec p')| \leq \eps^2 \alpha^2/6$.
We will now show that, with high probability, any element $\vec p$ of the cover $\mathcal{C}$ 
such that $\|\vec p - \vec p^{\star}\|_1 \geq \eps$,
satisfies $ \mathcal{L}_N(\vec p^{\star})-\mathcal{L}_N(\vec p) \geq \eps^2 \alpha^2/2$.
We will use the following concentration result on likelihood ratios.
\begin{lemma}
[Proposition 7.27 of \cite{massart2007concentration}]
\label{lemma:massart}
Let $\D_1, \D_2$ be two distributions (on any domain) with positive density functions $f,g$ respectively.
For any $x \in \mathbb{R},$ it holds 
\[
\Pr_{x_1,\ldots, x_N \sim \D_1}\left[ \frac{1}{N} \sum_{n=1}^N \log \frac{f(x_n)}{g(x_n)} \leq (\dtv(\D_1, \D_2))^2 - 2 x/N\right]
\leq e^{-x} \,.
\]
\end{lemma}
\noindent Using the above lemma with $x = O(\log(|\mathcal{C}|/\delta)) = O(k \log(k/(\eps \delta)))$ 
and
\[
N = \Theta( k \log(k /(\eps \delta))/(\alpha^2 \eps^2))\,,
\]
we obtain that, with probability at least $1-\delta/|\mathcal{C}|$,
it holds $\mathcal{L}_N(\vec p^{\star}) - \mathcal{L}_N(\vec p) \geq \dtv(D_\pi, D^{\star}_\pi)^2 - \alpha^2 \eps^2/2$.
From the union bound, we obtain that the same is true for all vectors $\vec p \in \mathcal{C}$ with
probability at least $1-\delta$.
We are now ready to finish the proof of the claim.  
Let $\vec p \in \wt \Delta^k$ be any probability vector such that $\|\vec p - \vec p^{\star}\|_1 \geq \eps$.
Let $\bar{\vec p} \in \wt \Delta^k$ be the maximizer of the empirical likelihood
constrained on $\wt \Delta^k$, i.e., $\bar{\vec p} = \arg \max_{\vec q \in \wt{\Delta}^k} \mathcal{L}_N(\vec q)$ and let $\wt {\vec p}^{\star}$ be the closest vector of the cover $\mathcal{C}$ to $\vec p^{\star}$.  
We have
\[
\mathcal{L}_N(\bar{\vec p}) 
-
\mathcal{L}_N(\vec p) 
\geq
\mathcal{L}_N(\wt{\vec p}^{\star}) 
-
\mathcal{L}_N(\vec p) 
\geq 
\mathcal{L}_N({\vec p}^{\star}) - \eps^2 \alpha^2/6
-
\mathcal{L}_N(\vec p) \,.
\]
The first inequality holds since both $\bar{\vec p}$ and $\wt{\vec p}^\star$ lie in $\wt{\Delta}^k$. The 
second inequality holds since we can replace the point of the cover $\wt{\vec p}^\star \in \mathcal{C}$, 
with each closest point in the simplex $\vec p^\star$ without affecting the likelihood value by a lot. 
Finally, since $\vec p$ lies in $\wt \Delta^{k}$, we can replace it with a point $\vec p'$ in the cover 
with $\| \vec p' - \vec p^\star \|_1 \geq \eps$, and get that 
\[
\mathcal{L}_N(\bar{\vec p}) 
-
\mathcal{L}_N(\vec p) 
\geq
\mathcal{L}_N({\vec p}^{\star}) - \eps^2 \alpha^2/6
-
\mathcal{L}_N(\vec p') - \eps^2\alpha^2/6 \,,
\]
and, since $\mathcal{L}_N(\vec p^{\star})-\mathcal{L}_N(\vec p') \geq \eps^2 \alpha^2/2$, we have that
\(
\mathcal{L}_N(\bar{\vec p}) 
-
\mathcal{L}_N(\vec p) = \Omega(\eps^2 \alpha^2)\,.
\)
\end{proof}
\noindent This concludes the proof of \Cref{prop:unsupervised}.
\end{proof}

% Then, we can use rejection sampling 
% and (without loss of generality) image $[0,1]$, can be expressed as a sum of terms of the form $q(x,j)\vec 1\{z=j\}$ for fixed $j \in [k]$. Using rejection sampling with respect to $q(x,j)$, each such summand can be decomposed in a product of two expectations: The first product term is the true expectation of $q(x,j)$ (that can be estimated using samples from the oracle) and the second is the expectation of $\vec 1\{z=j\}$, when $z$ is drawn from a specific marginal distribution of $\D$. The second expectation can be seen as an unsupervised learning coarse data problem and, hence, can be estimated using \Cref{lemma:unsupervised}. So, \Cref{theorem:reduction} describes the statistical query decomposition and focuses on the sample complexity of estimating the two expected values.

\subsection{The Proof of \Cref{theorem:intro-reduction}} 
\label{subsection:proof-of-reduction}
In this subsection, we prove \Cref{theorem:intro-reduction}.
%For completeness we provide the whole argument together with the part presented in the main body.
Our goal is to simulate a statistical query oracle which takes as input a query function $q$ with domain $\mathcal{X} \times \mathcal{Z}$ and outputs an estimate of its expectation with respect to finely labeled examples $\E_{(x,z) \sim \D}[q(x,z)]$, using coarsely labeled examples. 
Recall that since we have sample access only to coarsely labeled examples $(x,S) \sim \D_{\pi}$, we cannot directly estimate this expectation. The key idea is to perform rejection sampling on each coarse sample $(x,S)$ with acceptance probability $q(x,j)$ for any fine label $j \in \mathcal{Z}$. 
Because of the rejection sampling process, this marginal distribution is not the marginal of $\D$ 
on the fine labels $\mathcal{Z}$, but the marginal of $\D$ on the fine labels, conditional on the accepted samples.
However, the task of estimating from this marginal distribution can be still 
reduced to the unsupervised problem (see \Cref{prop:unsupervised}) of the previous section. 
Consider an arbitrary query function $q: \mathcal{X} \times \mathcal{Z} \rightarrow [-1,1] $ and, without loss of generality, let $\mathcal{Z} = [k]$. Recall that $\D$ is the joint probability distribution on the finely labeled examples $(x,z)$. 
We have that
\begin{equation}
\label{eq:expectation}
\E_{(x,z) \sim \D}[q(x,z)] = \sum_{j = 1}^{k} \E_{(x,z) \sim \D}\Big [q(x,j) \vec 1\{z=j\}\Big ] = \sum_{j = 1}^{k} \E_{(x,z) \sim \D}\Big [q_j(x) \vec 1\{z=j\}\Big ] \,.
\end{equation}
Since we would like to estimate the expectation of the query $q(x,z)$ with tolerance $\tau,$ it suffices to estimate the expectation of each query $q_j(x) \vec 1\{z=j\}$ with tolerance $\tau/k$ for any $j \in [k].$ 
Hence, it suffices to estimate expectations of the form $\E_{(x,z) \sim \D}[f(x)\vec 1\{ z = j\}]$ for arbitrary functions $f : \mathcal{X} \rightarrow [0,1]$\footnote{Any function $f : \mathcal{X} \rightarrow [-1,1]$ can be decomposed into $f = f^+ - f^-$ with $f^+, f^- \geq 0$ and, by linearity of expectation, it suffices to work with functions $f$ with image in $[0,1]$.} and $j \in [k]$. 

Let $\D_x$ denote the marginal distribution of the examples $x \in \mathcal{X}$. The algorithm performs rejection sampling. Each coarsely labeled example $(x,S) \sim \D_{\pi}$ is accepted with probability $f(x)$, that does not depend on the coarse label $S$. Hence, the rejection sampling process induces a distribution $\D^f$ over finely labeled examples $(x,z) \in \mathcal{X} \times \mathcal{Z}$ with density
\[
\D^f(x,z) = \frac{f(x) }{\E_{x \sim \D_x}[f(x)]}  \D(x,z)
\,.
\]
We remark that, we do not have sample access to $\D^f$ because we do not have sample access to the distribution $\D$ of the fine examples;
we introduced the above notation for the purposes of the proof.  Similarly, to $\D_x$, we define $\D^f_x$ to be the marginal distribution 
of $x$ conditional on its acceptance, i.e.,
\begin{equation}
\label{eq:acc}
\D_x^f(x) = \frac{f(x)}{\E_{x \sim \D_x}[f(x)]}\D_x(x) \,.
\end{equation}
Let $\D_z$ denote the marginal distribution of the fine labels $[k]$ and 
let $\D_z(\cdot|x)$ be the marginal distribution conditional on the example $x$. We have that
\[
\E_{(x, z) \sim \D}\Big [f(x) \vec 1\{z=j\}\Big ] = 
\int_{\mathcal{X}} f(x) \D(x,j) dx =
\int_{\mathcal{X}} f(x) \D_x(x) \D_z(j|x) dx\,.
\]
The above expectation can be equivalently written, by multiplying and dividing by $\D_x^f$,
\[
\E_{(x, z) \sim \D}\Big [f(x) \vec 1\{z=j\}\Big ] = 
\int_{\mathcal{X}} \Big(\frac{f(x) \D_x(x)}{\D_{x}^f(x)} \Big) \Big(\D_{x}^f(x) \D_z(j|x) \Big) dx\,.
\]
The first term in the integral is equal to $\E_{x \sim \D_x}[f(x)]$, by substituting Equation~\eqref{eq:acc} and, hence, is constant. The second term corresponds to the probability of observing the fine label $j$, given an example $x$, that has been accepted from the rejection sampling process. Similarly, to the marginal $\D_z$, we define $\D^f_z$ to be the marginal distribution of the fine labels $z$ conditional on acceptance. Hence, we can write
\begin{equation}
\label{equation:product}
\E_{(x, z) \sim \D}\Big [f(x) \vec 1\{z=j\}\Big ] = 
\E_{x \sim \D_x}[f(x)] \cdot
\Pr_{z \sim \D_{z}^f}[z = j] \,. 
\end{equation}
The decomposition of the expectation of Equation~\eqref{equation:product} is a key step:
we now only need to learn the marginal distribution of fine labels conditional on acceptance $\D_z^f$.

% A natural approach to solve this estimation task would be to collect a large dataset of coarsely labeled examples from the distribution $\D_{\pi}$ and, if some examples $x \in \mathcal{X}$ were observed multiple times, then we might be able to remove the effect of coarse labeling, by solving a linear system. However, in order to compute expectations of query functions on $\mathcal{X} \times \mathcal{Z}$, we do \emph{not} need to observe multiple copies of $x \in \mathcal{X}$. 

Recall that our goal is to estimate the left hand side expectation of Equation~\eqref{equation:product} with tolerance $\tau/k$. We claim that it suffices to estimate each term of the right hand side product of Equation~\eqref{equation:product} with tolerance $\tau/(2k)$. 
This is implied from the following: consider an estimate $\wt{\mu}$ of the value $\E_{x \sim \D_x}[f(x)]$ and an estimate $\wt{p}$ of the value $\Pr_{z \sim \D_{z}^f}[z = j]$. Then, using Equation~\eqref{equation:product}, we have that
\begin{equation*}
\Big | \wt{\mu}\cdot \wt{p} - \E_{(x, z) \sim \D}[f(x) \vec 1\{z=j\}] \Big | = 
\Big | \wt{\mu} \cdot \wt{p} - \E_{x \sim \D_x}[f(x)] \cdot \Pr_{z \sim \D_{z}^f }[z = j] \Big |\,,
\end{equation*}
and, hence, by adding and subtracting the term $\wt{\mu}~\Pr_{z \sim \D_{z}^f}[z = j]$, using the triangle inequality and, since both $\E_{x \sim \D_x}[f(x)]$ and $\Pr_{z \sim \D_{z}^f }[z = j]$ are at most $1$, we get that
\[
\Big | \wt{\mu} \cdot \wt{p} - \E_{(x, z) \sim \D}[f(x) \vec 1\{z=j\}] \Big | \leq
\Big |\wt{\mu} - \E_{x \sim \D_x}[f(x)] \Big | + \Big | \wt{p} - \Pr_{z \sim \D_{z}^f }[z = j] \Big |\,.
\]
% \[
% \Big | \wt{\E}[f]\wt{\Pr}[z = j^{\star}|\textsc{Acc}_f] - \E_{x \sim \Q, z \sim \D(\cdot|x)}[f(x) \vec 1\{z=j^{\star}\}] \Big | \leq\\
% \Big |\wt{\E}[f] - \E_{x \sim \Q}[f(x)] \Big | + \Big | \wt{\Pr}[z = j^{\star}|\textsc{Acc}_f] - \Pr_{z \sim \D(\cdot|\textsc{Acc}_f) }[z = j^{\star}] \Big |
% \]
We will show that $O(k^4/(\tau^3 \alpha^2)\log(1/\delta))$ samples are sufficient to bound each term of the right hand side by $\tau/(2k)$, with high probability. In order to estimate the expectation $\E_{(x, z) \sim \D}[q(x,z)]$, the algorithm applies (in parallel) the above process $k$ times with $f = q_j$ for any $j \in [k]$ (using Equation~\eqref{eq:expectation}) using a single training set of size $N = O(k^4/(\tau^3 \alpha^2)\log(1/\delta))$ drawn from the distribution $\D_{\pi}$ of coarsely labeled examples. Moreover, the running time is polynomial in the number of samples $N$. To conclude the proof, it suffices to show the following claims.
\begin{claim}
\label{appendix:claim:unsupervised}
There exists an algorithm that, uses $N = \wt{O}(k^4/(\tau^3 \alpha^2)\log(1/\delta))$ samples from $\D_{\pi}$ and computes an estimate $\wt{p}$, that satisfies
$
\Big | \wt{p} - \Pr_{z \sim \D_{z}^f}[z = j] \Big| \leq \tau/(2k)\,,
$
with probability at least $1-\delta$.
\end{claim}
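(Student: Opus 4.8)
The plan is to reduce this claim directly to the unsupervised learning result of \Cref{prop:unsupervised}. The key observation is that, although the rejection sampling process reweights the feature space, it induces a well-defined distribution $\D^f$ over finely labeled examples $(x,z)$, and the quantity $\Pr_{z \sim \D_z^f}[z=j]$ is just the $j$-th coordinate of the marginal $\D_z^f$ of this distribution on the fine labels. If we could sample fine labels $z \sim \D_z^f$ directly, we would simply estimate the $j$-th coordinate empirically. We cannot — but we \emph{can} sample coarse labels $S$ with $z \sim \D_z^f$ and $S$ drawn from the same partition distribution $\pi$, because the acceptance probability $f(x)$ depends only on $x$ and is independent of the coarsening step. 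Concretely: draw $(x,S) \sim \D_\pi$, accept with probability $f(x)$; conditioned on acceptance, the underlying fine label is distributed as $\D_z^f$ and $S$ is the coarsened version of it under a fresh partition $\mathcal S \sim \pi$. Thus the accepted coarse samples are exactly i.i.d.\ draws from $(\D_z^f)_\pi$, the coarsened version of the distribution $\D_z^f$.

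First I would make this reduction precise: show that the distribution of an accepted coarse sample $S$ equals $(\D_z^f)_\pi$ in the sense of \Cref{definition:unsupervised-coarse}, using that the partition draw and the fine label are independent of each other given $x$, and that acceptance depends only on $x$. Next I would invoke \Cref{prop:unsupervised} with target distribution $\D^\star = \D_z^f$, accuracy parameter $\eps = \tau/(2k)$ (since total variation distance bounds the coordinatewise error, $|\wt\D(j) - \D_z^f(j)| \le \dtv(\wt\D, \D_z^f) \le \eps$), and failure probability $\delta$. \Cref{prop:unsupervised} then guarantees that $\wt{O}(k/(\eps^2\alpha^2)\log(1/\delta)) = \wt{O}(k^3/(\tau^2\alpha^2)\log(1/\delta))$ \emph{accepted} coarse samples suffice, provided $\pi$ is still $\alpha$-information preserving — which it is, since $\pi$ is unchanged by the reduction. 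Setting $\wt p = \wt\D(j)$ finishes the estimate.

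The remaining step is to pass from the number of accepted samples to the number of drawn samples $N$ from $\D_\pi$. The acceptance probability is $\E_{x \sim \D_x}[f(x)]$, which could in principle be small; the algorithm (\Cref{algo:sq}) handles this by first checking whether the empirical estimate $\wh\mu_i$ of $\E_{x}[f(x)]$ is below the threshold $\rho = O(\tau/k)$, in which case the contribution $\E[f(x)\vec 1\{z=j\}] \le \E_x[f(x)]$ is at most $O(\tau/k)$ and can safely be reported as $0$. So I may assume $\E_x[f(x)] = \Omega(\tau/k)$, whence a standard Chernoff/Binomial argument shows that out of $N = \wt{O}(k^4/(\tau^3\alpha^2)\log(1/\delta))$ drawn samples, at least $\Omega((\tau/k) \cdot N) = \wt{\Omega}(k^3/(\tau^2\alpha^2)\log(1/\delta))$ are accepted with probability $1-\delta$, matching the requirement of \Cref{prop:unsupervised}. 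The main obstacle — more bookkeeping than conceptual difficulty — is carefully tracking the polynomial factors in $k$ and $\tau$ through the two-stage reduction and making sure the threshold test correctly absorbs the small-acceptance case so that the $\eps = \tau/(2k)$ accuracy target is met end to end.
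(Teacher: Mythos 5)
Your proposal is correct and follows essentially the same route as the paper: reduce to \Cref{prop:unsupervised} applied to the set of accepted coarse samples with $\eps = \tau/(2k)$, and absorb the extra $k/\tau$ factor by using the thresholding step to ensure the acceptance probability is $\Omega(\tau/k)$. The paper leaves the independence argument (accepted coarse samples are distributed as $(\D_z^f)_\pi$ with the same $\pi$) and the tail bound on the number of accepted samples implicit; you spell both out, but the proof is the same.
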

\begin{proof}Recall that the distribution $\D_{z}^f$ is the marginal distribution of the fine labels $z \in \mathcal{Z} = [k]$, conditional that the example $x \sim \D_x^f$, i.e., that the example $x \in \mathcal{X}$ has been accepted by the rejection sampling process. Hence, the distribution $\D_{z}^f$ is supported on $\mathcal{Z}$. We can then directly apply \Cref{prop:unsupervised}, using as training set the set of \emph{accepted} coarsely labeled samples $(x,S)$ and can compute an estimate $\wt{\D}$, that is $\eps$-close in total variation distance to $\D_{z}^f$. By setting $\eps = \tau/(2k)$, the algorithm uses $\wt{O}(k^3/(\tau^2 \alpha^2)\log(1/\delta))$ samples from the set of accepted samples and outputs the estimate $\wt{p} = \wt{\D}(j).$ For the example $x \in \mathcal{X}$, the acceptance probability $f(x)$ can be considered $\Omega(\tau/k)$. Otherwise, we can set the desired expectation equal to $0$. Hence, the algorithm needs to draw in total $\wt{O}(k^4/(\tau^3 \alpha^2)\log(1/\delta))$ samples from $\D_{\pi}$ in order to compute an estimate $\wt{p}$ that satisfies
\[
\Big| \wt{p} - \Pr_{z \sim \D_{z}^f}[z = j] \Big| \leq \tau/(2k)\,,
\]
with probability at least $1-\delta$.
\end{proof}
\begin{claim}
\label{appendix:claim:hoeffding}
There exists an algorithm that, uses $N = O((k^2/\tau^2)\log(1/\delta))$ samples from $\D_{\pi}$ and computes an estimate $\wt{\mu}$, that satisfies
$
\Big | \wt{\mu} - \E_{x \sim \D_x}[f(x)] \Big| \leq \tau/(2k)\,,
$
with probability at least $1-\delta$.
\end{claim}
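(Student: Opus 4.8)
The plan is to exploit the fact that the function $f$ appearing in \Cref{appendix:claim:hoeffding} depends only on the context $x$ and not on the coarse label $S$. Consequently, if $(x,S) \sim \D_\pi$, then the marginal law of the first coordinate $x$ is exactly $\D_x$: the coarsening step of \Cref{definition:intro-gen-proc-context} (drawing a partition $\mathcal S$ and replacing $z$ by the block $S \ni z$) modifies only the label and leaves the context marginal untouched. Hence $\E_{x \sim \D_x}[f(x)] = \E_{(x,S) \sim \D_\pi}[f(x)]$, and we have direct i.i.d.\ sample access to the right-hand side.

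First I would draw $N$ coarsely labeled examples $(x_1,S_1),\ldots,(x_N,S_N)$ from $\D_\pi$ and form the empirical average $\wt{\mu} = \frac{1}{N}\sum_{j=1}^N f(x_j)$. Since $f$ has image in $[0,1]$, the summands $f(x_j)$ are independent, bounded random variables with common mean $\E_{x \sim \D_x}[f(x)]$. Then I would invoke Hoeffding's inequality, which yields $\Pr\big[|\wt{\mu} - \E_{x \sim \D_x}[f(x)]| > t\big] \leq 2\exp(-2 N t^2)$; taking $t = \tau/(2k)$ and requiring $2\exp\big(-N\tau^2/(2k^2)\big) \leq \delta$ gives $N = O\big((k^2/\tau^2)\log(1/\delta)\big)$, as claimed. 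The running time is clearly $\poly(N,T)$, where $T$ is the evaluation cost of $q$ (and hence of $f$).

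There is no real obstacle here: the only point worth spelling out is the first observation — that discarding the coarse label of a sample from $\D_\pi$ produces a genuine sample from $\D_x$ — after which the statement reduces to a textbook concentration bound. Note also that this sample cost $O\big((k^2/\tau^2)\log(1/\delta)\big)$ is dominated by the bound $\wt{O}\big(k^4/(\tau^3\alpha^2)\log(1/\delta)\big)$ of \Cref{appendix:claim:unsupervised}, so combining the two claims via \eqref{equation:product} and summing over $j \in [k]$ as in \eqref{eq:expectation} completes the proof of \Cref{theorem:intro-reduction} with the stated overall sample complexity.
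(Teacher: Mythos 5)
Your proof is correct and follows essentially the same route as the paper's: form the empirical average $\wt{\mu} = \frac{1}{N}\sum_{j} f(x_j)$ over $N$ draws from $\D_\pi$ and apply Hoeffding's inequality with $t = \tau/(2k)$; your bound $2\exp(-2Nt^2)$ with $t=\tau/(2k)$ is exactly the paper's $2\exp(-N\tau^2/(2k^2))$. The only visible difference is that you make explicit the (correct) observation that marginalizing the coarse label of a $\D_\pi$ sample yields a genuine $\D_x$ sample, which the paper uses implicitly.
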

\begin{proof}
% The proof of the above Claim is an application of the Hoeffding bound and can be found in the \Cref{appendix:supervised}.
The algorithms draws $N$ coarsely labeled examples from $\D_{\pi}$ and computes the estimate $\wt{\mu} = \frac{1}{N}\sum_{i=1}^N f(x_i)$. From the Hoeffding bound, since the estimate is a sum of independent bounded random variables, we get
\[
\Pr\left [~ \Big|\wt{\mu} - \E_{x \sim \D_x}[f(x)] \Big| \geq \tau/(2k)\right ] \leq 2\exp(- N \tau^2/(2k^2))\,.
\]
Using $N = O((k^2/\tau^2)\log(1/\delta))$ samples, the algorithm estimates the desired expectation with error $\tau/(2k),$ with probability at least $1-\delta$. Note that, if $\wt{\mu} < \tau/(2k)$, the algorithm can output $0$, since the estimated value will lie in the desired tolerance interval.
\end{proof}

\section{Learning Gaussians from Coarse Data}
\label{section:implicit}
In this section, we focus on an unsupervised learning problem with coarse data. 
Recall that we have already solved such a problem in the discrete setting as an ingredient of our supervised learning result, 
see \Cref{subsection:marginal}.  In this section, we study the fundamental problem of learning a Gaussian
distribution given coarse data.  
In  \Cref{subsection:gaussian-non-convex}, we show that, under general partitions, 
this problem is \NP-hard.
In \Cref{subsection:gaussian-convex}, we show that we can efficiently estimate the Gaussian mean 
under convex partitions of the space. 

\subsection{Computational Hardness under General Partitions }
\label{subsection:gaussian-non-convex}
In this section, we consider general partitions of the $\dim$-dimensional Euclidean space, that may contain non-convex subsets. For instance, a compact convex body and its complement define a non-convex partition of $\mathbb{R}^{d}$. 
% The main result of this section is \Cref{theorem:hardness-mean}, which states that, when general partitions are allowed, distribution learning from coarse data becomes challenging and, in fact, even matching the output distribution (rather than parameter
% estimation) becomes $\NP$-hard. 
In order to get this computational hardness result, we reduce from \maxcut and make use of its hardness of approximation (see \cite{haastad2001some}).  Recall that \maxcut can be viewed as a maximization problem, where the objective function corresponds to a particular quadratic function (associated with the Laplacian matrix of the given graph instance) and the constraints restrict 
the solution to lie in the Boolean hypercube (the constraints can be seen geometrically as the intersection of bands, see 
\Cref{fig:maxcut}). 
% A key lemma of this section is \Cref{lemma:ellipsoid-sensitivity}.

% and \Cref{lemma:band-sensitivity}. 
% Each one of  these two lemmata gives a notion of sensitivity The intuitive idea behind the reduction is the fact that we can encode a \maxcut instance using general partitions. 

We first define \maxcut and a variant of \maxcut where the optimal cut score is given as part of the input. Let $G = (V,E)$ be a graph\footnote{We are going to work with graphs with unit weights.} with $d$ vertices.  
A \emph{cut} is a partition of $V$ into two subsets $S$ and $S' = V \setminus S$ and the value of the cut $(S,S')$ is $c(S,S') = \sum_{u,v \in E} \vec 1\{u \in S, v \in S'\}$. The goal of the problem is find the maximum value cut in $G$, i.e., to partition the vertices into two sets so that the number of edges crossing the cut is maximized.
% We first define the optimization \textsc{MaxCut} problem on weighted graphs. Given a graph $G = (V,E)$ with weights $\{w_e\}_{e \in E}$, find a maximum value cut in $G$.
% % \underline{\textbf{\textsc{MaxCut}}}\\
% % \textsc{Input}: A graph $G = (V,E)$ with weights $\{w_e\}_{e \in E}$.\\
% % \textsc{Output}: A maximum value cut in $G$.
% % \newline
% Observe that any \maxcut instance admits an even number of optimal solutions. 
We can define \maxcut as the following maximization problem for the graph $G = (V,E)$ with $|V| = d$:
\[
\max \sum_{(i,j) \in E} (\vec x_i - \vec x_j)^2\,,~\text{ subj. to }~\vec x_i \in \{-1, +1\}~\forall i \in [\dim]\,.
\]
The objective function is the quadratic form $\vec x^T \vec L_G \vec x$, where $\vec L_G$ is the Laplacian matrix of the graph $G$. 
We may also assume that the value of the optimal cut is known and is equal to $\opt$.\footnote{Observe that this problem is still hard, since the maximum value of a cut is bounded by $d^2$ and, hence, if this problem 
could be solved efficiently, one would be able to solve \maxcut by trying all possible values of $\opt$.}
Before proceeding with the overview of the proof, we state 
a key result of~\cite{haastad2001some} about the inapproximability of \maxcut.
\begin{lemma}
[Inapproximability of Maximum Cut Problem~\cite{haastad2001some}]
It is \NP-hard to approximate \maxcut to any factor higher than $16/17$.
\end{lemma}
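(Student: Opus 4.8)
The statement is Håstad's optimal inapproximability result for \maxcut, proved in \cite{haastad2001some}, and my plan is to recall the standard route to it rather than rebuild the PCP machinery from scratch; I would take the PCP theorem and Raz's parallel repetition theorem as given black boxes. The argument has three stages. In stage one, combining the PCP theorem with parallel repetition, one gets that for every constant $\eta>0$ there is a polynomial-time reduction from 3SAT to a \emph{Label Cover} instance with projection constraints $\pi_e\colon[L]\to[M]$ which is satisfiable when the 3SAT instance is, and admits no labeling satisfying more than an $\eta$-fraction of the constraints otherwise (the standard ``outer verifier''). Stage two composes this with a long-code test to obtain hardness of \textsc{Max-E3-Lin-2}, the problem of maximizing the number of satisfied GF$(2)$ linear equations in three variables each. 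Stage three is a gap-preserving gadget reduction from \textsc{Max-E3-Lin-2} to \maxcut, and tracking the completeness/soundness fractions through it extracts the constant $16/17$.

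For stage two --- the technical heart --- I would describe Håstad's three-query inner verifier. The proof is indexed by a \emph{folded} long code per Label Cover vertex: a function $g\colon\{-1,1\}^{L}\to\{-1,1\}$ on the big side and $f\colon\{-1,1\}^{M}\to\{-1,1\}$ on the small side, folding forcing $\widehat f(\emptyset)=\widehat g(\emptyset)=0$. The verifier picks a random edge, a uniform $y\in\{-1,1\}^M$, a uniform $z\in\{-1,1\}^L$, and a noise vector $w\in\{-1,1\}^{L}$ with coordinates independently $-1$ with probability $\delta$, forms $z'$ with $z'_i=y_{\pi_e(i)}\,z_i\,w_i$, and accepts iff $f(y)\,g(z)\,g(z')=1$. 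Completeness $\ge 1-\delta$ is immediate for honest long codes. For soundness one writes the acceptance probability as $\tfrac12+\tfrac12\,\E[f(y)g(z)g(z')]$, expands each function in the Walsh basis, and uses independence of $y,z,w$ together with folding; the cross term collapses to $\sum_{\alpha\neq\emptyset}\widehat f(\pi_e^{(2)}(\alpha))\,\widehat g(\alpha)^2\,(1-2\delta)^{|\alpha|}$, where $\pi_e^{(2)}(\alpha)$ is the set of small-side coordinates lying under an odd number of elements of $\alpha$. If the test passes with probability $\tfrac12+\varepsilon$, this quantity is $\ge 2\varepsilon$ in expectation over edges, and the randomized decoding that samples $\alpha$ with probability $\widehat g(\alpha)^2$ (a valid distribution by Parseval) and outputs a random coordinate of $\alpha$, respectively of $\pi_e^{(2)}(\alpha)$, satisfies a $\poly(\varepsilon)$-fraction (for fixed $\delta$) of the Label Cover constraints --- the factor $(1-2\delta)^{|\alpha|}$ being what keeps $|\alpha|$ effectively bounded --- contradicting stage one once $\eta$ is small enough. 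This yields: for every $\varepsilon>0$, it is \NP-hard to distinguish $(1-\varepsilon)$-satisfiable instances of \textsc{Max-E3-Lin-2} from instances that are at most $(\tfrac12+\varepsilon)$-satisfiable.

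For stage three, I would apply an optimal gadget reduction (Trevisan--Sorkin--Sudan--Williamson) from \textsc{Max-E3-Lin-2} to \maxcut: replace each equation on three variables by a fixed finite weighted graph on those three variable-vertices plus $O(1)$ auxiliary vertices, engineered so that every $\pm1$ assignment to the variables extends to a cut of the gadget whose weight is a fixed constant when the equation holds and exactly one unit less when it fails, while conversely any cut decodes to an assignment without loss. Summing over the $m$ equations, a $c$-fraction of satisfied equations corresponds to a cut of total weight equal to a fixed affine function of $cm$; substituting the two completeness/soundness values $c=1-\varepsilon$ and $c=\tfrac12+\varepsilon$ and letting $\varepsilon\to 0$, the ratio of the two resulting \maxcut values tends to $16/17$. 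Hence a polynomial-time $(16/17+\varepsilon')$-approximation for \maxcut would decide \textsc{Max-E3-Lin-2}, and therefore 3SAT, so no such approximation exists unless $\P=\NP$.

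\textbf{Main obstacle.} The one genuinely deep step is the Fourier-analytic soundness analysis of the folded long-code parity test in stage two (together with the PCP theorem and parallel repetition that feed stage one, which I would cite rather than prove). Completeness, the averaging over edges, and all the gadget bookkeeping of stage three --- including checking that the optimal gadget yields precisely the $16/17$ threshold --- are routine once those ingredients are in hand.
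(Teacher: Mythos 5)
The paper does not prove this lemma; it is imported verbatim as a black box from H{\aa}stad's paper, so there is no internal proof to compare against. Your outline is a faithful and correct summary of the standard argument in that reference (Label Cover via PCP plus parallel repetition, the folded long-code parity test giving $(1-\varepsilon)$ vs.\ $(\tfrac12+\varepsilon)$ hardness for \textsc{Max-E3-Lin-2}, and the Trevisan--Sorkin--Sudan--Williamson gadget whose $\alpha=17/2$ yields the $16/17$ threshold), which is exactly the appropriate level of detail for a cited result of this depth.
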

\begin{figure}[ht]
\centering
    \begin{subfigure}{0.4\textwidth}
         \centering
    \begin{tikzpicture}[scale=0.45]
    \draw[help lines, color=gray!30, dashed] (-4.9,-4.9) grid (4.9,4.9);
    \draw[->,thick] (-5,0)--(5,0) node[right]{$x_1$};
    \draw[->,thick] (0,-5)--(0,5) node[above]{$x_2$};
    
    \draw[dashed, blue, name path=A1] (-4.8,-1) coordinate (a_11) --(4.8,-1) coordinate (a_12);
    \draw[dashed, blue, name path=A2] (-4.8,1) coordinate (a_21 )--(4.8,1) coordinate (a_22) ;
    
    \tikzfillbetween[of=A1 and A2]{blue, opacity=0.2};
    
    \draw[dashed, red,  name path=B1] (-1,-4.8) coordinate (b_11) --(-1,4.8) coordinate (b_12) ;
    \draw[dashed, red, name path=B2] (1,-4.8) coordinate (b_21)--(1,4.8) coordinate (b_22) ;
    \tikzfillbetween[of=B1 and B2]{red, opacity=0.1};
    
    \coordinate (c1) at (intersection of a_11--a_12 and b_11--b_12);
    \fill[green] (c1) circle (3pt);
    \fill[green] (1,1) circle (3pt);
    \fill[green] (-1,1) circle (3pt);
    \fill[green] (1,-1) circle (3pt);
    
    \end{tikzpicture}
         \centering
    \end{subfigure}%
    \begin{subfigure}{0.4\textwidth}
         \centering
    \begin{tikzpicture}[scale=0.45]
    \draw[help lines, color=gray!30, dashed] (-4.9,-4.9) grid (4.9,4.9);
    \draw[->,thick] (-5,0)--(5,0) node[right]{$x_1$};
    \draw[->,thick] (0,-5)--(0,5) node[above]{$x_2$};
    
    \draw[dashed, blue, name path=A1] (-4.8,-1) coordinate (a_11) --(4.8,-1) coordinate (a_12);
    \draw[dashed, blue, name path=A2] (-4.8,1) coordinate (a_21 )--(4.8,1) coordinate (a_22) ;
    
    \tikzfillbetween[of=A1 and A2]{blue, opacity=0.05};
    
    \draw[dashed, red,  name path=B1] (-1,-4.8) coordinate (b_11) --(-1,4.8) coordinate (b_12) ;
    \draw[dashed, red, name path=B2] (1,-4.8) coordinate (b_21)--(1,4.8) coordinate (b_22) ;
    \tikzfillbetween[of=B1 and B2]{red, opacity=0.05};
    
    \draw[scale =3.4, rotate=45] (0,0)ellipse (12pt and 20pt);
    \fill[magenta!50, fill opacity=0.3][scale =3.4, rotate=45]  (0,0) ellipse (12pt and 20pt);  
    \coordinate (c1) at (intersection of a_11--a_12 and b_11--b_12);
    \fill[green] (c1) circle (3pt);
    
    \fill[green] (1,1) circle (3pt);

    \end{tikzpicture}
    \end{subfigure}
    \caption{The geometry of the \maxcut instance. The left figure corresponds to the fat hyperplanes, i.e., the constraints of \maxcut and the right figure (the ellipsoid) corresponds to the objective function of \maxcut. The green points lie in the Boolean hypercube. }
    \label{fig:maxcut}
\end{figure}

%%%%%%%%%%%%%%%%%%%%%%%%%%%%%%%%%%%%%%%%%%

%%%%%%%%%%%%%%%%%%%%%%%%%%%%%%%%%%%%%%%%%%%

\subsubsection{Sketch of the Proof of \Cref{theorem:impossibility-gaussian}}
The first step of the proof is to construct the distribution over partitions of $\reals^d$. 
The \maxcut problem can be viewed as a collection of $d+1$ non-convex partitions of the $\dim$-dimensional Euclidean space. Consider an 
instance of \maxcut with $|V| = d$ and optimal cut value $\opt$.  Consider the collection of $d+1$ partitions 
$\mathcal{B} = \{ \S_1, \dots, \S_d,  \mathcal{T} \}$. 
We define the partitions as follows: for any $i = 1,\dots, \dim$, we let $S_i = \{\vec x:  -1 \leq \vec x_i \leq 1\}$ be the 
sets that correspond to fat hyperplanes of \Cref{fig:maxcut}(a) and the partitions $\S_i = \{ S_i, S_i^c \}$, i.e., pairs 
of fat hyperplanes and their complements (see \Cref{fig:partition}(a,b)). These $d$ partitions will simulate the \maxcut  constraints, i.e., that the solution vector lies in the hypercube $\{-1,1\}^d$.  It remains to construct $\mathcal{T}$, which intuitively corresponds
to the quadratic objective of $\maxcut.$

\begin{figure}[ht]
    \centering
    \includegraphics[scale=0.43]{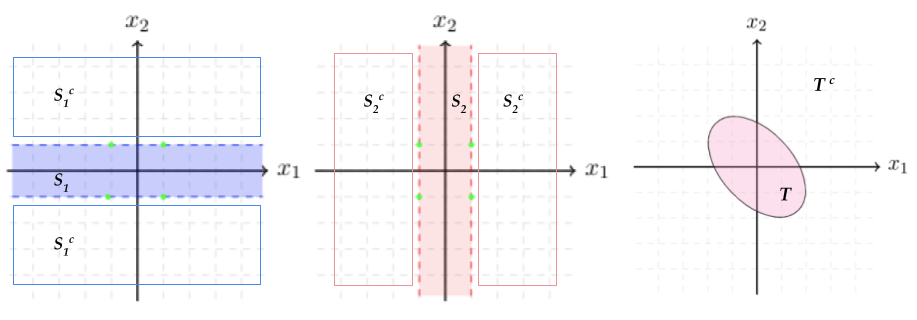}
    \caption{The mixture of partitions that corresponds to the \maxcut problem. In figures $(a)$ and $(b)$, we partition the Euclidean space using fat hyperplanes (the blue set $S_1$ and the red set $S_2$ respectively) and their complements $S_1^c = \reals^d \setminus S_1$ and $S_2^c = \reals^d \setminus S_2$. The third figure $(c)$ partitions $\reals^d$ using the ellipsoid $T = \{ \vec x : \vec x^T \vSigma^{-1} \vec x \leq q\}$ and its complement $T^c = \reals^d \setminus T$ (for some $d \times d$ covariance matrix $\vSigma$ and positive real $q$).}
    \label{fig:partition}
\end{figure}

Fix the covariance matrix $\vSigma = \vec L_G^{-1} \opt$ \footnote{In fact, $\vec L_G$ has zero eigenvalue with eigenvector $(1,\ldots,1)$: we 
have to project the Laplacian to the subspace orthogonal to $(1,\ldots, 1)$  to avoid this.  We ignore this technicality here for simplicity.} 
, i.e., $\vSigma$ is the inverse of the Laplacian normalized by $\opt$.  
We let $T = \{ \vec x : \vec x^T \vSigma^{-1} \vec x \leq q\}$
 for some positive value 
$q$ to be defined later (see \Cref{fig:maxcut}(b) and \Cref{fig:partition}(c)).  Then, we let $\mathcal{T} = \{T, T^c\}$.  We construct a mixture $\pi$ of these partitions by picking each one uniformly at random, i.e., with probability $1/(d+1)$.

Let us assume that there exists an algorithm that, given access to 
    samples from $\N_\pi(\vmu^\star, \vSigma)$, with \emph{known covariance} $\vSigma$, computes, 
    in time $\poly(d)$, a mean vector $\vmu$ 
so that the output distributions are matched, i.e.,  $\dtv(\N_{\pi}(\vmu,\vSigma), \N_{\pi}(\vmu^\star,\vSigma))$ is upper bounded 
by $1/d^c$ for some 
absolute constant $c > 1$.  Equivalently this means that the mass that $\N(\vmu, \vSigma)$
assigns to each set $S_i$ and $T$ is within $\poly(1/d)$ of the corresponding mass
that $\N_\pi(\vmu^\star, \vSigma)$ assigns to the same set.  There are two main challenges in order to prove the reduction:
\begin{enumerate}
\item  How can we generate coarse samples from $\N_\pi(\vmu^\star, \vSigma)$ since $\vmu^\star$ is the solution
of the \maxcut problem and therefore is unknown? 
\item  Given $\opt$, is it possible to pick the threshold $q$ 
of the ellipsoid $T = \{ \vec x \in \reals^{\dim} : \vec x^T \vSigma^{-1} \vec x \leq q\}$ 
so that any vector $\vmu$ (rounded to belong in $\{-1, 1\}^d$), that achieves $\N(\vmu, \vSigma; T) \approx \N(\vmu^\star, \vSigma;T)$ 
and $\N(\vmu, \vSigma; S_i) \approx \N(\vmu^\star, \vSigma;S_i)$, also achieves an approximation ratio better than $16/17$ 
for the \maxcut objective ?
\end{enumerate}

The key observation to answer the first question is that, by the rotation invariance of the Gaussian distribution, 
the probability $\N(\vmu^\star, \vSigma;T) =  \Pr_{\vec x \sim \N(\vmu^\star, \vSigma)}\big[\vec x^T \vSigma^{-1} \vec x \leq q \big]$
is a constant $p$ that only depends on the value $\opt$ of the \maxcut problem.  Therefore, having this value $p$,
we can flip a coin with this probability and give the coarse sample $T$ if we get heads and $T^c$ otherwise.
Similarly, the value of $\N(\vmu^\star, \vSigma;S_i)$ is an absolute constant that does not depend on $\vmu^\star \in \{-1, 1\}^d$
and therefore we can again simulate coarse samples by flipping a coin with probability equal to $\N(\vmu^\star, \vSigma;S_i)$.

To resolve the second question, we first show that any vector $\vmu$ that approximately 
matches the probabilities of the $d$ fat halfspaces, lies very close to a corner of the
hypercube, see \Cref{lemma:band-sensitivity}. 
Therefore, by rounding this guess $\vmu$, we obtain exactly a corner of the
hypercube without affecting the probability assigned to the ellipsoid constraint by a lot.
We then show that any vector of the hypercube that almost matches the probability 
of the ellipsoid achieves large cut value.
In particular, we  prove that there exists a value for the threshold $q$ of
the ellipsoid $ \vec x^T \vSigma^{-1} \vec x \leq q$ that makes the probability 
$\N(\vmu, \vSigma;T)$ \emph{very sensitive to changes of $\vmu$}.
Therefore, the only way for the algorithm to match the observed probability is to find
a $\vmu$ that achieves large cut value.  We show the following lemma.
% , see \Cref{appendix:ellipsoid-sensitivity}.
\begin{lemma}[Sensitivity of Gaussian Probability of Ellipsoids]
\label{lemma:ellipsoid-sensitivity}
Let $\N(\vmu^\star, \vSigma)$, $\N(\vmu, \vSigma)$ be $\dim$-dimensional Gaussian distributions. 
Let $\vec v^\star = \vSigma^{-1/2} \vmu^\star$, $\vec v = \vSigma^{-1/2} \vmu$ and
assume that $\|\vec v \|_2 \leq \|\vec v^\star \|_2 = 1$.
Denote $q = d + \| \vec v^\star \|^2_2 + \sqrt{2 d + 4 \|\vec v^\star\|_2^2}$.
Then, assuming $d$ is larger than some sufficiently large absolute constant, it holds that
\[
\Big|
\Pr_{\vec x \sim \N(\vmu^\star, \vSigma)}\big[\vec x^T \vSigma^{-1} \vec x \leq q \big] 
-
\Pr_{\vec x \sim \N(\vmu, \vSigma)}\big[\vec x^T \vSigma^{-1} \vec x \leq q \big]  \Big| 
\geq \frac{\|\vec v^\star\|_2^2 - \|\vec v\|_2^2}{6 \sqrt{2 d + 4} } - o(1/\sqrt{d}) \,.
\]
\end{lemma}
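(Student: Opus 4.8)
The plan is to reduce the claim to a one‑dimensional statement about the non‑central chi‑squared distribution and then exploit an exact formula for the derivative of its CDF with respect to the non‑centrality parameter. Applying the change of variables $\vec y = \vSigma^{-1/2}\vec x$, we have $\vec x^T\vSigma^{-1}\vec x = \|\vec y\|_2^2$ and $\vec y\sim\N(\vec v,\vec I)$ (resp.\ $\N(\vec v^\star,\vec I)$) when $\vec x\sim\N(\vmu,\vSigma)$ (resp.\ $\N(\vmu^\star,\vSigma)$). Hence the two probabilities in the lemma are $F_{d,\lambda}(q)$ and $F_{d,\lambda^\star}(q)$, where $F_{d,\nu}$ is the CDF of the non‑central chi‑squared distribution $\chi^2_d(\nu)$ with $d$ degrees of freedom and non‑centrality $\nu$, and $\lambda := \|\vec v\|_2^2$, $\lambda^\star := \|\vec v^\star\|_2^2 = 1$. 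Since $\chi^2_d(\nu)$ has mean $d+\nu$ and variance $2d+4\nu$, the threshold $q = d + \lambda^\star + \sqrt{2d+4\lambda^\star}$ is exactly one standard deviation above the mean of $\chi^2_d(\lambda^\star)$; moreover $\lambda\le\lambda^\star$ makes $\chi^2_d(\lambda)$ stochastically smaller, so the quantity in the lemma is $F_{d,\lambda}(q) - F_{d,\lambda^\star}(q)\ge 0$ and it suffices to lower bound it by $(\lambda^\star-\lambda)/(6\sqrt{2d+4}) - o(1/\sqrt d)$.

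The key tool is an exact formula for $\partial_\nu F_{d,\nu}(q)$. Using the Poisson‑mixture representation $\chi^2_d(\nu)\stackrel{d}{=}\chi^2_{d+2N}$ with $N\sim\mathrm{Poisson}(\nu/2)$, differentiating the resulting series for $F_{d,\nu}(q)$ termwise (justified by its uniform convergence), and using that $\chi^2_{m+2}\stackrel{d}{=}\chi^2_m + \chi^2_2$ with independent summands and $\Pr[\chi^2_2 > t] = e^{-t/2}$, one gets
\[
-\frac{\partial}{\partial\nu}F_{d,\nu}(q) \;=\; \tfrac{1}{2}\,\E_{W\sim\chi^2_d(\nu)}\big[\,\vec 1\{W\le q\}\,e^{-(q-W)/2}\,\big] \;\ge\; 0\,.
\]
By the mean value theorem, $F_{d,\lambda}(q) - F_{d,\lambda^\star}(q) = (\lambda^\star-\lambda)\big(-\partial_\nu F_{d,\nu}(q)\big)$ for some $\nu\in[\lambda,\lambda^\star]\subseteq[0,1]$, so it is enough to prove $-\partial_\nu F_{d,\nu}(q)\ge(\phi(1) - o(1))/\sqrt{2d+4}$ uniformly over $\nu\in[0,1]$; since $\phi(1) = e^{-1/2}/\sqrt{2\pi}\approx 0.242 > 1/6$ and $0\le\lambda^\star-\lambda\le 1$, this gives $F_{d,\lambda}(q) - F_{d,\lambda^\star}(q)\ge(\lambda^\star-\lambda)/(6\sqrt{2d+4}) - o(1/\sqrt d)$, which is the claim since $\lambda^\star-\lambda = \|\vec v^\star\|_2^2 - \|\vec v\|_2^2$.

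Write $-\partial_\nu F_{d,\nu}(q) = \tfrac{1}{2}\int_0^q\rho_\nu(w)\,e^{-(q-w)/2}\,dw$, where $\rho_\nu$ is the density of $\chi^2_d(\nu)$ and $g_m$ denotes the density of the central $\chi^2_m$. For $\nu\le 1$ and $d$ large, $q$ exceeds the mean $d+\nu$ of $\chi^2_d(\nu)$ (hence also its mode) by at least $\sqrt{2d}$. Since $\rho_\nu = \E_{N\sim\mathrm{Poisson}(\nu/2)}[g_{d+2N}]$ and each central density $g_m$ is unimodal, the minimum of $g_{d+2N}$ over the interval $[q-\sqrt{2d},q]$ is attained at an endpoint, and whenever $N$ is typical (so that $d+2N\le q-\tfrac{1}{2}\sqrt{2d}$) the endpoint farther from the mode is $q$; together with the elementary recursion $g_{m+2}(q) = (q/m)\,g_m(q)$ this yields $\rho_\nu(q-s)\ge\rho_\nu(q) - o(1/\sqrt d)$ for every $s\in[0,\sqrt{2d}]$, the error absorbing the super‑polynomially small probability that $N$ is atypically large (on which $\sup g_{d+2N} = O(1/\sqrt d)$). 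Restricting the integral to $[q-\sqrt{2d},q]$ then gives $-\partial_\nu F_{d,\nu}(q) \ge\tfrac{1}{2}\big(\rho_\nu(q) - o(1/\sqrt d)\big)\int_0^{\sqrt{2d}}e^{-s/2}\,ds \ge \rho_\nu(q) - o(1/\sqrt d)$.

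It remains to lower bound the density $\rho_\nu(q)$, which is the heart of the argument. Using $\rho_\nu = \E_N[g_{d+2N}]$ and the recursion once more: each factor $q/(d+2j)\ge 1$ for $j$ up to $(q-d)/2 = \Omega(\sqrt d)$ (as $q > d$), so $g_{d+2N}(q)\ge g_d(q)$ on the event $\{N\le\sqrt d/2\}$, whence $\rho_\nu(q)\ge g_d(q)\,\Pr[N\le\sqrt d/2]\ge g_d(q)(1-o(1))$ since $N\sim\mathrm{Poisson}(\nu/2)$ with $\nu\le 1$. Finally, applying Stirling's formula to $g_d(q) = q^{d/2-1}e^{-q/2}/(2^{d/2}\Gamma(d/2))$ and noting that the exponent $\tfrac{d}{2}\log(q/d) - \tfrac{q-d}{2}$ equals $-\tfrac{1}{2} + o(1)$ (because $(q-d)^2/d\to 2$), we get $g_d(q) = (1+o(1))\,e^{-1/2}/(\sqrt{2\pi}\,\sqrt{2d}) = (\phi(1) - o(1))/\sqrt{2d+4}$. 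Chaining the three estimates gives $-\partial_\nu F_{d,\nu}(q)\ge(\phi(1) - o(1))/\sqrt{2d+4}$, completing the proof. The main obstacle is exactly this last step — a quantitative, uniform‑in‑$\nu$ lower bound on the non‑central chi‑squared density one standard deviation above its mean; the crude estimates above (discarding a $(1-o(1))$ factor at each stage and reducing to the central case $\nu = 0$) suffice only because of the numerical slack between $\phi(1)\approx 0.242$ and $1/6$. As an alternative to Stirling plus the Poisson mixture, a local limit theorem with an Edgeworth correction for a sum of $d$ independent bounded variables gives $\rho_\nu(q) = \big(\phi(t_\nu) + O(1/\sqrt d)\big)/\sqrt{2d+4\nu}$ with $t_\nu = (q-d-\nu)/\sqrt{2d+4\nu}\to 1$, yielding the same bound directly.
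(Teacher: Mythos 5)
Your proof reaches the correct conclusion but takes a genuinely different route from the paper's. Both proofs begin identically by reducing to the non-central chi-squared CDF: $\Pr_{\vec x\sim\N(\vmu,\vSigma)}[\vec x^T\vSigma^{-1}\vec x\le q]=F_{d,\|\vec v\|_2^2}(q)$. From there they diverge. The paper applies a Berry--Esseen-type CLT with the first Edgeworth correction (Feller's Theorem 1, Ch.\ XVI) directly to \emph{both} CDFs, evaluating $\bar q_1=1$ and $\bar q_2=1+\Theta((\lambda^\star-\lambda)/\sqrt d)$, and then lower bounds $\Phi(\bar q_2)-\Phi(\bar q_1)$; the $o(1/\sqrt d)$ error in the statement is exactly the residual from the two CLT applications. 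You instead exploit the Poisson-mixture structure of $\chi^2_d(\nu)$ to obtain the \emph{exact} identity $-\partial_\nu F_{d,\nu}(q)=\tfrac12\E_{W\sim\chi^2_d(\nu)}[\mathbf 1\{W\le q\}e^{-(q-W)/2}]$, then reduce via the mean value theorem to a uniform lower bound on this derivative, and finally control it by the chi-squared density one standard deviation above its mean (via Stirling, or alternatively a local CLT). This buys you an exact algebraic handle on the non-centrality dependence, so you never need to subtract two approximate quantities of similar size; the trade-off is that you must establish a quantitative lower bound on a density rather than a CDF, which is a slightly more delicate local statement. Both arguments ultimately cash out on the same numerical slack between $\phi(1)=e^{-1/2}/\sqrt{2\pi}\approx 0.242$ and $1/6$.

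One step in your sketch deserves more care: the inequality $\rho_\nu(q-s)\ge\rho_\nu(q)-o(1/\sqrt d)$ uniformly over $s\in[0,\sqrt{2d}]$. The remark that the farther endpoint from the mode of $g_{d+2N}$ is $q$ does not by itself give $g_{d+2N}(q-s)\ge g_{d+2N}(q)$ for all $s$ in the interval; for that you need $g_{d+2N}(q-\sqrt{2d})\ge g_{d+2N}(q)$, which (since the chi-squared density is not symmetric about its mode) requires an additional log-concavity or near-Gaussianity argument, not just comparing distances to the mode. The conclusion is still correct --- for $\nu\le 1$ the mode of $\rho_\nu$ is near $d-2+\nu$, well to the left of $q-\sqrt{2d}\approx d+1$, so by unimodality the minimum over the interval is at an endpoint, and a short computation shows $q$ (one SD past the mode) is the lower endpoint --- but the justification as written leans on an intuition about symmetric densities. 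Tightening that step, or replacing the whole block by the local-CLT alternative you mention at the end, would make the argument airtight.
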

Notice that with $\vSigma = \vec L_G^{-1} \opt$, in the above lemma, we have 
$\|\vec v^\star\|_2^2 = 1$, since $\vmu^\star$ achieves cut value $\opt$.
By assumption, we know that the learning algorithm can find a guess $\vmu$ that makes the
left hand side of the inequality of \Cref{lemma:ellipsoid-sensitivity} smaller than 
$\poly(1/d)$. Thus, we obtain that, for $d$ large enough, it must be that $\|\vec v\|_2^2 
= \vmu^T \vec L_G \vmu / \opt \geq 16/17$. Therefore, $\vmu$ achieves value greater 
than $(16/17) \opt$.

\begin{remark}
%[Partition Distribution of \maxcut and Information Preserving Transforms]
The transformation $\pi$ used in the above hardness result is not information preserving.
In \Cref{theorem:impossibility-gaussian}, we prove that it is computationally hard to find a vector $\vec \mu \in \reals^d$ that matches in total variation 
the \emph{observed distribution over coarse labels}.  In contrast, as we will see in the upcoming \Cref{subsection:gaussian-convex}, when the sets of the partitions are convex,
we show that there is an efficient algorithm that can solve the same problem and compute some $\vec \mu \in \reals^d$ such that 
$\mathrm{TV}(\mathcal{N}_\pi(\vec \mu^\star), \mathcal{N}_\pi(\vec  \mu))$ is small
regardless of whether the transformation $\pi$ is information preserving.   When the
transformation is information preserving, we can further show that the vector $\vec \mu$ that we compute
will be close to $\vec \mu^\star$.

\end{remark}

\subsubsection{Sensitivity of Gaussian Probabilities} 
We now prove \Cref{lemma:ellipsoid-sensitivity}, namely
that the probability of an ellipsoid with respect to the Gaussian
distribution is sensitive to small changes of its mean.
\begin{proof}[\textit{Proof of}~\Cref{lemma:ellipsoid-sensitivity}]
We first observe that 
\begin{align*}
\Pr_{\vec x \sim \N(\vmu, \vSigma)} \big[\vec x^T \vSigma^{-1} \vec x \leq q \big] 
&=
\Pr_{\vec x \sim \N(\vec 0, \vec I)} \big[\vec x^T \vec x + 2 \vmu^T \vSigma^{-1/2}\vec x \leq q - \vmu^T \vSigma^{-1} \vmu\big]
\\
&=
\Pr_{\vec x \sim \N(\vec 0, \vec I)} \big[\vec x^T \vec x + 2 \vec v^T \vec x \leq q - \|\vec v\|_2^2\big],
\end{align*}
where $\vec v = \vSigma^{-1/2} \vmu$.  Similarly, we have
$
\Pr_{\vec x \sim \N(\vmu^{\star}, \vSigma)} \big[\vec x^T \vSigma^{-1} \vec x \leq q \big] 
= 
\Pr_{\vec x \sim \N(\vec 0, \vec I)} \big[\vec x^T \vec x + 2 (\vec v^{\star})^T \vec x \leq q - \|\vec v^{\star}\|_2^2\big],
$
where $\vec v^{\star} = \vSigma^{-1/2} \vmu^{\star}$. From the rotation invariance of the Gaussian distribution, we may assume, 
without loss of generality, that $\vec v = \|\vec v\| \vec e_1 $ and 
$\vec v^{\star} = \|\vec v^{\star}\| \vec e_1$.  
Notice that $(\|\vec v\|_2+\vec x_1)^2 + \sum_{i=2}^d \vec x_i^2$ is a sum of independent random variables. 
To estimate these probabilities we are going to use the central limit theorem. 
\begin{lemma}[CLT, Theorem 1, Chapter XVI in \cite{feller1957introduction} ]
Let $X_1, \ldots, X_n$ be independent random variables with $\E[|X_i|^3] < +\infty$ for all $i$.
Let $m_1 = \E[\sum_{i=1}^n X_i]$ and $m_j =  \sum_{i=1}^n \E[ (X_i  - \E[X_i])^j]$.
Then, 
\[
\Pr\left[ \frac{(\sum_{i=1}^n X_i) - m_1}{\sqrt{m_2}} \leq x\right]  - \Phi(x) 
= m_3 \frac{(1-x^2) \phi(x)}{6 m_2^{3/2}} + o \left(n/m_2^{3/2}\right) \,,
\]
where $\Phi(\cdot)$, resp., $\phi(\cdot)$ is the CDF resp., PDF of the standard normal distribution
and the convergence is uniform for all $x \in \reals$.
\end{lemma}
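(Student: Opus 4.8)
This is the classical one--term Edgeworth expansion, and the plan is to prove it by Fourier inversion together with Esseen's smoothing inequality; the statement is exactly the one attributed to Feller, so I describe the standard argument and flag where the only-third-moment hypothesis is used. First I would normalize: replacing each $X_i$ by $X_i - \E[X_i]$ leaves $m_2$ and $m_3$ unchanged and only shifts $m_1$, so assume the $X_i$ are centered and $m_1 = 0$. Write $B_n^2 = m_2 = \sum_i \Var(X_i)$, $\Gamma_n = m_3 = \sum_i \E[X_i^3]$, $L_n = \sum_i \E[|X_i|^3]$, $\lambda_n = \Gamma_n/B_n^3$, and $S_n = \sum_i X_i$; let $F_n(x) = \Pr[S_n/B_n \le x]$ and $f(t) = \E[e^{it S_n/B_n}]$. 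The target function is $G(x) = \Phi(x) + \tfrac{\lambda_n}{6}(1-x^2)\phi(x)$, and since $(1-x^2)\phi(x) = -\phi''(x)$ a direct computation gives its Fourier--Stieltjes transform,
\[
\widehat g(t) = e^{-t^2/2}\bigl(1 + \tfrac{1}{6}(it)^3 \lambda_n\bigr) \,.
\]
The whole statement then reduces to showing $\sup_x |F_n(x) - G(x)| = o(n/B_n^3)$, with the understanding that the conclusion is meaningful only when $B_n \to \infty$ and $n/B_n^3 \to 0$; this, plus a Lindeberg/Lyapunov-type uniformity over the array (implicit in the ``uniform convergence'' clause), is what makes the error terms below genuinely $o(n/B_n^3)$ rather than merely $O(n/B_n^3)$.

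The heart of the argument is the cumulant expansion of $f$. From the elementary bound $\bigl|e^{iu} - \sum_{k=0}^{3}\tfrac{(iu)^k}{k!}\bigr| \le \min\bigl(\tfrac{|u|^4}{24}, \tfrac{|u|^3}{3}\bigr)$, taking $u = sX_i$ and expectations, one gets for each $i$ and all real $s$
\[
\Bigl| \E[e^{isX_i}] - 1 + \tfrac{s^2}{2}\Var(X_i) - \tfrac{(is)^3}{6}\E[X_i^3] \Bigr| \ \le\ C|s|^3\,\theta_i(s), \qquad \theta_i(s) := \E\bigl[|X_i|^3 \min(1,|sX_i|)\bigr] \,,
\]
where $\theta_i(s) \le \E|X_i|^3$ and, by dominated convergence, $\theta_i(s) \to 0$ as $s \to 0$ --- this is exactly the point at which having only third moments forces an $o$, not an $O$. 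For fixed $t$ and $n$ large, $s = t/B_n$ makes every factor $\E[e^{isX_i}]$ close to $1$, so taking logarithms and summing over $i$,
\[
\log f(t) = -\tfrac{t^2}{2} + \tfrac{(it)^3}{6}\lambda_n + E_n(t), \qquad |E_n(t)| \le C|t|^3 B_n^{-3}\sum_i \theta_i(t/B_n) + (\text{quadratic remainder}) \,,
\]
and the array regularity forces $B_n^{-3}\sum_i \theta_i(t/B_n) = o(n/B_n^3)$ for each fixed $t$ (in the i.i.d.\ case this is just $\theta(t/B_n)/(n^{1/2}\Var(X_1)^{3/2}) = o(n^{-1/2})$, and the quadratic remainder is $O(n^{-1}) = o(n^{-1/2})$ as well). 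Comparing $f$ with $\widehat g$ via $|e^w - 1 - w| \le \tfrac{|w|^2}{2}e^{|w|}$ with $w = \tfrac{(it)^3}{6}\lambda_n + E_n(t)$ then yields, on $|t| \le \delta B_n^{1/3}$, a bound $|f(t)-\widehat g(t)| \le e^{-t^2/4}\bigl(|t|^6\lambda_n^2 + |t|^3 o(n/B_n^3)\bigr)$; on the complementary ``middle'' range $\delta B_n^{1/3} \le |t| \le T$ one uses $|\E[e^{isX_i}]| \le e^{-c\Var(X_i)s^2}$ for $|s|$ in a fixed neighbourhood of $0$, so that $|f(t)| \le e^{-ct^2}$ there and the contribution is negligible.

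Finally I would invoke Esseen's smoothing inequality,
\[
\sup_x |F_n(x) - G(x)| \ \le\ \frac{1}{\pi}\int_{-T}^{T}\Bigl|\frac{f(t) - \widehat g(t)}{t}\Bigr|\,dt \ +\ \frac{C\,\sup_x|G'(x)|}{T} \,,
\]
choosing $T$ of order slightly below $B_n$ (large enough that $T^{-1} = o(n/B_n^3)$, which is possible since $B_n^3/n$ and $B_n$ are of the same order and $\sup_x|G'(x)| = O(1)$). The boundary term is then $o(n/B_n^3)$; the integral, split along the two ranges above, is at most $\int_{\reals} e^{-t^2/4}\bigl(|t|^5 \lambda_n^2 + |t|^2 o(n/B_n^3)\bigr)\,dt$ plus a negligible tail, i.e.\ $O(\lambda_n^2) + o(n/B_n^3) = o(n/B_n^3)$, using $\lambda_n^2 = o(|\lambda_n|) = o(n/B_n^3)$. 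This gives the claim, uniformly in $x$.

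The step I expect to be the main obstacle is not the low-frequency range near $0$ --- there the cumulant expansion does essentially everything --- but stitching it to the complementary range needed by the smoothing lemma: choosing $T$ and the cutoff $\delta B_n^{1/3}$ consistently and controlling $|f(t)|$ uniformly across the triangular array in that middle range, while keeping every error term genuinely $o(n/B_n^3)$ under only a third absolute moment assumption. In the i.i.d.\ case all of this is textbook bookkeeping ($B_n \asymp n^{1/2}$, $\lambda_n \asymp n^{-1/2}$, $n/B_n^3 \asymp n^{-1/2}$); for a genuine array one must make explicit the Lindeberg/Lyapunov uniformity hidden in Feller's ``uniform convergence'' clause.
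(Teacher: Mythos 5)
The paper does not prove this lemma at all: it is imported as a black-box citation to Feller (Chapter XVI), so there is no internal proof to compare against. Your plan follows the standard route — and essentially Feller's own — of expanding the characteristic function to third order, identifying the Fourier--Stieltjes transform of $\Phi(x)+\tfrac{\lambda_n}{6}(1-x^2)\phi(x)$ as $e^{-t^2/2}\bigl(1+\tfrac{1}{6}(it)^3\lambda_n\bigr)$, and closing with Esseen's smoothing inequality. The low-frequency analysis, including the observation that only having third moments forces an $o(\cdot)$ error via $\theta_i(s)=\E[|X_i|^3\min(1,|sX_i|)]\to 0$, is correct.

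There is, however, a concrete gap in the high-frequency range, and it is not merely the bookkeeping you flag at the end. Your choice of $T$ is internally inconsistent: since $B_n^3/n$ and $B_n$ are of the same order (as you note), the requirement $T^{-1}=o(n/B_n^3)$ forces $T/B_n\to\infty$, i.e.\ $T$ must grow \emph{strictly faster} than $B_n$, not sit ``slightly below'' it. But once $|t|>\delta' B_n$, the argument $s=t/B_n$ leaves any fixed neighbourhood of the origin, and the bound $|\E[e^{isX_i}]|\le e^{-c\Var(X_i)s^2}$ is no longer available. This is exactly where Feller's hypothesis that the distribution be non-lattice (which both your write-up and, to be fair, the paper's statement of the lemma omit) becomes indispensable: one needs $\sup_{\delta\le|s|\le a}|\E e^{isX}|<1$ to kill the integral over $\delta' B_n\le|t|\le T$ with $T=a_nB_n$, $a_n\to\infty$ slowly. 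Without such a hypothesis the statement is simply false — for i.i.d.\ Rademacher summands $F_n$ has jumps of order $n^{-1/2}$, so no continuous correction can achieve a uniform $o(n^{-1/2})$ approximation. In the paper's application the summands are continuous functions of Gaussians, so the hypothesis holds, but your proof plan cannot close as written without adding it and redoing the $T$ bookkeeping accordingly.
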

\noindent Using the above central limit theorem we obtain
\[
\Pr_{\vec x \sim \N(\vec 0, \vec I)} \left[ ( \|\vec v^\star\|_2 + \vec x_1)^2 + \sum_{i=2}^d \vec x_i^2 \leq q \right]
= \Phi(\bar q_1) + O\left(\frac{1}{\sqrt{d}} \right) (1-\bar{q_1}^2) \phi(\bar q_1) + o\left(1/\sqrt{d}\right) 
\,,
\]
where $\bar q_1 = \frac{q - (d + \|\vec v^\star\|^2_2)}{ \sqrt{2 d + 4 \|\vec v^\star\|^2_2}}$.
Since $q = d + \|\vec v^\star\|^2 + \sqrt{2 d + 4 \| \vec v^\star\|^2_2} $ we obtain $\bar{q}_1 = 1$
and therefore
\[
\Pr_{\vec x \sim \N(\vec 0, \vec I)} \left[\vec x^T \vec x + 2 (\vec v^\star)^T \vec x \leq q - \|\vec v^\star\|_2^2\right]
= \Phi(1) +o \left(1/\sqrt{d} \right) \,.
\]
Similarly, from the central limit theorem, we obtain
\[
\Pr_{\vec x \sim \N(\vec 0, \vec I)} \left[ \left( \|\vec v\|_2 + \vec x_1 \right)^2 + \sum_{i=2}^d \vec x_i^2 \leq q \right]
= \Phi(\bar q_2) + O\left(\frac{1}{\sqrt{d}} \right) (1-\bar{q_2}^2) \phi(\bar q_2) + o \left(1/\sqrt{d} \right)\,,
\]
where $\bar{q}_2 =  \frac{q - (d + \|\vec v\|^2_2)}{ \sqrt{2 d + 4 \|\vec v\|^2_2}}  =  1 + O(1/\sqrt{d})$.  Therefore, we have
\[
\Pr_{\vec x \sim \N(\vec 0, \vec I)} \left[\vec x^T \vec x + 2 \vec v^T \vec x \leq q - \|\vec v\|_2^2\right]
= \Phi(\bar q_2) + o \left(1/\sqrt{d} \right) \,.
\]
Moreover, we have that $\bar q_2 \geq 1 + (\| \vec v^\star \|_2^2 - \| \vec v \|_2^2)/(\sqrt{2 d + 4 \| \vec v \|^2_2})$.
Using the fact that $d$ is sufficiently large and standard approximation results on the Gaussian CDF, we obtain 
\[
\Phi \left (1 +  \frac{\| \vec v^\star \|_2^2 - \| \vec v \|_2^2}{\sqrt{2 d + 4 \| \vec v \|_2^2}} \right) - \Phi(1) \geq 
(\| \vec v^\star \|_2^2 - \| \vec v \|_2^2)/ \left(6 \sqrt{2 d + 4 \| \vec v \|_2^2} \right)\,,
\]
and, since $\| \vec v \|_2 \leq 1$, we conclude that
the left-hand side satisfies
\[
\Phi \left (1 +  \frac{\| \vec v^\star \|_2^2 - \| \vec v \|_2^2}{\sqrt{2 d + 4 \| \vec v \|_2^2}} \right) - \Phi(1) \geq 
(\| \vec v^\star \|_2^2 - \| \vec v \|_2^2)/ \left (6\sqrt{2 d + 4} \right)\,.
\]
The result follows.
\end{proof}

We will also require the following sensitivity lemma about the Gaussian
probability of bands, i.e., sets of the form $\{\vec x: |\vec x_i| \leq 1\}$.  
We show that the probabilities of such regions are also sensitive under perturbations
of the mean of the Gaussian.  This means that any vector $\vec \mu$ that has
$
\Pr_{\vec x \sim \N(\vmu, \vSigma)} \big[-1 \leq \vec x_i \leq 1 \big] 
$
close to 
$
\Pr_{\vec x \sim \N(\vmu^\star, \vSigma)} \big[-1 \leq \vec x_i  \leq 1 \big] 
$
must be very close to a corner of the hypercube.
\begin{lemma}
[Sensitivity of Gaussian Probability of Bands]
\label{lemma:band-sensitivity}
Let $\N(\vmu^\star, \vSigma), \N(\vmu, \vSigma)$ be two $\dim$-dimensional Gaussian distributions with 
$\vec e_i^T \vSigma \vec e_i \leq Q$, and $|\vmu_i^\star| = 1$ 
for all $i \in [d]$.  
Then, for any $i \in [d]$, it holds that
\[
\left|
\Pr_{\vec x \sim \N(\vmu^\star, \vSigma)} \big[-1 \leq \vec x_i  \leq 1 \big] 
- 
\Pr_{\vec x \sim \N(\vmu, \vSigma)} \big[-1 \leq \vec x_i \leq 1 \big] 
\right|
\geq c \cdot \frac{\min(1, (1 - | \vmu_i|)^2)}{Q^4} \,,
\]
for some absolute constant $c \in (0,1]$.
\end{lemma}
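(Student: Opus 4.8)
The plan is to collapse the statement to a one-dimensional estimate about the Gaussian mass of an interval. The event $\{-1 \le \vec x_i \le 1\}$ depends only on the $i$-th coordinate, whose marginal under $\N(\vmu,\vSigma)$ is the univariate Gaussian $\N(\vmu_i,\sigma^2)$ with $\sigma^2 = \vec e_i^T \vSigma \vec e_i \le Q$; crucially the two Gaussians in the statement share this same variance. So, writing $g(m) = \Phi\big(\tfrac{1-m}{\sigma}\big) - \Phi\big(\tfrac{-1-m}{\sigma}\big)$ for the mass $\N(m,\sigma^2)$ puts on $[-1,1]$, the claim becomes $|g(\vmu_i^\star) - g(\vmu_i)| \ge c\,\min(1,(1-|\vmu_i|)^2)/Q^4$. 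Since the interval is symmetric about the origin, $g$ is an even function, so using $|\vmu_i^\star| = 1$ I may take $\vmu_i^\star = 1$, and replacing $\vmu_i$ by $|\vmu_i|$ I may take $\vmu_i \ge 0$; put $\delta = |1-\vmu_i|$. A direct computation gives $g'(m) = \tfrac1\sigma\big(\phi(\tfrac{1+m}{\sigma}) - \phi(\tfrac{1-m}{\sigma})\big)$ and $g''(m) = -\tfrac{1}{\sigma^3}\big((1-m)\phi(\tfrac{1-m}{\sigma}) + (1+m)\phi(\tfrac{1+m}{\sigma})\big)$.

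I would then split into the cases $|\vmu_i|\le 1$ and $|\vmu_i| > 1$. In the first case ($\vmu_i\in[0,1]$), the formula for $g''$ shows $g$ is concave on $[0,1]$, and $g'$ shows it is strictly decreasing there, so concavity applied to $\vmu_i = \vmu_i\cdot 1 + (1-\vmu_i)\cdot 0$ yields $g(\vmu_i) - g(1) \ge \delta\,(g(0)-g(1))$ with $g(0)-g(1) > 0$. It remains to bound $g(0)-g(1)$ from below uniformly over $\sigma\in(0,\sqrt Q\,]$; rewriting $g(0)-g(1) = \int_0^{1/\sigma}\big(\phi(t)-\phi(t+\tfrac1\sigma)\big)\,dt$ and using $\phi(t)-\phi(t+h)\ge\phi(t)\big(1-e^{-h^2/2}\big)$, one gets a bound that is an absolute constant when $\sigma\le 1$ and of order $1/\sigma^3\ge 1/Q^{3/2}$ when $\sigma>1$. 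Hence $g(\vmu_i)-g(1)\gtrsim \delta/Q^{3/2}\ge \delta^2/Q^4$ (using $\delta\le 1$ and the fact, implicit in the statement and true in the application, that $Q\ge 1$), which is the claimed bound since here $\min(1,(1-|\vmu_i|)^2)=\delta^2$.

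In the second case ($\vmu_i>1$, so $\delta=\vmu_i-1$), $g$ is decreasing to the right of $1$ and $g(1)-g(\vmu_i) = \int_1^{\vmu_i}(-g'(t))\,dt = \int_0^{\delta}\tfrac1\sigma\big(\phi(\tfrac s\sigma)-\phi(\tfrac{s+2}\sigma)\big)\,ds$ after the substitution $s=t-1$. Restricting the integral to $s\in[0,\min(\delta,\sigma,1)]$, so that $\phi(s/\sigma)\ge\phi(1)$ there, and using $\phi(\tfrac s\sigma)-\phi(\tfrac{s+2}\sigma)\ge\phi(\tfrac s\sigma)\big(1-e^{-2/\sigma^2}\big)\gtrsim \phi(1)/\max(1,\sigma^2)$, gives $g(1)-g(\vmu_i)\gtrsim \min(\delta,\sigma,1)/\big(\sigma\max(1,\sigma^2)\big)$. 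A short split on whether $\sigma\le 1$ converts this into $\gtrsim\min(1,\delta^2)/Q^4$, again using $Q\ge 1$. Absorbing all absolute constants into $c$ and combining the two cases proves the lemma.

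The conceptual steps — collapse to the $i$-th coordinate; exploit concavity of the band probability when $|\vmu_i|\le1$, and monotone sensitivity of $g$ beyond the band when $|\vmu_i|>1$ — are short. I expect the only real work to be the elementary but slightly delicate uniform-in-$\sigma$ estimates: the split into $\sigma\le1$ and $\sigma>1$, together with inequalities of the form $\phi(t)-\phi(t+h)\gtrsim\phi(t)\min(1,h^2)$, are what pin down the concrete power of $Q$ in the bound. One minor point worth flagging is that the statement is only meaningful when $Q\ge1$ (otherwise $1/Q^4$ may exceed $1$), which is precisely the regime in which it is applied.
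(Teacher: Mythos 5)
Your proof is correct and takes a genuinely different route from the paper's. After the common first step (collapsing to the $i$-th coordinate, with both Gaussians sharing the marginal variance $\sigma^2 = \vSigma_{ii}\le Q$), the paper argues directly on density ratios: for $\mu>1$ it bounds $\N(\mu,\sigma^2;t)/\N(1,\sigma^2;t) \le e^{-(\mu-1)^2/(2\sigma^2)}$ uniformly over $t\in[-1,1]$ and then multiplies by $\Pr_{\N(1,\sigma)}[-1,1]=\Omega(1/\sigma)$; for $\mu<1$ it locates the crossing point $t=(1+\mu)/2$ of the two densities, exploits the exact cancellation over $[\mu,1]$, and then invokes ``standard approximations of Gaussian integrals'' for the residual $\Omega((1-\mu)/(1+\sigma^4))$. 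You instead study $g(m)=\Pr_{\N(m,\sigma^2)}[-1,1]$ as a function of the mean and derive everything from the signs of $g'$ and $g''$: concavity of $g$ on $[0,1]$ turns the gap $g(\vmu_i)-g(1)$ into $\delta\,(g(0)-g(1))$ with a short lower bound on $g(0)-g(1)$, while outside the band you integrate $-g'$ over $[1,\vmu_i]$. The two proofs land on comparable final powers of $Q$ (both prove stronger $Q^{3/2}$-type bounds before weakening to $Q^4$); what your version buys is that the $\vmu_i\in[0,1]$ case is fully self-contained via convexity, where the paper outsources the estimate to an unproved Gaussian-integral asymptotic, and your derivative calculus makes it transparent where each power of $\sigma$ enters. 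Your remark that the statement is only meaningful for $Q\ge 1$ is accurate and is also implicitly assumed in the paper's proof.
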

\begin{proof}
% \begin{proof}[\textit{Proof of}~\Cref{lemma:band-sensitivity}]
Let us fix $i \in [d]$, define $\mu^\star$ (resp. $\mu$) for $\vmu^\star_i$ (resp. $\vmu_i$), and 
$\sigma^2 = \vSigma_{ii}$.
% where $\Phi(x) = \frac{1}{\sqrt{2\pi}} \int_{-\infty}^x e^{-t^2/2}dt$ is the cumulative distribution function of the standard normal distribution. 
Without loss of generality since both Gaussians have the same variance $\sigma$ 
by symmetry we may assume that $\mu^\star = 1$ and $\mu \in [0,+\infty)$.
We first deal with the case $\mu > 1$. We have 
\begin{align*}
\Pr_{\vec x \sim \N(\vmu^\star, \vSigma)} \big[-1 \leq \vec x_i  \leq 1 \big] 
&- 
\Pr_{\vec x \sim \N(\vmu, \vSigma)} \big[-1 \leq \vec x_i \leq 1 \big] 
\\
&= \E_{t \sim \N(1,\sigma^2)} \left[ \vec 1\{|t| \leq 1\} \left(1 - \frac{\N(\mu, \sigma^2; t)}{\N(1, \sigma^2;t)}\right) \right]\,.
\end{align*}
We have that since $\mu > 1$ the ratio $\frac{\N(\mu, \sigma^2; t)}{\N(1, \sigma^2;t)} = e^{\frac{(\mu -1) (-\mu +2 t-1)}{2 \sigma ^2}} $ 
is maximized for $t=1$ and has maximum value $e^{- \frac{(\mu-1)^2}{2 \sigma^2} }$.  
By taking the derivative with respect to $\sigma$ we observe that the probability that 
$N(1,\sigma)$ assigns to $[-1,1]$ is decreasing with respect to $\sigma$ and therefore
it is minimized for $\sigma = 1$. We have that $\Pr_{t \sim \N(1,\sigma)}[ -1 < t < 1] = \Omega(1/\sigma)$
and therefore 
$
\Pr_{\vec x \sim \N(\vmu^\star, \vSigma)} \big[-1 \leq \vec x_i  \leq 1 \big] 
- 
\Pr_{\vec x \sim \N(\vmu, \vSigma)} \big[-1 \leq \vec x_i \leq 1 \big] 
\geq C \cdot \left ( 1- e^{- \frac{(\mu-1)^2}{2 \sigma^2} } \right).
$
We can obtain the significantly weaker lower bound of $c \min(1, (1 - |\mu|)^2)$ for some absolute 
constant $c \in (0,1]$ by using the inequality
$1- e^{-x} \geq 1/2 \min(1, x)$ that holds for all $x \in [0,+\infty)$.

We now deal with the case $\mu \in [0,1)$.  In that case the expression of their ratio 
of the densities of  $\N(1,\sigma)$ and $\N(\mu, \sigma)$ derived above
shows us that they cross at $t = (1+\mu)/2$.   Therefore, they completely cancel out in
the interval $[\mu, 1]$.  We have 
$
\Pr_{\vec x \sim \N(\vmu, \vSigma)} [-1 \leq \vec x_i \leq 1 ] 
- 
\Pr_{\vec x \sim \N(\vmu^\star, \vSigma)} [-1 \leq \vec x_i  \leq 1] 
=
\Pr_{t \sim \N(\mu,\sigma)} [-1 \leq t \leq \mu ] 
- 
\Pr_{t \sim \N(1, \sigma)} [-1 \leq t  \leq \mu ] 
= \Omega((1-\mu)/ (1 + \sigma^4)) \,,
$
where to obtain the last inequality we use standard approximations of Gaussian integrals.
Combining the above two cases we obtain the claimed lower bound.

\end{proof}

\subsubsection{The Proof of~\Cref{theorem:impossibility-gaussian} }
We are now ready to provide the complete proof of \Cref{theorem:impossibility-gaussian}.
Consider an instance of \maxcut with $|V| = \dim$ and optimal value $\mathrm{opt} =
O(d^2)$. Let $\vec L_G$ be the 
Laplacian matrix of the (connected) graph $G$. Since the minimum 
eigenvalue of $\vec L_G$ is $0$, we project the matrix onto the subspace $V$ that is orthogonal to $\vec 1 = (1,\ldots,1)$.  We introduce a $(\dim-1) \times d$ partial isometry $\vec R$, that satisfies $\vec R \vec R^T = \vec I$ and $\vec R \vec 1 = \vec 0$, i.e., $\vec R$ projects vectors
to the subspace $V$.  We consider $\vec L_G' = \vec R \vec L_G \vec R^T$. 
It suffices to find a solution $\vec x \in V$  and then project back to $\reals^d$: $\vec y = \vec R^T \vec x$.  We note that the matrix $\vec L'_G$ is positive definite (the smallest eigenvalue of $\vec L_G'$ is equal to the second smallest eigenvalue of $\vec L_G$) and preserves the optimal score value, in the sense that
\[
\mathrm{opt} = \max_{\vec y \in \reals^d} \vec y^T \vec L_G \vec y = \max_{\vec x \in \reals^d} (\vec R^T \vec x)^T \vec L_G (\vec R^T \vec x) = \max_{ \vec x \in V} \vec x^T \vec L_G' \vec x \,.
\]
Assume that there exists an efficient black-box algorithm 
$\mathcal{A}$, that, given sample access to a generative process of coarse Gaussian data $\N_{\pi}(\vmu^\star, \vSigma)$ with known covariance
\footnote{We remark that our hardness result is stated for identity covariance matrix (and not for an arbitrary known covariance matrix). In order to handle this case, we provide a detailed discussion after the end of the proof of  \Cref{theorem:impossibility-gaussian}.}
matrix $\vSigma$, computes an estimate $\wt{\vmu}$ in $\poly(d)$ time, that satisfies
\[
\dtv(\N_{\pi}(\wt{\vmu}, \vSigma), \N_{\pi}(\vmu^\star, \vSigma)) < 1/d^c\,. 
\]
We choose the known covariance matrix to be equal to $\vSigma = (\vec L_G')^{-1} \mathrm{opt}$, where $\mathrm{opt}$ is the given optimal \maxcut value and let $\vmu^\star \in \{ -1,1\}^{\dim-1}$ be the unknown mean vector. Recall that, not only the black-box algorithm $\mathcal{A}$, but also the generative process that we design is agnostic to the true mean. However, as we will see the knowledge of the optimal value $\mathrm{opt}$ and the fact that the true mean lies in the hypercube $\{-1,1\}^{d-1}$ suffice to generate samples from the true coarse generative process $\N_{\pi}(\vmu^\star, \vSigma)$.

In what follows, we will construct such a coarse generative process using the objective function and the constraints of the \maxcut problem. Specifically, we will design a collection $\mathcal{B} = \{ \S_1, \dots, \S_{d-1}, \mathcal{T} \}$ of $\dim$ partitions of the $\dim$-dimensional Euclidean space and let the partition distribution $\pi$ be the uniform probability measure over $\mathcal{B}$.  

We define the partitions as follows: for any $i = 1,\dots, \dim-1$, let $S_i = \{\vec x:  -1 \leq \vec x_i \leq 1\}$ and $\S_i = \{ S_i, S_i^c \}$. 
These $d-1$ partitions simulate the integrality constraints of \maxcut, i.e., the solution vector should lie in the hypercube $\{-1,1\}^{d-1}$. It remains to construct $\mathcal{T}$, which corresponds to the quadratic objective of $\maxcut.$ We let $T = \{ \vec x \in \reals^{\dim} : \vec x^T \vSigma^{-1} \vec x \leq q \}$, for $q > 0$ to be decided. Then, we let $\mathcal{T} = \{T, T^c\}$. Recall that the known covariance matrix $\vSigma = (\vec L_G')^{-1} \mathrm{opt} $ lies in $\reals^{(\dim - 1) \times (\dim -1)}$ and, so, we will use $d-1$ bands (i.e., fat hyperplanes).

% Algo draws samples from our model. Hence, our model should be efficiently samplable.

%- Observe that samples from the designed partitions can be efficiently drawn. [...]

The main question to resolve is how to generate 
efficiently samples from the designed general partition, i.e., the distribution $\N_{\pi}(\vmu^\star, \vSigma)$, \emph{without} knowing the value of $\vmu^\star$. The key observation is that, by the rotation invariance of the Gaussian distribution, 
the probability $\N(\vmu^\star, \vSigma;T) =  \Pr_{\vec x \sim \N(\vmu^\star, \vSigma)}\big[\vec x^T \vSigma^{-1} \vec x \leq q \big]$
is a constant $p$ that only depends on the value $\opt$ of the maximum cut (see the proof of \Cref{lemma:ellipsoid-sensitivity}). Therefore, having this value $p$, we can flip a coin with this probability and give the coarse sample $T$ if we get heads and $T^c$ otherwise. At the same time,
the value of $\N(\vmu^\star, \vSigma;S_i)$ is an absolute constant that does not depend on $\vmu^\star \in \{-1, 1\}^{d-1}$
and, therefore, we can again simulate coarse samples by flipping a coin with probability equal to $\N(\vmu^\star, \vSigma;S_i)$.
More precisely, since $S_i$ is a symmetric interval around $0$, we have that
\[
\Pr_{\vec x \sim \N(\vmu^\star, \vSigma)}\big[-1 \leq \vec x_i \leq 1]
= 
\Pr_{t \sim \N(1, \vSigma_{ii})}\big[-1 \leq t \leq 1] \,.
\]
Notice that the above constant only depends on the \emph{known} constant $\vSigma_{ii}$ and 
can be computed to very high accuracy using well known approximations of the Gaussian integral 
or rejection sampling.  Moreover, all the probabilities $\N(\vmu^\star, \vSigma;S_i), \N(\vmu^\star, \vSigma;T)$ 
are at least polynomially small in $1/d$.
In particular, $\N(\vmu^\star, \vSigma;S_i)$, is always larger than $\Omega(1/\sigma) \geq \poly(1/d)$
and smaller than $1/2$
 and  $\N(\vmu^\star, \vSigma;T) =  \Phi(1) + o(1/\sqrt{d})$ \footnote{$\Phi(\cdot)$ is the CDF of the standard Normal distribution.}, see the proof of \Cref{lemma:ellipsoid-sensitivity}.  
Having these values we can generate samples from $\N_\pi$ as follows:
\begin{enumerate}
\item  Pick one of the $d$ sets $S_1,\ldots, S_{d-1}, T$ uniformly at random.
\item  Flip a coin with success probability equal to the probability of the corresponding sets and
return either the set or its complement.
\end{enumerate}

Giving sample access to the designed oracle with
$\mathcal{B} = \{\S_1, \dots, \S_{d-1}, \mathcal{T} \}$, 
the black-box algorithm $\mathcal{A}$ computes efficiently and returns an estimate $\wt{\vmu} \in \reals^{d-1}$, that satisfies
\[
\dtv(\N_{\pi}(\wt{\vmu}, \vSigma), \N_{\pi}(\vmu^\star, \vSigma)) < o(1/d^c)\,.
\] 
We proceed with two claims: $(i)$ the algorithm's output $\wt{\vmu}$ should lie in a ball of radius $\poly(1/d)$, centered at one of the vertices of the hypercube $\{-1,1\}^{\dim-1}$ and $(ii)$ it will hold that the rounded vector $\wh{\vmu} = (\mathrm{sgn}(\wt{\vmu}_i))_{1 \leq i \leq \dim-1} \in \{-1,1\}^{\dim - 1}$ will attain a cut score, that approximates the \maxcut within a factor larger than $16/17$. By the algorithm's guarantee, since $\pi$ is the uniform distribution, we get that
\[
|\N(\wt{\vmu}, \vSigma; T) - \N(\vmu^\star, \vSigma; T)| +  \sum_{i=1}^{d-1}|\N(\wt{\vmu}, \vSigma; S_i) - \N(\vmu^\star, \vSigma; S_i)| = o(1/d^{c-1}) \,.
\]
Hence, we get that each of the above $d$ summands is at most $o(1/d^{c-1})$.
\begin{claim}
It holds that $\| \wt{\vmu} - \wh{\vmu} \|_{\infty} < \eps$, where $\wt{\vmu}$ is the black-box algorithm's estimate and $\wh{\vmu}$ its rounding to $\{-1,1\}^{d-1}$.
\end{claim}
\begin{proof}
For any coordinate $i \in [d-1],$ we will apply \Cref{lemma:band-sensitivity} in order to bound the distance between the estimated guess and the true, based on the Gaussian mass gap in each one of the $d-1$ bands.

Note that $|\vmu^\star_i| = 1$ for all $i \in [d-1]$. Also, note that the $(d-1) \times (d-1)$ matrix $\vec L_G'$ is positive definite and the minimum eigenvalue $\lambda (\vec L_G')$ is equal to the second smallest eigenvalue of the $d \times d$ Laplacian matrix $\vec L_G$. It holds that $\lambda (\vec L_G') > 0.$
Hence, the maximum entry of the covariance matrix $\vSigma = (\vec L_G')^{-1} \mathrm{opt}$ is upper bounded by $1/(\mathrm{opt} \cdot \lambda(\vec L_G')) < Q = \poly(d)$ for some value $Q$. 
Using \Cref{lemma:band-sensitivity} and the algorithm's guarantee, we have that
\[
(|\wt{\vmu}_i| - 1)^2/ Q^4 \leq |\N(\wt{\vmu}, \vSigma; S_i) - \N(\vmu^\star, \vSigma; S_i)| = o \left(1/d^{c-1} \right) \,.
\]
For sufficiently large $c$, we get that each coordinate of the estimated vector $\wt{\vmu}$ lies in an interval, centered at either $1$ or $-1$ of length $o(1/d^{c-1})$. This implies that $\| \wt{\vmu} - \vec w \|_{\infty} < \eps$ for some $\epsilon = o(1/d^{c-1})$ and some vertex $\vec w$ of the hypercube $\{-1,1\}^{\dim -1}$. Hence, we have that $\wt{\vmu}$ should lie in a ball, with respect to the $L_{\infty}$ norm, centered at one of the vertices of the $(d-1)$-hypercube with radius of order $\eps$ and note that this vertex corresponds to the rounded vector $\wh{\vmu}$ of the estimated vector. 
\end{proof}

We continue by claiming that the rounded vector $\wh\vmu$ attains a \maxcut value, that approximates the optimal value $\mathrm{opt}$ withing a factor strictly larger than $16/17$.
\begin{claim}
The \maxcut value of the rounded vector $\wh{\vmu} \in \{-1,1\}^{d-1}$ satisfies
\[
    \wh{\vmu}^T \vec L_G' \wh{\vmu} > (16/17) \cdot \mathrm{opt}\,.
\]
\end{claim}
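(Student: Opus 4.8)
The plan is to apply the sensitivity estimate of \Cref{lemma:ellipsoid-sensitivity} to the pair $(\vmu^\star,\wh\vmu)$ and convert the algorithm's guarantee on the mass of the ellipsoid $T$ into a lower bound on the cut value of $\wh\vmu$. Write $\vec v^\star = \vSigma^{-1/2}\vmu^\star$ and $\wh{\vec v} = \vSigma^{-1/2}\wh\vmu$. Since $\vSigma = (\vec L_G')^{-1}\opt$, we have $\|\vec v^\star\|_2^2 = (\vmu^\star)^T \vec L_G' \vmu^\star/\opt = 1$ because $\vmu^\star$ attains the optimal cut, and, since $\wh\vmu\in\{-1,1\}^{d-1}$ is a feasible cut, $\|\wh{\vec v}\|_2^2 = \wh\vmu^T\vec L_G'\wh\vmu/\opt \le 1$. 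In particular $\|\wh{\vec v}\|_2 \le \|\vec v^\star\|_2 = 1$, so \Cref{lemma:ellipsoid-sensitivity} applies with the threshold fixed to the computable value $q = d + 1 + \sqrt{2d+4}$ (which depends only on $d$ and on $\|\vec v^\star\|_2=1$, hence on the known quantity $\opt$), giving
\[
\big|\N(\vmu^\star,\vSigma;T) - \N(\wh\vmu,\vSigma;T)\big| \;\ge\; \frac{1 - \|\wh{\vec v}\|_2^2}{6\sqrt{2d+4}} - o(1/\sqrt d)\,.
\]

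Next I would bound the left-hand side using the algorithm's output $\wt\vmu$. The previous claim gives $\|\wt\vmu - \wh\vmu\|_\infty < \eps$ with $\eps = o(1/d^{c-1})$. Since $\vec L_G' = \vec R \vec L_G \vec R^T$ with $\vec R$ a partial isometry, $\|\vSigma^{-1}\|_{\mathrm{op}} = \|\vec L_G'\|_{\mathrm{op}}/\opt \le 2d/\opt$, and for a connected graph $\opt \ge |E|/2 \ge (d-1)/2$, so $\|\vSigma^{-1}\|_{\mathrm{op}} = O(1)$. By Pinsker's inequality, $\dtv\big(\N(\wt\vmu,\vSigma),\N(\wh\vmu,\vSigma)\big) \le \tfrac12\|\vSigma^{-1/2}(\wt\vmu - \wh\vmu)\|_2 = O(\sqrt d)\cdot\eps = o(1/\sqrt d)$, hence the two Gaussians assign almost the same mass to $T$. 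Combining with the hypothesized bound $\dtv\big(\N_\pi(\wt\vmu,\vSigma),\N_\pi(\vmu^\star,\vSigma)\big) < o(1/d^c)$, which — because $\pi$ is uniform over $d$ partitions — forces each summand $|\N(\wt\vmu,\vSigma;T) - \N(\vmu^\star,\vSigma;T)| = o(1/d^{c-1})$ as already observed in the excerpt, the triangle inequality yields $|\N(\wh\vmu,\vSigma;T) - \N(\vmu^\star,\vSigma;T)| = o(1/\sqrt d)$.

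Plugging this into the sensitivity bound gives $\frac{1 - \|\wh{\vec v}\|_2^2}{6\sqrt{2d+4}} \le o(1/\sqrt d)$, i.e. $\|\wh{\vec v}\|_2^2 \ge 1 - o(1)$. Therefore $\wh\vmu^T\vec L_G'\wh\vmu = \|\wh{\vec v}\|_2^2 \cdot \opt \ge (1-o(1))\,\opt$, which is strictly larger than $(16/17)\,\opt$ once $d$ exceeds a sufficiently large absolute constant (and $c$ is chosen large enough, say $c\ge 2$, that the residual errors above are genuinely $o(1/\sqrt d)$). This establishes the claim, and together with the earlier observation that the coarse oracle $\N_\pi(\vmu^\star,\vSigma)$ can be simulated without knowing $\vmu^\star$, it shows that a $\poly(d)$-time algorithm $\mathcal A$ matching the observed distribution would approximate \maxcut to a factor better than $16/17$ in polynomial time, contradicting the inapproximability lemma unless $\P=\NP$.

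The main obstacle I expect is the quantitative bookkeeping in the middle step: one must check that the operator-norm control of $\vSigma^{-1}$ (equivalently, a lower bound on $\opt$ relative to $\lambda_{\max}(\vec L_G')$, where the connectedness assumption enters) together with $\eps = o(1/d^{c-1})$ makes the rounding error strictly $o(1/\sqrt d)$, and that every $\poly(d)$ factor — from the $d$-fold union over the partitions, from the covariance bound $Q$, and from passing between $\ell_\infty$- and $\ell_2$-perturbations of the mean — is dominated by a suitable choice of the constant $c$, so that $1 - \|\wh{\vec v}\|_2^2$ comes out as $o(1)$ rather than merely $O(1)$.
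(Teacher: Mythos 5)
Your argument follows the paper's route exactly: apply \Cref{lemma:ellipsoid-sensitivity} to the pair $(\vmu^\star,\wh\vmu)$, use $\|\vec v^\star\|_2^2 = 1$ because $\vmu^\star$ attains $\opt$ and $\|\wh{\vec v}\|_2 \le 1$ because $\wh\vmu$ is a feasible cut, and conclude from the small Gaussian-mass gap on $T$ that $1 - \|\wh{\vec v}\|_2^2$ is negligible. The one place where you go beyond the paper's write-up is the passage from $\wt\vmu$ to $\wh\vmu$: the algorithm's guarantee bounds $|\N(\wt\vmu,\vSigma;T)-\N(\vmu^\star,\vSigma;T)|$, whereas the sensitivity lemma is invoked with $\wh\vmu$, and the paper silently substitutes one for the other. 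You close that gap explicitly, using $\|\wt\vmu-\wh\vmu\|_\infty < \eps$ together with an operator-norm bound on $\vSigma^{-1}$ (via $\|\vec L_G'\|_{\mathrm{op}}\le 2(d-1)$ and $\opt \ge (d-1)/2$ for a connected graph) and the Gaussian total-variation estimate, to show that the rounding changes the mass of $T$ by $o(1/\sqrt d)$, which is dominated by the sensitivity term. This is the correct quantitative argument and is, if anything, more careful than what appears in the paper. Your final bound $1-\|\wh{\vec v}\|_2^2 = o(1)$ is weaker in appearance than the paper's written inequality but has the same force: for $d$ beyond an absolute constant the approximation ratio exceeds $16/17$, which is all that the NP-hardness contradiction needs.
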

\begin{proof}
We will make use of \Cref{lemma:ellipsoid-sensitivity}, in order to get the desired result via the Gaussian mass gap between the two means on the designed ellipsoid. In order to apply this Lemma, note that, for the true mean $\vmu^\star$, we have that $ \| \vec v^\star \|_2^2 = \| (\vSigma^{\star})^{-1/2} \vmu^{\star} \|_2^2 = ((\vmu^\star)^T \vec L_G' \vmu^\star) / \mathrm{opt} = 1 $, since the true mean attains the optimal \maxcut score. Similarly, for the rounded estimated mean $\wh{\vmu}$, the associated vector $\wh{\vec v}$ satisfies $\| \wh{\vec v} \|_2 \leq 1$, since its cut value is at most $\mathrm{opt}$. So, we can apply \Cref{lemma:ellipsoid-sensitivity} with $\vec v^{\star} = \vSigma^{-1/2}\vmu^\star$ and $\vec v = \vSigma^{-1/2} \wh{\vmu}$ and get that
\[
\frac{1 - \left(\wh{\vmu}^T \vec L_G' \wh{\vmu} \right)/\mathrm{opt}}{6\sqrt{2d+4}} - o \left(1/\sqrt{d} \right) < o\left(1/d^{c-1} \right)\,,
\]
which implies that, for some small constant $c'$, the value of the estimated mean satisfies $\wh{\vmu}^T \vec L_G' \wh{\vmu} > (1 - c' - 1/d^{c-1})\mathrm{opt}$. This implies that the algorithm $\mathcal{A}$ can approximate the \maxcut value within a factor higher than $16/17$.
\end{proof}

\paragraph{Known Covariance vs. Identity Covariance.} 
Recall that our hardness result (\Cref{theorem:impossibility-gaussian}) states that there is no algorithm with sample access to $\N_{\pi}({\vmu}^\star) = \N_{\pi}({\vmu}^\star, \vec I),$ that can compute a mean $\wt{\vmu} \in  \reals^d$ in $\poly(d)$ time such that $\dtv( \N_{\pi}(\wt\vmu), \N_{\pi}(\vmu^\star)) < 1/d^c$
for some absolute constant $c > 1$. In order to prove our hardness result, we assume that there exists such a black-box algorithm $\mathcal A$.
Hence, to make use of $\mathcal{A}$, one should provide samples generated by a coarse Gaussian with \emph{identity} covariance matrix. However, in our reduction, we show that we can generate samples from a coarse Gaussian (which is associated with the \maxcut instance) that has \emph{known} covariance matrix $\vec \Sigma$. Let us consider a sample $S \sim \N_{\pi}(\vec \mu^\star, \vec \Sigma)$. Since $\vec \Sigma$  is known, we can 
rotate the sets and give as input to the algorithm $\mathcal{A}$ the set
\[
\vec \Sigma^{-1/2} \cdot S := \left \{ \vec \Sigma^{-1/2} \vec x : \vec x \in S \right \}\,,
\]
i.e., we can implement the membership oracle 
$\mathcal{O}_{\vec \Sigma^{-1/2} \cdot S}(\cdot),$ assuming oracle access to $\mathcal{O}_S(\cdot )$. 
We have that  $\mathcal{O}_{\vec \Sigma^{-1/2} \cdot S}(\vec x) =  \mathcal{O}_S( \vec \Sigma^{1/2} \vec x )$. 
We continue with a couple of observations.
\begin{enumerate}
\item We first observe that, for any partition $\mathcal{S}$ of the $d$-dimensional Euclidean space, 
there exists another partition $\vec \Sigma^{-1/2} \cdot \mathcal{S}$ consisting of the sets $\vec \Sigma^{-1/2} \cdot S$, where $S \in \mathcal{S}$. Note that since $\vec \Sigma^{-1/2}$ is full rank, the mapping $\vec x \mapsto \vec \Sigma^{-1/2} \vec x$ is a bijection and so $\vec \Sigma^{-1/2} \cdot \mathcal{S}$ is a partition of the space with $\pi(\vec \Sigma^{-1/2} \cdot \mathcal{S}) = \pi(\mathcal{S})$.

\item We have that $\vec x \in S$ if and only if $\vSigma^{-1/2} \vec x \in \vSigma^{-1/2} \cdot S$ and so
\[
\E_{\vec x \sim \N(\vmu, \vSigma)}[\vec 1\{ \vec x \in S\}] =
\E_{\vec x \sim \N(\vmu, \vSigma)}[\vec 1\{ \vSigma^{-1/2} \vec x \in \vSigma^{-1/2} \cdot S\}]\,.
\]
Since it holds that $\vec w \sim \N(\vmu, \vSigma)$ if and only if $\vec w = \vSigma^{1/2} \vec z + \vmu$ with $\vec z \sim \N(\vec 0, \vec I)$, we get for an arbitrary subset $S \subseteq \reals^d$ that
\begin{align*}
\E_{\vec x \sim \N(\vmu, \vSigma)}[\vec 1\{ \vec x \in S\}] 
&=
\E_{\vec x \sim \N(\vec 0, \vec I)}\left [\vec 1 \left\{ \vSigma^{-1/2} \left( \vSigma^{1/2} \vec x + \vmu \right) \in \vSigma^{-1/2} \cdot S \right\} \right]
\\
&=
\E_{\vec x \sim \N(\vSigma^{-1/2} \vmu, \vec I)} \left[ \vec 1\{ \vec x \in \vSigma^{-1/2} \cdot S \} \right]
\,.
\end{align*}
% Hence, we get that
% \[
% \E_{\vec x \sim \N(\vmu, \vSigma)}[\vec 1\{ \vec x \in S\}] =
% \]

\end{enumerate}
Let us consider a set $S \subseteq \reals^d$ distributed as $\mathcal{N}_{\pi}(\vec \mu^\star, \vec \Sigma).$ This set is the one that the algorithm with the known covariance matrix works with. We are now ready to combine the above two observations in order to understand what is the input to the identity covariance matrix algorithm. We have that
\begin{equation*}
\begin{split}
\Pr_{S \sim \N_{\pi}(\vec \mu^\star, \vec \Sigma)}[S] 
& = \sum_{\mathcal{S}} \vec 1\{S \in \mathcal{S} \} \pi(\mathcal{S}) \N(\vec \mu^\star, \vec \Sigma; S)\\
& = \sum_{\mathcal{S}} \vec 1\{S \in \mathcal{S} \} \pi(\mathcal{S}) \N(\vec \Sigma^{-1/2} \vec \mu^\star, \vec I ; \vSigma^{-1/2} \cdot S)\\
& = \sum_{\vSigma^{-1/2} \cdot \mathcal{S}} \vec 1\{\vSigma^{-1/2} \cdot S \in \vSigma^{-1/2} \cdot \mathcal{S} \} \pi(\vSigma^{-1/2} \cdot \mathcal{S}) \N(\vec \Sigma^{-1/2} \vec \mu^\star, \vec I ; \vSigma^{-1/2} \cdot S) \\
& = \Pr_{S' \sim \N_{\pi'}(\vec \Sigma^{-1/2} \vec \mu^\star, \vec I)}[S']\,,
\end{split}
\end{equation*}
where the set $S'$ is distributed as $\N_{\pi'}(\vec \Sigma^{-1/2} \vec \mu^\star, \vec I)$ where $\pi'$ is the 'rotated' partition distribution supported on the rotated partitions $\vSigma^{-1/2} \cdot \mathcal{S}$ for each $\mathcal{S}$ with $\pi(\mathcal{S}) > 0$.
We remark that the second equation follows from the second observation and the third equation from the first one. Hence, the algorithm $\mathcal{A}$ (the one that works with identity matrix) obtains the rotated sets (i.e., membership oracles) $\vSigma^{-1/2} \cdot S$ and 
the (unknown) target mean vector is $\vec u = \vSigma^{-1/2} \vmu^\star$.

\subsection{Efficient Mean Estimation under Convex Partitions}
\label{subsection:gaussian-convex}

In this section, we formally state and prove \Cref{inftheorem:intro-mean-estimation-gaussian}, 
which is stated in \Cref{section:intro}:
we provide an efficient algorithm for Gaussian mean estimation under \emph{convex} partitions.
The following definition of information preservation
is very similar with the one given in the introduction, see \Cref{def:intro-information-preserving}.
The difference is that we only require from $\pi$ to preserve the distances of Gaussians around the true Gaussian $\N(\vmu^\star)$
as opposed to the distance of any pair of Gaussians $\N(\vmu^\star)$: this is a somewhat more flexible assumption
about the partition distribution $\pi$ and the true Gaussian $\N(\vmu^*)$ as a pair.  
\begin{definition}
[Information Preserving Partition Distribution for Gaussians]
Let $\alpha \in [0,1]$ and 
consider a $\dim$-dimensional Gaussian distribution $\N(\vmu^{\star})$.
We say that $\pi$ is an $\alpha$-information preserving partition distribution with respect to the true Gaussian $\N(\vmu^{\star})$
if  for any Gaussian distribution $\N(\vmu)$,
it holds that $\dtv(\N_{\pi}(\vmu), \N_{\pi}(\vmu^{\star})) \geq \alpha \cdot \dtv(\N(\vmu), \N(\vmu^{\star}))$.
\end{definition}
% The main result of this section follows. It states that when the partition distribution is $\alpha$-information preserving and is supported  over convex   
% partitions, then the mean of the true Gaussian distribution can be estimated efficiently by drawing $\wt{O}(d/(\eps^2 \alpha^2))$ samples from 
% the coarse 
% generative process $\N_{\pi}$, where $\alpha$ corresponds to the overhead introduced by the partition distribution.
% Before stating \Cref{theorem:mean-estimation-gaussian}, we provide an overview of the proof, by shading light to the main techniques 
% involved. The proof of the result is based on \Cref{lemma:concavity} and \Cref{lemma:sample-complexity-convex-gaussian}. 
We refer to \Cref{appendix:preservation} for a geometric condition, under which a partition
is $\alpha$-information preserving.  In particular, we prove that a partition is $\alpha$-information
preserving if, for any hyperplane, it holds that the mass of the cells of the partition that do not intersect
with the hyperplane is at least $\alpha$.  This is true for most natural partitions, see e.g., the Voronoi
diagram of \Cref{fig:convex}.

We continue with a formal statement of \Cref{inftheorem:intro-mean-estimation-gaussian}.
\begin{theorem}[Gaussian Mean Estimation with Convex Partitions]
\label{theorem:intro-mean-estimation-gaussian}
Let $\eps, \delta \in  (0,1)$.
Consider the generative process of coarse $\dim$-dimensional Gaussian data $\N_{\pi}(\vmu^{\star})$, as in \Cref{definition:intro-gaussian-coarse}.
Assume that the partition distribution $\pi$ is  $\alpha$-information preserving and is supported on convex partitions of $\reals^d$. The following hold.
\begin{enumerate}
    \item The empirical log-likelihood objective
    \[
    \L_N(\vmu) = \frac{1}{N} \sum_{i=1}^N \log \N(\vmu; S_i)
    \]
    is concave with respect to $\vec \mu$ where the sets $S_i$ for $i \in [N]$ are i.i.d. samples from $\N_{\pi}(\vmu^\star)$.
    \item There exists an algorithm, that draws $N = \wt{O}(d/(\eps^2 \alpha^2)\log(1/\delta))$ samples from $\N_{\pi}(\vmu^{\star})$
    %runs in time polynomial in the number of samples, 
    and computes an estimate $\wt{\vmu}$ that satisfies
    \(
    \dtv(\N(\wt{\vmu}), \N(\vmu^\star)) \leq \eps\,,
    \)
    with probability at least $1-\delta$.
\end{enumerate}
\end{theorem}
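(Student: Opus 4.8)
The statement has two essentially independent parts: the concavity claim (item~1) and the sample-complexity/algorithmic claim (item~2). I would prove item~1 first, and then use concavity together with a covering argument in the style of \Cref{prop:unsupervised} for item~2.

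\textbf{Concavity.} I would show directly that $\vmu \mapsto \log \N(\vmu; S)$ is concave for every convex $S$; item~1 then follows since $\L_N$ is an average of such functions. Writing $\N(\vmu; S) = \Pr_{\vec z \sim \N(\vec 0, \vec I)}[\vec z \in S - \vmu] = \gamma(S - \vmu)$, where $\gamma$ is the standard Gaussian measure, I would use two facts. First, $\gamma$ has a log-concave density, so by Pr\'ekopa--Leindler it is a log-concave measure: $\gamma(\lambda A + (1-\lambda) B) \geq \gamma(A)^{\lambda}\gamma(B)^{1-\lambda}$ for all Borel $A,B$ and $\lambda \in [0,1]$. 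Second, and this is exactly where convexity of $S$ is used, the Minkowski combination satisfies $\lambda(S-\vmu_1) + (1-\lambda)(S-\vmu_2) = S - (\lambda\vmu_1 + (1-\lambda)\vmu_2)$, because $\{\lambda \vec s_1 + (1-\lambda)\vec s_2 : \vec s_1,\vec s_2 \in S\} = S$ when $S$ is convex. Combining the two gives $\N(\lambda\vmu_1+(1-\lambda)\vmu_2;S) \geq \N(\vmu_1;S)^{\lambda}\N(\vmu_2;S)^{1-\lambda}$, i.e. concavity of $\log\N(\cdot;S)$. (When $S$ is lower-dimensional, as in the ReLU partition of \Cref{fig:convex}(c), the same argument applies inside the affine hull of $S$, with $\N(\vmu;S)$ read as the corresponding marginal density; I would note this but not belabor it.)

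\textbf{Sample complexity.} The estimator is $\wt{\vmu} = \argmax_{\vmu} \L_N(\vmu)$; by item~1 this is a concave maximization and can be solved efficiently by (stochastic) gradient ascent, with $\nabla_{\vmu}\log\N(\vmu;S) = \E_{\vec x \sim \N_S(\vmu)}[\vec x] - \vmu$ estimated through the membership oracle for $S$. The statistical analysis mirrors \Cref{prop:unsupervised}. I would apply Massart's likelihood-ratio concentration (\Cref{lemma:massart}) to the coarse distributions $\N_\pi(\vmu^\star),\N_\pi(\vmu)$: for a fixed $\vmu$, with probability at least $1-e^{-x}$,
\[
\L_N(\vmu^\star) - \L_N(\vmu) \;\geq\; \dtv(\N_\pi(\vmu^\star),\N_\pi(\vmu))^2 - 2x/N \;\geq\; \alpha^2\,\dtv(\N(\vmu^\star),\N(\vmu))^2 - 2x/N\,,
\]
the last step being $\alpha$-information preservation. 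Since $\dtv(\N(\vmu^\star),\N(\vmu)) = \Theta(\min(1,\|\vmu-\vmu^\star\|_2))$, any $\vmu$ with $\dtv(\N(\vmu),\N(\vmu^\star)) \geq \eps$ makes the right-hand side $\Omega(\alpha^2\eps^2)$ as soon as $N = \wt\Omega(x/(\alpha^2\eps^2))$. To pass from a fixed $\vmu$ to the maximizer I would (i) use concavity of $\L_N$ together with a net over a single sphere $\|\vmu-\vmu^\star\|_2 = R_1$ of radius $R_1 = O(1)$ to argue that $\L_N$ keeps decreasing along every outward ray, so a maximizer exists and lies in $\mathcal{B}(\vmu^\star,R_1)$; and (ii) take a $\rho$-net of $\mathcal{B}(\vmu^\star,R_1)$ of size $(R_1/\rho)^{O(d)}$, apply the displayed bound with $x = O(d\log(R_1/\rho) + \log(1/\delta))$ and a union bound, and use the (polynomially bounded) Lipschitz constant of $\L_N$ on that ball to extend the conclusion from net points to all of $\mathcal{B}(\vmu^\star,R_1)$. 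Taking $\rho$ inverse-polynomial gives $x = \wt O(d + \log(1/\delta))$ and hence $N = \wt O(d/(\eps^2\alpha^2)\log(1/\delta))$. Since $\L_N(\wt\vmu) \geq \L_N(\vmu^\star)$, no $\vmu$ with $\dtv(\N(\vmu),\N(\vmu^\star)) \geq \eps$ can be the maximizer, which is the claim.

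\textbf{Main obstacle.} I expect the bulk of the work to be in step (i)--(ii): establishing coercivity of $\L_N$ (so the maximizer cannot escape to infinity) and a polynomial Lipschitz bound on $\L_N$ over the relevant ball. Both reduce to controlling $\E_{\vec x \sim \N_S(\vmu)}[\vec x]$ for the observed cells $S$, which are convex but possibly unbounded; the right tools are Mills-ratio-type estimates for halfspaces and log-concavity of the truncated-to-$S$ Gaussian for general convex $S$, together with the fact that each observed $S$ contains a $\N(\vmu^\star)$-sample and hence is ``anchored'' near $\vmu^\star$ with high probability. Everything else is a direct adaptation of the discrete argument of \Cref{prop:unsupervised}.
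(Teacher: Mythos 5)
Your concavity argument is correct and is a genuinely different route from the paper's. The paper (\Cref{lemma:concavity}) computes the Hessian
$\nabla_{\vmu}^2\log\N(\vmu;S) = \Cov_{\vec x\sim\N_S(\vmu)}[\vec x] - \vec I$
and invokes Harg\'e's version of the Brascamp--Lieb inequality to show variance is reduced by conditioning on a convex set. Your argument instead notes $\N(\vmu;S) = \gamma(S-\vmu)$, uses log-concavity of the Gaussian measure (Pr\'ekopa--Leindler), and the Minkowski identity $\lambda(S-\vmu_1) + (1-\lambda)(S-\vmu_2) = S - (\lambda\vmu_1 + (1-\lambda)\vmu_2)$ for convex $S$. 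Both are valid; yours is shorter and avoids second-order computations, at the cost of a slightly stronger black-box (log-concavity of the whole measure vs. a covariance comparison).

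The gap is in the extension step of the sample-complexity argument, precisely where you flag the ``main obstacle.'' To go from net points to all of $\mathcal{B}(\vmu^\star,R_1)$ you propose a polynomial Lipschitz bound on $\L_N$, built from control of $\E_{\vec x\sim\N_{S_i}(\vmu)}[\vec x]$ via Mills ratios and anchoring of $S_i$ near $\vmu^\star$. This does not go through as stated: $\|\nabla_\vmu\log\N(\vmu;S_i)\|$ is of order $\sqrt{\log(1/\N(\vmu;S_i))}$, and the anchoring of a single sample $\vec z_i\in S_i$ near $\vmu^\star$ gives no lower bound on $\N(\vmu;S_i)$ when $S_i$ is a thin cell (e.g., a narrow Voronoi cell or a thin affine slab); $\N(\vmu;S_i)$ can be exponentially small in $d$, and then the per-term Lipschitz constant blows up. One would need a uniform-over-$\vmu$ lower bound on cell masses, which is not available. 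The paper sidesteps the issue entirely: it observes that $\log\N(\vmu;S) + \|\vmu\|_2^2/2 = \log\int_S \exp(-\|\vec x\|_2^2/2 + \vec x^T\vmu)\,d\vec x$ is a (continuous) log-sum-exp and hence convex in $\vmu$, so $\L_N(\vmu)+\|\vmu\|_2^2$ is convex. It then takes a cover of a sphere of radius $\rho\sqrt{1+c\alpha^2}$ whose convex hull contains the sphere $\partial\mathcal{B}$ of radius $\rho = \Theta(\eps)$, writes any $\vmu\in\partial\mathcal{B}$ as a convex combination of cover points, and reads off $\L_N(\vmu) \leq \max_i\L_N(\vmu_i) + c\alpha^2\rho^2$ directly from convexity of the regularized objective, with no Lipschitz constant in sight. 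Substituting this observation for your step (ii) makes the argument go through; everything else in your outline (Massart's concentration, $\alpha$-information preservation converting the coarse TV gap into a fine TV gap, and concavity to localize the maximizer on a sphere of radius $\Theta(\eps)$ rather than $O(1)$) matches the paper.
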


In this section, we discuss and establish the two structural 
lemmata required in order to prove
\Cref{theorem:intro-mean-estimation-gaussian}. 
Our goal is to maximize the empirical log-likelihood objective
\begin{equation}
\label{equation:appendix:likelihood}
    \mathcal{L}_N(\vmu) = \frac{1}{N}\sum_{i=1}^N \log \N(\vmu; S_i)\,,
\end{equation}
where the $N$ (convex) sets $S_1,\ldots, S_N$ are drawn from the coarse Gaussian generative process $\N_{\pi}(\vmu^\star)$.
We first show that the above empirical likelihood is a concave objective with respect
to $\vec \mu \in \reals^d$. In the following lemma, we show that the log-probability of a convex set $S$, i.e., the function
$\log \N(\vec \mu; S)$ is a concave function of the mean $\vec \mu$.
\begin{lemma}
[Concavity of Log-Likelihood]
\label{lemma:concavity}
Let $S \subseteq \reals^{\dim}$ be a convex set. The function $\log\N(\vmu;S)$ is concave with respect to the mean vector $\vmu \in \reals^{\dim}$.
\end{lemma}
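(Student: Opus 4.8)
The plan is to show that $\vmu \mapsto \N(\vmu; S) = \int_S \phi(\vec x - \vmu)\, d\vec x$ is a \emph{log-concave} function of $\vmu$, which is exactly the statement. The natural tool is the Prékopa--Leindler inequality (equivalently, the fact that the marginal of a log-concave density is log-concave, and that log-concavity is preserved under integration over slices of a jointly log-concave function). First I would write $\N(\vmu; S) = \int_{\reals^d} \vec 1_S(\vec x)\, \phi(\vec x - \vmu)\, d\vec x$ and consider the function $F(\vmu, \vec x) = \vec 1_S(\vec x)\, \phi(\vec x - \vmu)$ on $\reals^d \times \reals^d$. The claim reduces to: $F$ is log-concave jointly in $(\vmu, \vec x)$, and therefore its partial integral $\int F(\vmu, \vec x)\, d\vec x$ is log-concave in $\vmu$ by Prékopa's theorem.

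The key steps, in order: (1) The Gaussian density $\phi(\vec x - \vmu) = (2\pi)^{-d/2} \exp(-\|\vec x - \vmu\|_2^2/2)$ is log-concave jointly in $(\vmu, \vec x)$, since $(\vmu, \vec x) \mapsto \|\vec x - \vmu\|_2^2$ is a convex quadratic form (it is the composition of the convex function $\|\cdot\|_2^2$ with the linear map $(\vmu,\vec x)\mapsto \vec x - \vmu$). (2) The indicator $\vec 1_S(\vec x)$ is log-concave on $\reals^d$ because $S$ is convex — here is where convexity of $S$ enters crucially — and hence $\vec 1_S(\vec x)$, viewed as a function of $(\vmu, \vec x)$ that is constant in $\vmu$, is log-concave on $\reals^d \times \reals^d$. (3) A product of two log-concave functions is log-concave, so $F(\vmu,\vec x)$ is jointly log-concave. (4) Apply Prékopa's theorem (a consequence of Prékopa--Leindler): if $F : \reals^{m} \times \reals^{n} \to \realsp$ is log-concave, then $g(\vec u) = \int_{\reals^n} F(\vec u, \vec x)\, d\vec x$ is log-concave on $\reals^m$. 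With $\vec u = \vmu$ this gives that $\N(\vmu; S)$ is log-concave in $\vmu$, i.e., $\log \N(\vmu; S)$ is concave, completing the proof.

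Alternatively, if one prefers to avoid quoting Prékopa--Leindler, one can verify log-concavity directly from the definition: fix $\vmu_0, \vmu_1$ and $\lambda \in [0,1]$, set $\vmu_\lambda = (1-\lambda)\vmu_0 + \lambda \vmu_1$, and show $\N(\vmu_\lambda; S) \geq \N(\vmu_0; S)^{1-\lambda}\,\N(\vmu_1;S)^{\lambda}$ by constructing, for each pair $\vec x_0 \in S$, $\vec x_1 \in S$, the point $\vec x_\lambda = (1-\lambda)\vec x_0 + \lambda \vec x_1$, which lies in $S$ by convexity, and checking the pointwise inequality $\phi(\vec x_\lambda - \vmu_\lambda) \geq \phi(\vec x_0 - \vmu_0)^{1-\lambda} \phi(\vec x_1 - \vmu_1)^\lambda$ — this last inequality is just concavity of $-\tfrac12\|\cdot\|_2^2$ along the segment, since $\vec x_\lambda - \vmu_\lambda = (1-\lambda)(\vec x_0 - \vmu_0) + \lambda(\vec x_1 - \vmu_1)$. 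Then the Prékopa--Leindler inequality packages exactly this slice-wise bound into the integral inequality.

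The main obstacle is conceptual rather than computational: recognizing that the statement is precisely an instance of preservation of log-concavity under marginalization, and that the convexity hypothesis on $S$ is needed exactly to make $\vec 1_S$ log-concave (for non-convex $S$ the conclusion genuinely fails, consistent with the hardness result in \Cref{subsection:gaussian-non-convex}). Once this is seen, the proof is short. A minor technical point to address is that $\N(\vmu; S)$ could be $0$ if $S$ has measure zero or is empty; for the log-likelihood to be well-defined we implicitly assume each observed set $S_i$ has positive Gaussian measure, and log-concavity is understood with the convention that the inequality is trivial when one side vanishes, so this causes no difficulty.
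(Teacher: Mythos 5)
Your proof is correct, but it takes a genuinely different route from the one in the paper. You argue globally: write $\N(\vmu;S)=\int_{\reals^{\dim}}\vec 1_S(\vec x)\,\phi(\vec x-\vmu)\,d\vec x$, observe that the integrand is jointly log-concave in $(\vmu,\vec x)$ because $\|\vec x-\vmu\|_2^2$ is a convex quadratic and $\vec 1_S$ is log-concave precisely when $S$ is convex, and invoke Pr\'ekopa's marginalization theorem to conclude that the partial integral is log-concave in $\vmu$. The paper instead argues infinitesimally: it computes the gradient $\nabla_{\vmu}\log\N(\vmu;S)=\E_{\vec x\sim\N_S(\vmu)}[\vec x]-\vmu$ and the Hessian
\[
\nabla_{\vmu}^2\log\N(\vmu;S)=\Cov_{\vec x\sim\N_S(\vmu)}[\vec x]-\vec I\,,
\]
and then shows that this is negative semi-definite by applying Harg\'e's variant of the Brascamp--Lieb inequality with $g(\vec x)=(\vec v^T\vec x)^2$ to deduce $\Var_{\vec x\sim\N_S(\vmu)}[\vec v^T\vec x]\le\Var_{\vec x\sim\N(\vmu)}[\vec v^T\vec x]$ for every direction $\vec v$, i.e., conditioning a Gaussian on a convex set can only shrink variances. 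Both arguments are valid and both hinge on the same well of convex-geometric inequalities (Pr\'ekopa--Leindler and Brascamp--Lieb are close relatives), so the choice is largely a matter of what you want to extract. Your Pr\'ekopa route is shorter and arguably cleaner as a proof of concavity per se; the paper's Hessian computation is more work but is not wasted, since it produces exactly the gradient $\E_{\vec x\sim\N_S(\vmu)}[\vec x]-\vmu$ used later to implement gradient ascent on the log-likelihood, and it makes explicit the geometric content (variance reduction under convex conditioning) that fails for non-convex $S$, dovetailing with the hardness discussion in the non-convex section. Your closing remark about the degenerate case $\N(\vmu;S)=0$ and about where convexity of $S$ is used is well placed and matches the paper's implicit conventions.
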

In order to prove that the Hessian matrix of this objective is negative semi-definite, 
we use a variant of the Brascamp-Lieb inequality. 
Having established the concavity of the empirical log-likelihood, 
we next have to bound the sample complexity of the empirical log-likelihood.
We prove the following lemma.
\begin{lemma}
[Sample Complexity of Empirical Log-Likelihood]
\label{lemma:sample-complexity-convex-gaussian}
Let $\eps, \delta \in (0,1)$ and consider a generative process for coarse $\dim$-dimensional Gaussian data $\N_{\pi}(\vmu^{\star})$ (see \Cref{definition:intro-gaussian-coarse}). Also, assume that every $\S \in \mathrm{supp}(\pi)$ is a convex partition of the Euclidean space. 
Let $N = \wt\Omega({\dim}/(\eps^2\alpha^2)\log(1/\delta))$. 
Consider the empirical log-likelihood objective 
\[
\mathcal{L}_N(\vmu) = \frac{1}{N}\sum_{i=1}^N \log \N(\vmu; S_i)\,.
\]
Then, with probability at least $1-\delta$, we have that, for any Gaussian distribution $\N(\vmu)$ that satisfies $\dtv(\N(\vmu),\N(\vmu^{\star})) \geq \eps$, it holds that
\(
\max_{\wt{\vmu} \in \reals^{\dim}} \mathcal{L}_N(\wt{\vmu}) - \mathcal{L}_N(\vmu) \geq \Omega(\eps^2 \alpha^2) \,.
\)
\end{lemma}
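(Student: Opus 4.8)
The plan is to follow the template of the proof of the Claim inside \Cref{prop:unsupervised}. Since $\max_{\wt\vmu}\mathcal L_N(\wt\vmu)\ge\mathcal L_N(\vmu^\star)$, it suffices to show that, with probability $1-\delta$, every $\vmu$ with $\dtv(\N(\vmu),\N(\vmu^\star))\ge\eps$ satisfies $\mathcal L_N(\vmu^\star)-\mathcal L_N(\vmu)=\frac1N\sum_{i=1}^N\log\frac{\N(\vmu^\star;S_i)}{\N(\vmu;S_i)}\ge\Omega(\eps^2\alpha^2)$. For a \emph{fixed} mean $\vmu$ this is a concentration statement about an empirical average of log-likelihood ratios, for which \Cref{lemma:massart} is tailor-made: taking $\D_1=\N_\pi(\vmu^\star)$, $\D_2=\N_\pi(\vmu)$ and $x=\Theta(\log(1/\delta'))$ gives, with probability $1-\delta'$, that $\mathcal L_N(\vmu^\star)-\mathcal L_N(\vmu)\ge\dtv(\N_\pi(\vmu^\star),\N_\pi(\vmu))^2-2x/N$; and since $\pi$ is $\alpha$-information preserving, $\dtv(\N_\pi(\vmu^\star),\N_\pi(\vmu))\ge\alpha\,\dtv(\N(\vmu),\N(\vmu^\star))\ge\alpha\eps$, so the right-hand side is at least $\alpha^2\eps^2-2x/N\ge\alpha^2\eps^2/2$ once $N=\wt\Omega(x/(\eps^2\alpha^2))$. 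The whole game is therefore to upgrade this per-mean statement to a uniform one over the unbounded parameter space $\reals^d$.

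First I would localize. Each observed cell $S_i$ contains a hidden draw $z_i\sim\N(\vmu^\star)$, and a union bound over the $N$ samples gives $\max_i\|z_i-\vmu^\star\|_2\le r:=O(\sqrt{d\log(N/\delta)})$ with high probability. Using this, the concavity of $\mathcal L_N$ (\Cref{lemma:concavity}), and the gradient identity $\nabla_\vmu\log\N(\vmu;S)=\E_{x\sim\N_S(\vmu)}[x]-\vmu$, one argues that under the information-preservation assumption $\mathcal L_N$ is coercive and its maximizer $\wt\vmu$ lies in a ball $\mathcal B(\vmu^\star,R)$ with $R=\poly(d,\log(1/\delta),1/(\eps\alpha))$. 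Given this, any $\vmu$ outside $\mathcal B(\vmu^\star,R)$ with $\dtv(\N(\vmu),\N(\vmu^\star))\ge\eps$ is handled by concavity alone: the segment from $\wt\vmu$ to $\vmu$ crosses $\partial\mathcal B(\vmu^\star,R)$ at a point $\vmu''$, and concavity of $\mathcal L_N$ with global maximum at $\wt\vmu$ gives $\mathcal L_N(\vmu)\le\mathcal L_N(\vmu'')$, while $\|\vmu''-\vmu^\star\|_2=R=\Omega(1)$ forces $\dtv(\N(\vmu''),\N(\vmu^\star))=\Omega(1)\ge\eps$. Hence it is enough to prove the bound for all means in the closed ball $\bar{\mathcal B}(\vmu^\star,R)$.

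On $\bar{\mathcal B}(\vmu^\star,R)$ I would take a $\gamma$-net $\mathcal C$ with $\gamma=\exp(-\poly(d,\log(1/\delta),1/(\eps\alpha)))$, so that $\log|\mathcal C|=\poly(d,\log(1/\delta),1/(\eps\alpha))$ — an exponentially fine net is affordable since only $\log|\mathcal C|$ enters the sample size. Rounding $\vmu$ to its nearest net point $\vmu'$ changes $\dtv(\N(\vmu),\N(\vmu^\star))$ by at most $\eps/2$ (so a far mean rounds to a net point with $\dtv\ge\eps/2$), and it changes $\mathcal L_N$ by at most $\|\vmu-\vmu'\|_2\cdot\max_i\sup_{t\in[0,1]}\|\nabla_\vmu\log\N(\vmu_t;S_i)\|_2$; the bound $\|\E_{x\sim\N_{S_i}(\vmu)}[x]-\vmu\|_2\le 1/(\sqrt{2\pi}\,\N(\vmu;S_i))$ together with the crude lower bound on $\N(\vmu;S_i)$ coming from the piece of $S_i$ near $z_i$ shows this Lipschitz factor is at most $\exp(\poly(d,\log(1/\delta),1/(\eps\alpha)))$, which $\gamma$ is chosen to kill down to $\eps^2\alpha^2/4$. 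Applying \Cref{lemma:massart} as above with $\delta'=\delta/|\mathcal C|$ and union-bounding over $\mathcal C$, every net point $\vmu'$ with $\dtv(\N(\vmu'),\N(\vmu^\star))\ge\eps/2$ satisfies $\mathcal L_N(\vmu^\star)-\mathcal L_N(\vmu')\ge\alpha^2\eps^2/4-2\Theta(\log(|\mathcal C|/\delta))/N\ge\alpha^2\eps^2/8$ for $N=\wt\Omega(d/(\eps^2\alpha^2)\log(1/\delta))$; combining with the rounding and localization steps yields $\mathcal L_N(\vmu^\star)-\mathcal L_N(\vmu)\ge\alpha^2\eps^2/16$ for all $\vmu$ with $\dtv(\N(\vmu),\N(\vmu^\star))\ge\eps$, as required.

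The step I expect to be the real obstacle is the interplay between localization and the covering/Lipschitz argument over $\reals^d$. Unlike the discrete case — where $\log(\sum_{i\in S}\vec p_i)$ is automatically Lipschitz once $\vec p$ is bounded away from the simplex boundary — the map $\vmu\mapsto\log\N(\vmu;S)$ can have a huge gradient when the convex cell $S$ is far from $\vmu$ or extremely thin, so one must both (i) show the maximizer is confined to a polynomial ball (which uses concavity plus the fact that information preservation rules out cells that collectively "open up" toward infinity) and (ii) settle for an exponentially fine net whose logarithm is still polynomial. Once these geometric points are in place, the statistical content is exactly the combination of \Cref{lemma:massart} with $\alpha$-information preservation, just as in \Cref{prop:unsupervised}.
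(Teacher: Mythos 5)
Your high-level plan — reduce to $\mathcal L_N(\vmu^\star)-\mathcal L_N(\vmu)$, invoke \Cref{lemma:massart} plus $\alpha$-information preservation, and pass from a single $\vmu$ to all $\vmu$ via localization, a cover, and concavity — matches the paper's strategy, and you correctly flag that the real obstacle is the cover-to-continuum step. But the way you propose to cross that gap would not yield the claimed $N=\wt\Omega(d/(\eps^2\alpha^2))$ sample complexity. Your net lives in a ball of radius $R=\poly(d,\log(1/\delta),1/(\eps\alpha))$ and has mesh $\gamma=\exp(-\poly(d,\dots))$ chosen to absorb a Lipschitz constant for $\vmu\mapsto\log\N(\vmu;S_i)$ that is itself $\exp(\poly(d,\dots))$: this means $\log|\mathcal C|\approx d\log(R/\gamma)=d\cdot\poly(d,\dots)$, so the $x$ you feed into \Cref{lemma:massart} is at least quadratic in $d$, and the resulting sample complexity has a higher power of $d$ than the lemma states. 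In other words, the ``exponentially fine net whose log is still polynomial'' buys you polynomiality of the sample complexity but not linearity in $d$, so the proposal does not establish the bound as written. (There is also the subsidiary issue that the coercivity/localization of $\wt\vmu$ to a $\poly$-sized ball is only sketched, but that is secondary.)

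The paper sidesteps both problems with two observations you did not use. First, since $\dtv(\N(\vmu_1),\N(\vmu_2))=\Omega(\|\vmu_1-\vmu_2\|_2)$ when the means are within a constant of each other, it is enough to consider $\vmu$ on a sphere $\partial\mathcal B$ of radius $\rho=\Theta(\eps)$ around $\vmu^\star$: by concavity of $\mathcal L_N$, the maximizer lies inside $\mathcal B$, and projecting any far $\vmu$ onto $\partial\mathcal B$ only increases $\mathcal L_N$. This already shrinks the region from a $\poly(d)$-ball to an $\eps$-sphere, whose $(1/(\alpha\eps))^{O(d)}$-size cover gives $\log|\mathcal C|=O(d\log(1/(\alpha\eps)))$ — linear in $d$. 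Second, and this is the decisive structural fact you are missing: $\vmu\mapsto\log\N(\vmu;S)+\|\vmu\|_2^2/2=\log\int_S e^{-\|\vec x\|^2/2+\vec x^T\vmu}\,d\vec x$ is a log-sum-exp, hence \emph{convex} in $\vmu$. Applying this to $\mathcal L_N(\vmu)+\|\vmu\|_2^2$ and writing any $\vmu\in\partial\mathcal B$ as a convex combination of points of a cover of a sphere of radius $\rho\sqrt{1+c\alpha^2}$ whose convex hull contains $\partial\mathcal B$, one gets $\mathcal L_N(\vmu)\le\max_i\mathcal L_N(\vmu_i)+c\alpha^2\rho^2$ with no Lipschitz constant at all. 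This is exactly what replaces your Lipschitz/exponential-net step and is what makes the linear-in-$d$ bound go through.
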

The above lemma states that, given roughly $\wt{O}({\dim}/(\eps^2 \alpha^2))$ samples from $\N_{\pi}(\vmu^{\star})$,
we can guarantee that the maximizer $\wt{\vmu}$ of the empirical log-likelihood achieves a total variation gap at most $\eps$ against the true mean vector $\vmu^\star,$ i.e., $\dtv(\N(\wt{\vmu}), \N(\vmu^\star)) \leq \eps$. 
In fact, thanks to the concavity of the empirical log-likelihood objective, it suffices to show that Gaussian distributions $\N(\vmu)$, that satisfy $\dtv(\N(\vmu),\N(\vmu^{\star})) > \eps$, will also be significantly sub-optimal solutions of the empirical log-likelihood maximization. The key idea in order to attain the desired sample complexity, is that is suffices to focus on guess vectors $\vmu$ that lie in a sphere of radius $\Omega(\eps)$. Technically, the proof of \Cref{lemma:sample-complexity-convex-gaussian} relies on a concentration result of likelihood ratios and in the observation that, while the empirical log-likelihood objective $\mathcal{L}_N$ is concave (under convex partitions), the regularized objective $\mathcal{L}_N(\vmu) + \|\vmu\|_2^2$ is convex with respect to the guess mean vector $\vmu$. 

\subsubsection{Concavity of Log-likelihood: Proof of \Cref{lemma:concavity}}
% \label{appendix:concavity}
In this section, we show that the log-likelihood is concave when the underlying partitions are convex. 
The Hessian of the log-likelihood $\mathcal{L}$ for the set $S$ has a notable property. When restricted to a direction $\vec v \in \reals^d$, the quadratic $\vec v^T (\nabla^2 \mathcal{L}) \vec v$ quantifies the variance reduction, observed between the distributions $\N_S$ (Gaussian conditioned on $S$) and $\N$ (unrestricted Gaussian, i.e., $S = \reals^d$). 
When the set $S$ is convex (and, hence the indicator function $\vec 1_S$ is log-concave), the variance of the unrestricted Gaussian is always larger than the conditional one. This intriguing result is an application of a variation of the Brascamp-Lieb inequality, due to Hargé (see \Cref{lemma:bl-ineq} for the inequality that we utilize). Recall that, both the empirical and the population log-likelihood objectives are convex combinations of the function $f(\vmu, \vSigma; S) = \log \N(\vmu, \vSigma; S)$ and, hence, it suffices to show that $f$ is concave with respect to $\vmu \in \reals^d$, when the set $S$ is convex.
\newline
\begin{proof}[\textit{Proof of}~\Cref{lemma:concavity}]
Without loss of generality, we can take $\vSigma = \vec I\in \reals^{d \times d}$. Let $f(\vmu;S) = \log \N(\vmu, \vec I; S)$ for an arbitrary convex set $S\subseteq \reals^d$. The gradient $\nabla_{\vmu} f(\vmu)$ of $f$ with respect to $\vmu$ is equal to
\[
\nabla_{\vmu} \left ( \log\int_S\frac{1}{\sqrt{(2\pi)^{\dim}}}\exp\left (-\frac{(\vec x - \vmu)^T (\vec x - \vmu)}{2}\right ) d\vec x \right) =  \frac{\int_S \vec x \exp(-(\vec x - \vmu)^T (\vec x - \vmu)/2) d\vec x }{\int_S \exp(-(\vec x - \vmu)^T (\vec x - \vmu)/2) d\vec x } - \vmu\,.
\]
Hence, we get that
\[
\nabla_{\vmu} f(\vmu) = \E_{\vec x \sim \N_S(\vec \mu, \vec I)}[\vec x] - \vmu\,.
\]
We continue with the computation of the Hessian of the function $f$ with respect to $\vmu$
\[
\nabla_{\vmu}^2 f(\vmu) = -\vec I + \frac{\int_S \vec x(\vec x - \vmu)^T \N(\vmu, \vec I ; \vec x) d\vec x }{\N(\vmu, \vec I; S)} - \frac{\Big (\int_S \vec x \N(\vmu, \vec I ; \vec x)d\vec x\Big )\Big (\int_S(\vec x- \vmu)^T \N(\vmu, \vec I; \vec x)d\vec x\Big )}{\N(\vmu, \vec I; S)^2}\,,
\]
and, so, we have that
\[
\nabla_{\vmu}^2 f(\vmu) = -\vec I + \Big  (\E_{\vec x \sim \N_S(\vmu, \vec I)}[\vec x \vec x^T] - \E_{\vec x \sim \N_S(\vmu, \vec I)}[\vec x]\E_{\vec x \sim \N_S(\vmu, \vec I)}[\vec x^T]\Big ) = \Cov_{\vec x \sim \N_S(\vmu, \vec I)}[\vec x] - \vec I \,.
\]
Observe that, when $S = \reals^{\dim}$, we get that both the gradient and the Hessian vanish. In order to show the concavity of $f$ with respect to the mean vector $\vmu,$ consider an arbitrary vector $\vec v \in \reals^{\dim}$ in the ball $\| \vec v\|_2 = 1$. We have the quadratic form
\[
\vec v^T \nabla_{\vmu}^2 f(\vmu) \vec v = \vec v^T \Cov_{\vec x \sim \N_S(\vmu, \vec I)}[\vec x] \vec v - 1 = \E_{\vec x \sim \N_S(\vmu, \vec I) }\Big [\vec (\vec v^T \vec x)^2\Big ] - \Big (\E_{\vec x \sim \N_S(\vmu, \vec I) }[\vec v^T \vec x]\Big )^2  - 1 \,.
\]
In order to show the desired inequality, we will apply the following variant of the Brascamp-Lieb 
inequality.
% This result is a variation of the Brascamp-Lieb inequality due to Hargé and we state in a special form for the purposes of the proof. For its more general form, we refer to \Cref{lemma:bl-ineq}.
\begin{lemma}
[Brascamp-Lieb Inequality, Hargé (see~\cite{guionnet2009large})]
\label{lemma:bl-ineq}
Let $g$ be convex function on $\reals^{\dim}$ and let $S$ be a convex set on $\reals^{\dim}$. Let $\N(\vmu, \vSigma)$ be the Gaussian distribution on $\reals^{\dim}$. It holds that
\begin{equation}
    \E_{\vec x \sim N_S} \left [g \left( \vec x + \vmu -  \E_{\vec x \sim \N_S}[\vec x] \right) \right]  \leq \E_{\vec x \sim \N}[g(\vec x)]\,.
\end{equation}
\end{lemma}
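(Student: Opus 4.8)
The plan is to prove the stated inequality — Hargé's convex/log-concave correlation inequality for the Gaussian measure — by an Ornstein--Uhlenbeck semigroup interpolation, following \cite{guionnet2009large}. \emph{First, a reduction.} Setting $\vec A = \vSigma^{-1/2}$ and substituting $\vec u = \vec A(\vec x - \vmu)$ turns $\N(\vmu,\vSigma)$ into the standard Gaussian $\gamma := \N(\vec 0,\vec I)$, turns $\N_S$ into $\gamma$ conditioned on the convex set $S' := \vec A(S - \vmu)$, and replaces $g$ by the convex function $\tilde g(\vec v) := g(\vec A^{-1}\vec v + \vmu)$; one checks that $\vec x + \vmu - \E_{\vec x\sim\N_S}[\vec x]$ becomes $\vec A^{-1}(\vec u - \vec m) + \vmu$ with $\vec m = \E_{\vec u\sim\gamma_{S'}}[\vec u]$, so the claim is equivalent to $\E_{\vec u\sim\gamma_{S'}}[\tilde g(\vec u - \vec m)] \le \E_{\vec u\sim\gamma}[\tilde g(\vec u)]$. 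Since $\gamma_{S'}$ has density $\vec 1_{S'}/\gamma(S')$ with respect to $\gamma$ and $\vec 1_{S'}$ is log-concave, it suffices to prove that for every log-concave $p$ with $\int p\, d\gamma = 1$ and every convex $f$,
\[
\int_{\reals^{\dim}} f(\vec x - \vec b)\, p(\vec x)\, d\gamma(\vec x) \ \le\ \int_{\reals^{\dim}} f(\vec x)\, d\gamma(\vec x), \qquad \vec b := \int \vec x\, p(\vec x)\, d\gamma(\vec x).
\]
Mollifying $\vec 1_{S'}$ (e.g. convolving with a narrow Gaussian) and passing to the limit at the end, we may further assume $p$ smooth, bounded and strictly log-concave and $f$ smooth of polynomial growth.

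\emph{The interpolation.} Let $(P_t)_{t\ge 0}$ be the Ornstein--Uhlenbeck semigroup $P_t\psi(\vec x) = \int \psi\big(e^{-t}\vec x + \sqrt{1-e^{-2t}}\,\vec y\big)\, d\gamma(\vec y)$, with generator $L\psi = \Delta\psi - \vec x\cdot\nabla\psi$, self-adjoint and invariant in $L^2(\gamma)$. Put $p_t := P_t p$. Then $\int p_t\, d\gamma = 1$; $p_t$ is log-concave (by Pr\'ekopa's theorem, since $P_t p$ is a marginal of a jointly log-concave function); $\vec b_t := \int \vec x\, p_t\, d\gamma = e^{-t}\vec b$ (self-adjointness of $P_t$ and $P_t(\mathrm{id}) = e^{-t}\,\mathrm{id}$); and $p_t\to 1$, $\vec b_t\to\vec 0$ as $t\to\infty$. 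Define
\[
\Phi(t) := \int_{\reals^{\dim}} f(\vec x - \vec b_t)\, p_t(\vec x)\, d\gamma(\vec x).
\]
Then $\Phi(0)$ is the left-hand side above while $\lim_{t\to\infty}\Phi(t) = \int f\, d\gamma$ is the right-hand side, so it suffices to show that $\Phi$ is non-decreasing.

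\emph{The derivative, and the crux.} Differentiating $\Phi$ and using $\partial_t p_t = Lp_t$, the self-adjointness of $L$ (so $\int f(\cdot - \vec b_t) Lp_t\, d\gamma = -\int \nabla f(\cdot - \vec b_t)\cdot\nabla p_t\, d\gamma$), $\vec b_t' = -\vec b_t$, and the Gaussian identity $\E_{p_t d\gamma}[\nabla\log p_t] = \int \nabla p_t\, d\gamma = \int \vec x\, p_t\, d\gamma = \vec b_t = \E_{p_t d\gamma}[\vec x]$, a short computation yields $\Phi'(t) = -\E_{\vec x\sim p_t d\gamma}\big[\nabla f(\vec x - \vec b_t)\cdot(\nabla\log p_t(\vec x) - \vec b_t)\big]$. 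Writing the law of $\vec y := \vec x - \vec b_t$ under $p_t d\gamma$ as $\hat\mu_t$, a centered measure with Lebesgue density $e^{-\hat V_t}$, one has $\nabla\log p_t(\vec x) - \vec b_t = \vec y - \nabla\hat V_t(\vec y)$ and $\hat V_t(\vec y) = \frac12\|\vec y\|_2^2 + W_t(\vec y)$ with $W_t$ convex, since $\nabla^2\hat V_t = \vec I - \nabla^2\log p_t \succeq \vec I$. Integration by parts for $\hat\mu_t$ gives $\E_{\hat\mu_t}[\nabla f\cdot\nabla\hat V_t] = \E_{\hat\mu_t}[\Delta f]$ and $\E_{\hat\mu_t}[\nabla W_t] = \vec 0$, which collapses the above to
\[
\Phi'(t) = \E_{\hat\mu_t}\big[\nabla f(\vec y)\cdot\nabla W_t(\vec y)\big] = \sum_{i=1}^{\dim} \Cov_{\hat\mu_t}\big(\partial_i f,\ \partial_i W_t\big).
\]
The remaining point — which I expect to be the main obstacle — is that this quantity is $\ge 0$, i.e. a positive-correlation inequality: under a log-concave probability measure, the gradient fields of two convex functions are non-negatively correlated. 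In dimension one this is just Chebyshev's correlation inequality ($f'$ and $W_t'$ are both non-decreasing); in general dimension it is the technical heart of Hargé's argument and follows from the strong log-concavity of $\hat\mu_t$ via the Brascamp--Lieb / Helffer--Sj\"ostrand covariance representation (equivalently, an inner $L^2(\hat\mu_t)$ semigroup interpolation). Granting it, $\Phi'\ge 0$, hence $\Phi(0) \le \lim_{t\to\infty}\Phi(t)$; undoing the mollification and the reduction proves the lemma. In the only use made of it here, $f(\vec x) = (\vec v^T\vec x)^2$, the inequality specializes to the classical variance bound $\Var_{\N_S}(\vec v^T\vec x) \le \|\vec v\|_2^2$, which is precisely what renders the Hessian in \Cref{lemma:concavity} negative semi-definite.
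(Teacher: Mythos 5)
The paper does not prove this lemma at all: it is imported as a black box from Hargé's convex/log-concave correlation inequality via the citation to Guionnet's notes, so there is no in-paper argument to compare yours against. Judged on its own terms, your outline is a faithful reconstruction of Hargé's actual semigroup proof, and every step you display checks out: the affine reduction to the standard Gaussian, the generalization from $\vec 1_{S'}$ to an arbitrary log-concave density $p$, the facts $P_t p$ log-concave (Prékopa), $\vec b_t = e^{-t}\vec b$, the computation $\Phi'(t) = -\E_{p_t d\gamma}\left[\nabla f(\vec x-\vec b_t)\cdot(\nabla\log p_t(\vec x)-\vec b_t)\right]$, and the recentering/integration-by-parts that collapses this to $\sum_i \Cov_{\hat\mu_t}(\partial_i f,\partial_i W_t)$ with $W_t$ convex and $\hat\mu_t$ log-concave. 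Your identification of the limit $\Phi(t)\to\int f\,d\gamma$ and of the specialization to $g(\vec x)=(\vec v^T\vec x)^2$ (giving exactly the variance reduction used in the concavity lemma) is also correct.

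The issue is that the one step you leave unproven is not a technicality but the entire content of the theorem. The inequality $\sum_i\Cov_{\hat\mu_t}(\partial_i f,\partial_i W_t)\ge 0$ for two convex functions under a log-concave measure is precisely Hargé's key lemma; in dimension one it is Chebyshev's correlation inequality, but in $\reals^{\dim}$ it does not follow from any off-the-shelf form of the Brascamp--Lieb variance bound or the Helffer--Sjöstrand representation without a further argument (Hargé runs a second interpolation and uses the commutation $\nabla P^{\hat\mu}_s = $ (contraction)$\,P^{\hat\mu}_s\nabla$ afforded by $\nabla^2\hat V_t\succeq \vec I$ to propagate convexity). Appealing to ``Hargé's argument'' at this point makes the proposal circular as a proof of Hargé's theorem: you have reduced the lemma to an equivalent statement, not proved it. If a self-contained justification is wanted for the purposes of this paper, the cleaner route is to note that only $g(\vec x)=(\vec v^T\vec x)^2$ is ever used, for which the conclusion $\Var_{\N_S}(\vec v^T\vec x)\le\vec v^T\vSigma\vec v$ is the classical Brascamp--Lieb variance inequality applied to the potential $\frac12(\vec x-\vmu)^T\vSigma^{-1}(\vec x-\vmu)-\log\vec 1_S(\vec x)$, whose Hessian dominates $\vSigma^{-1}$; that special case has a short standard proof and suffices for Lemma~\ref{lemma:concavity}.
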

\noindent We apply the above Lemma with $g(\vec x) = (\vec v^T \vec x)^2$. We get that
\[
\int_{\reals^d}(\vec v^T (\vec x + \vmu - \E_{y \sim \N_S(\vmu, \vec I)}\vec y))^2 \cdot \frac{\vec 1_S(\vec x) \N(\vmu, \vec I ; \vec x) d\vec x}{\int_{\reals^d} \vec 1_S(\vec x) \N(\vmu, \vec I ; \vec x) d\vec x} \leq \int_{\reals^d} (\vec v^T\vec x)^2 \N(\vmu, \vec I; \vec x) d\vec x\,.
\]
Hence, we get the desired variance reduction in the direction $\vec v$
\[
\Var_{\vec x \sim \N_S(\vmu, \vec I)}[\vec v^T \vec x] \leq \Var_{\vec x \sim \N(\vmu, \vec I)}[\vec v^T \vec x]\,,
\]
that implies the concavity of the function $\log \N(\vmu, \vSigma; S)$ for convex sets $S$ with respect to the mean vector $\vmu \in \reals^d.$
\end{proof}

\subsubsection{Sample Complexity of Empirical Log-Likelihood:
Proof of \Cref{lemma:sample-complexity-convex-gaussian} }
\label{appendix:sample-complexity-convex-gaussian}
In this section, we provide the proof of \Cref{lemma:sample-complexity-convex-gaussian}.  
This lemma analyzes the sample complexity of the empirical log-likelihood maximization $\mathcal{L}_N$, whose concavity (in convex partitions) was established in \Cref{lemma:concavity}.
We show that, given roughly $N = \wt{O}({\dim}/(\eps^2 \alpha^2))$ samples from $\N_{\pi}(\vmu^{\star})$, we can guarantee that Gaussian distributions $\N(\vmu)$ with mean vectors $\vmu$, that are far from the true Gaussian $\N(\vmu^{\star})$ in total variation distance, will also be sub-optimal solutions of the empirical maximization of the log-likelihood objective, i.e., they are far from being maximizers of the empirical log-likelihood objective. We first give an overview of the proof of \Cref{lemma:sample-complexity-convex-gaussian}. 
In \Cref{prop:unsupervised} we provided a similar sample complexity bound
for an empirical log-likelihood objective.
However, in contrast to the analysis of \Cref{prop:unsupervised}, the parameter space 
is now unbounded -- $\vec \mu$ can be any vector of $\reals^d$ --
and we cannot construct a cover of the whole space with finite size.  
However, thanks to the concavity of the empirical log-likelihood objective $\mathcal{L}_N$, we can show that it suffices to focus on guess vectors $\vmu$ that lie in a sphere $\partial \mathcal{B}$ (i.e., the boundary of a ball $\mathcal{B}$) of radius $\Omega(\eps)$. This argument heavily relies on the claim that the maximizer of the empirical log-likelihood $\mathcal{L}_N$ lies inside $\mathcal{B}$, which can be verified by monotonicity properties of the log-likelihood. Afterwards, we consider a discretization $\mathcal{C}$ of the sphere and, for any vector $\vmu \in \mathcal{C}$, we can prove that $\mathcal{L}_N(\vmu^\star) - \mathcal{L}_N(\vmu) \geq \Omega(\alpha^2 \eps^2)$. The main technical tool for this claim is a concentration result on likelihood ratios and the fact that the partition distribution is $\alpha$-information preserving. In order to extend this property to the whole sphere, we exploit the convexity (with respect to $\vmu$) of a regularized version of the empirical log-likelihood objective $\mathcal{L}_N(\vmu) + \|\vmu\|_2^2$. The complete proof follows.
\newline
\begin{proof}[\textit{Proof of}~\Cref{lemma:sample-complexity-convex-gaussian}]
Let $\wt{\vmu}$ be the maximizer of the empirical log-likelihood objective
 \[
 \wt{\vmu} = \arg \max_{\vmu \in \reals^{\dim}} \frac{1}{N} \sum_{i=1}^{N}\log \N(\vmu; S_i) \,.
 \]
 Since $\wt \vmu$ is the maximizer of the empirical objective, it is sufficient to 
 prove that for any Gaussian $\N(\vmu)$ whose total variation distance with $\N(\vmu^\star)$
 is greater than $\eps$, it holds that
 $\mathcal{L}_N(\vmu^\star) - \mathcal{L}_N(\vmu) \geq \Omega(\alpha^2 \eps^2)$.
 
 Moreover, we know that when $\|\vmu_1 - \vmu_2\|_2$ is smaller than some sufficiently small absolute constant, it 
 holds $\dtv(\N(\vmu_1), \N(\vmu_2)) \geq \Omega(\| \vmu_1 - \vmu_2 \|_2)$.
 Therefore, any Gaussian whose mean $\vmu$ is far from $\vmu^\star$, i.e., 
 $\|\vmu - \vmu^\star\|_2 \geq \Omega(\eps)$ will be in 
 total variation distance at least $\eps$ from $\N(\vmu^\star)$
 Therefore, to prove the lemma, it suffices to prove it for Gaussians whose means lie 
 outside of a ball $\mathcal{B}$ of radius $\rho := \Omega(\eps)$ around $\vmu^\star$.
 
 Since all observed sets $S_i$ are convex, the empirical log-likelihood  objective $\mathcal{L}_N(\vmu)$ 
is concave with respect to $\vmu$, see \Cref{lemma:concavity}.  Since $\mathcal{L}_N$ 
is concave, it suffices to prove that for any $\vmu$ that lies exactly on the sphere
of radius $\rho$, i.e., the surface of the ball $\mathcal{B}$ it holds 
$\mathcal{L}_N(\vec \mu^\star) - \mathcal{L}_N(\vec \mu) \geq \Omega(\alpha^2 \eps^2)$.
% \begin{claim}
% Let $f: \reals^d \mapsto \reals$ be a concave function. Assume that 
% for some $\vec z \in \reals^d$  and $\eta > 0$ it 
% holds $f(\vec z) - f(\vec x) \geq \eta$ for all $\vec x$ such that 
% $\| \vec x - \vec z \|_2 = \eps$. 
% Then $f(\vec z) - f(\vec x) \geq \eta 
% \end{claim}
To prove this we first show that the maximizer of the empirical objective 
$\wt \vmu$ has to lie inside the ball $\mathcal{B}$. 
Assuming that $\wt\vmu $ lies outside of $\mathcal{B}$, let $\vec r_1$ and $\vec r_2$ 
be the antipodal points on the sphere $\partial \mathcal B$ that belong to the line $\wt \vmu$ connecting
$\wt \vmu$ and $\vmu^\star$ and assume that $\vec r_2$ lies between $\vmu^\star$ and $\wt\vmu$. 
In that case the restriction of $\mathcal{L}_N$ on that line
cannot be concave, since it has to be increasing from $\vec r_1$ to $\vmu^\star$,
decreasing from $\vmu^\star$ to $\vec r_2$ and then increase again from $\vec r_2$ to $\wt \vmu$.
Thus, $\wt \vmu$ lies inside $\mathcal{B}$.  Now, by concavity of $\mathcal{L}_N$, we obtain
that, by projecting any point $\vmu$ that lies outside of the ball $\mathcal{B}$ onto $\mathcal{B}$, 
we can only increase its empirical likelihood.  Therefore, it suffices to consider
only points that lie on the sphere $\partial \mathcal{B}$.

We will now show that the claim is true for any $\vmu \in \partial \mathcal{B}$.  We can create 
a cover of the sphere of radius $\rho \sqrt{ 1+ c \alpha^2}$, centered at $\vmu^\star$ 
for some sufficiently small absolute 
constant $c >0$, whose convex hull contains $\mathcal{B}$.  The following lemma shows that
such a cover can be constructed with $(1/(\alpha \eps))^{O(d)}$ points.
\begin{lemma}[see, e.g., Corollary 4.2.13 of \cite{vershynin2018high}]
\label{lem:cover}
  For any $\eps>0$, there exists an $\eps$-cover $\mathcal{C}$ of the unit sphere in $\reals^k$, with
  respect to the $\ell_2$-norm, of size $O((1/\eps)^k)$.  Moreover, the convex hull of the cover $\mathcal{C}$
  contains the sphere of radius $1-\eps$.
\end{lemma}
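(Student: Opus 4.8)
The plan is the standard maximal--packing argument for the first two assertions, together with a separating--hyperplane argument for the convex--hull claim. First I would let $\mathcal{C}$ be a \emph{maximal} $\eps$-separated subset of the unit sphere $\partial\mathcal{B} \subseteq \reals^k$, i.e.\ a set with $\|\vec c - \vec c'\|_2 \ge \eps$ for all distinct $\vec c,\vec c' \in \mathcal{C}$ to which no further point of $\partial\mathcal{B}$ can be adjoined without destroying this separation. Such a set exists and is finite because $\partial\mathcal{B}$ is compact. Maximality immediately yields that $\mathcal{C}$ is an $\eps$-cover of $\partial\mathcal{B}$: for every $\vec x \in \partial\mathcal{B}$ there must be some $\vec c \in \mathcal{C}$ with $\|\vec x - \vec c\|_2 \le \eps$, since otherwise $\vec x$ itself could be added to $\mathcal{C}$.

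For the cardinality bound I would compare Lebesgue volumes. Because the points of $\mathcal{C}$ are pairwise at distance at least $\eps$, the balls $\{\mathcal{B}(\vec c,\eps/2)\}_{\vec c \in \mathcal{C}}$ are pairwise disjoint, and each lies inside $\mathcal{B}(\vec 0, 1 + \eps/2)$ since every $\vec c$ has norm $1$. Using $\mathrm{vol}(\mathcal{B}(\vec x, r)) = r^k\,\mathrm{vol}(\mathcal{B}(\vec 0,1))$, this gives $|\mathcal{C}|\,(\eps/2)^k \le (1+\eps/2)^k$, hence $|\mathcal{C}| \le (1 + 2/\eps)^k = O((1/\eps)^k)$ for $\eps \le 1$.

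For the convex--hull claim I would argue by contradiction. Suppose some $\vec y$ with $\|\vec y\|_2 \le 1-\eps$ does not lie in $\mathrm{conv}(\mathcal{C})$; since $\mathcal{C}$ is finite, $\mathrm{conv}(\mathcal{C})$ is compact and convex, so the separating hyperplane theorem provides a unit vector $\vec u$ with $\langle \vec u, \vec y\rangle > \max_{\vec c \in \mathcal{C}}\langle \vec u, \vec c\rangle$. Applying the cover property to $\vec u \in \partial\mathcal{B}$ yields a $\vec c_0 \in \mathcal{C}$ with $\|\vec u - \vec c_0\|_2 \le \eps$, so (both being unit vectors) $\langle \vec u, \vec c_0\rangle = 1 - \tfrac12\|\vec u - \vec c_0\|_2^2 \ge 1 - \eps^2/2$, while Cauchy--Schwarz gives $\langle \vec u, \vec y\rangle \le \|\vec y\|_2 \le 1-\eps$. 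Chaining these, $1-\eps \ge \langle \vec u,\vec y\rangle > \langle \vec u,\vec c_0\rangle \ge 1 - \eps^2/2$, which is impossible for $\eps \in (0,2)$. Hence $\mathrm{conv}(\mathcal{C})$ contains the whole closed ball of radius $1-\eps$, in particular the sphere of radius $1-\eps$.

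There is no genuine obstacle here: this is a textbook covering-number estimate. The only mildly non-obvious point is the convex-hull statement, and there the thing to keep in mind is that the separating direction $\vec u$ may itself be taken on the unit sphere and then approximated by $\mathcal{C}$, so that the order-$\eps$ deficit in $\langle \vec u,\vec y\rangle$ is overwhelmed by the order-$\eps^2$ deficit of $\langle \vec u,\vec c_0\rangle$ below $1$.
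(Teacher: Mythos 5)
Your proof is correct. The paper itself gives no proof of this lemma — it is cited directly to Corollary~4.2.13 of Vershynin's \emph{High-Dimensional Probability} — and your argument (maximal $\eps$-separated set, volume comparison against $\mathcal{B}(\vec 0, 1+\eps/2)$, then a separating-hyperplane contradiction using $\langle \vec u, \vec c_0\rangle = 1 - \tfrac12\|\vec u-\vec c_0\|_2^2 \ge 1-\eps^2/2$) is exactly the standard textbook route; in fact it establishes the slightly stronger fact that $\mathrm{conv}(\mathcal{C})$ contains the full ball of radius $1-\eps^2/2$, which a fortiori contains the sphere of radius $1-\eps$ claimed in the statement.
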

 
 Since the partition distribution $\pi$ is $\alpha$-information preserving we obtain
 that for any $\vmu \in \mathcal{C}$, it holds $\dtv(\N_{\pi}(\vmu), \N_{\pi}(\vmu^\star)) \geq \Omega(\alpha \eps)$.
 Applying \Cref{lemma:massart} with $x = O(\log(|\mathcal{C}|/\delta)) = O(\dim \log(1/(\eps \delta)))$, we get that, with $N = \wt{O}(d/(\alpha^2 \eps^2) \log(1/\delta))$, with probability at 
 least $1-\delta$, it holds that, for any $\vmu$ in the cover $\mathcal{C}$, we have
 \begin{equation}
 \label{eq:likelihood-on-cover}
 \mathcal{L}_N(\vmu^{\star}) - \mathcal{L}_N(\vmu) \geq \dtv(\N_{\pi}(\vmu^{\star}), \N_{\pi}(\vmu))^2 - \alpha^2 \eps^2/2
 \geq \Omega(\alpha^2 \eps^2)\,.
 \end{equation}
 
Next, we need to extend this bound from the elements of  the cover $\mathcal{C}$ to all elements of
the sphere $\partial \mathcal{B}$.  In what follows, in order to simplify notation, we may assume
without loss of generality that $\vmu^\star = \vec 0$. We are going to use the  fact that  
$\log(\N(\vmu;S_i)) + \| \vmu \|_2^2/2$ is convex. 
To see that, write
\[
\log(\N(\vmu;S_i)) + \| \vmu \|_2^2/2
= 
\log\Big( 
e^{\| \vmu\|_2^2/2} \int_S e^{-\| \vec x - \vec \mu\|_2^2/2} d \vec x
\Big)
= 
\log\Big( 
 \int_S e^{-\|\vec x\|_2^2/2 + \vec x^T \vmu } 
 d \vec x
\Big),
\]
which is a log-sum-exp function and thus convex (this can also be verified by directly computing the Hessian with respect to $\vmu$).
This means that $\mathcal{L}_N(\vmu) + \|\vmu\|_2^2$ is  also convex with respect to $\vmu$.
Let $\vmu \in \partial B$. From the construction of the cover $\mathcal{C}$, we have that 
its convex hull contains the sphere $\partial B$.  Therefore, $\vmu$ can be written 
as a convex combination of points of the cover, i.e., $\vmu = \sum_{i=1}^{|\mathcal{C}|} \alpha_i \vmu_i$,
where $\vmu_i \in \mathcal C$.  The convexity of $\mathcal{L}_N(\vmu) + \|\vmu\|_2^2$ implies that
\[
\mathcal{L}_N(\vmu) + \|\vmu\|_2^2 \leq  \sum_{i=1}^{|\mathcal{C}|} \alpha_i (\mathcal{L}_N(\vmu_i) + \| \vmu_i\|_2^2)
\leq \max_i \mathcal{L}_N(\vmu_i) + \rho^2(1+ c \alpha^2) \,,
\]
where to get the last inequality we used the fact that all points of our cover $\mathcal{C}$ 
belong to the sphere of radius $\rho \sqrt{1 + c \alpha^2}$.  Since $\|\vmu\|_2^2 = \rho^2$ 
the above inequality implies that $\mathcal{L}_N(\vmu) \leq \max_{i} \mathcal{L}_N (\vmu_i) + c \alpha^2 \rho^2$.
Combining this inequality with Equation~\eqref{eq:likelihood-on-cover}, we obtain that, 
since $c$ is sufficiently small and $\rho = \Theta (\eps)$, it holds
$\mathcal{L}_N(\vmu) \leq \mathcal{L}_N(\vmu^\star) - \Omega(\eps^2 \alpha^2)$.
\end{proof}

\subsubsection{The Proof of \Cref{theorem:intro-mean-estimation-gaussian}}
We conclude this section with the proof of \Cref{theorem:intro-mean-estimation-gaussian}.
Since the likelihood function is concave (and therefore can
be efficiently optimized) we focus mainly on bounding the sample complexity of our algorithm.
% We first provide a more detailed statement of  \Cref{theorem:intro-mean-estimation-gaussian}.
% \begin{theorem}[Gaussian Mean Estimation with Convex Partitions]
% \label{theorem:mean-estimation-gaussian-full}
% Let $\eps, \delta \in  (0,1)$.
% Consider the generative process of coarse $\dim$-dimensional Gaussian data $\N_{\pi}(\vmu^{\star})$.
% Assume that the partition distribution $\pi$ is  $\alpha$-information preserving and is supported on convex partitions of $\reals^d$.
% There exists an algorithm, that draws $N = \wt{O}(d/(\eps^2 \alpha^2)\log(1/\delta))$ samples from $\N_{\pi}(\vmu^{\star})$, 
% runs in time 
% \[
% O \left( \poly(N), \frac{\beta}{\alpha} \log (\| \vmu_1 - \vmu^\star \|_2 / \eps) \right) \,,
% \]
% and computes an estimate $\wt{\vmu} \in \reals^d$ that satisfies
% \(
% \dtv(\N(\wt{\vmu}), \N(\vmu^\star)) \leq \eps
% \)
% with probability at least $1-\delta$.
% \end{theorem}
\begin{proof}[\textit{Proof of}~\Cref{theorem:intro-mean-estimation-gaussian}]

Let us assume that the partition distribution $\pi$ is $\alpha$-information preserving and that is supported on \emph{convex partitions} of $\reals^d$. Our goal is to show that there exists an algorithm, that draws $\wt{O}(d/(\eps^2 \alpha^2)\log(1/\delta))$ samples from $\N_{\pi}(\vmu^{\star})$ 
%runs in time $T = O(\poly(N), ...)$  
and computes an estimate $\wt{\vmu} \in \reals^d$ so that $ \dtv(\N(\wt{\vmu}), \N(\vmu^\star)) \leq \eps$
with probability at least $1-\delta$. The algorithm works as follows: it optimizes the empirical log-likelihood objective
\[
\L_N(\vmu) = \frac{1}{N} \sum_{i=1}^{N}\log \N(\vmu; S_i)\,,
\]
where the samples are i.i.d. and $S_i \sim \N_{\pi}(\vmu^\star)$ for any $i \in [N]$. 
Using \Cref{lemma:concavity}, we establish that the function $\L_N$ is concave with respect to the mean $\vmu \in \reals^d$. This follows from the fact that convex combinations of concave functions remain concave.
From  \Cref{lemma:sample-complexity-convex-gaussian}, we obtain that it suffices to compute 
a point $\vec \mu$ such that 
$\L_N(\vmu) \geq \max_{\vmu'} \L_N(\vmu') - O(\alpha^2 \eps^2)$. Specifically, given roughly $\wt{O}({\dim}/(\eps^2 \alpha^2))$ samples from $\N_{\pi}(\vmu^{\star})$,
we can guarantee, with high probability, that the maximizer $\wt{\vmu}$ of the empirical log-likelihood achieves a total variation gap at most $\eps$ against the true mean vector $\vmu^\star,$ i.e., $\dtv(\N(\wt{\vmu}), \N(\vmu^\star)) \leq \eps$.
\end{proof}

We proceed with a discussion about the running time of the above algorithm. Since $\L_N(\vmu)$ is a concave function with respect to $\vmu$, this can be done efficiently.
For example, we may perform gradient-ascent:  for a fixed convex set $S \subseteq \reals^d$ 
the gradient of the function $f(\vmu) = \log \N(\vmu; S) = \log \E_{\vec x \sim \N(\vmu)} \left[ \vec 1\{ \vec x \in S \} \right]$
(see \Cref{lemma:concavity}) is equal to
\[
\nabla_{\vmu} f(\vmu) = \E_{\vec x \sim \N_S(\vec \mu)}[\vec x] - \vmu\,.
\]
In order to compute the gradient of $f$, it suffices to approximately compute 
$
\E_{\vec x \sim \N_S(\vec \mu)}[\vec x] $ 
$= \E_{\vec x \sim \N(\vmu)}[\vec x $ $\vec 1\{\vec x \in S\}]$ $/\N(\vec \mu; S)\,.
$
Both terms of this ratio can be estimated using independent samples from 
the distribution $\N(\vmu)$ and access to the oracle 
$\mathcal{O}_{S}(\cdot)$, since the mean $\vmu$ is known (the current guess of the learning algorithm). 
Hence, the running time will be polynomial in the number of samples using, e.g., the ellipsoid algorithm.

\begin{remark}
\label{rem:runtime}
We remark that a precise calculation of the runtime would 
also depend on the regularity of the concave objective (Lipschitz or smoothness assumptions
etc.) which in turn depend on the geometric properties of
the sets.  We opt not to track such dependencies since our 
main result is that, in this setting, the likelihood objective is concave and therefore can be efficiently optimized using standard black-box optimization techniques.
\end{remark}

\paragraph{Acknowledgments} 
We thank the anonymous reviewers for useful remarks and comments on the presentation of our manuscript.
Dimitris Fotakis and Alkis Kalavasis were supported by the Hellenic Foundation for Research and Innovation (H.F.R.I.) under the ``First Call for H.F.R.I. Research Projects to support Faculty members and Researchers and the procurement of high-cost research equipment grant'', project BALSAM, HFRI-FM17-1424. Christos Tzamos and Vasilis Kontonis were supported by the NSF grant CCF-2008006.

\bibliography{refs.bib}

%\newpage

\appendix

\section{Training Models from Coarse Data}
\label{appendix:training}
Consider a parameterized family of functions $\vec x \rightarrow f(\vec x; \vec w)$, where the parameters $\vec w$ lie in some parameter space $\mathcal{W} \subseteq \reals^p.$ For instance, the family may correspond to a feed-forward neural network with $L$ layers.
Given a finely labeled training sample $(\vec x_1, y_1), \ldots, (\vec x_N, y_N) \in \mathcal{X} \times \mathcal{Y}$, the parameters $\vec w$ are chosen using a gradient method in order to minimize the empirical risk, 
\[
\L_N(\vec w) = \frac{1}{N} \sum_{i = 1}^N \l(f(\vec x_i; \vec w), y_i)\,,
\]
for some loss function $\l : \mathcal{Y} \times \mathcal{Y} \rightarrow \mathbb{R}$ and the goal of this optimization task is to minimize the population risk function $\L(\vec w) = \E_{(\vec x, y) \sim \D(\vec w^{\star})}[\l(f(\vec x; \vec w), y)]$ (where the distribution $\D(\vec w^{\star})$ is unknown). For simplicity, let us focus on differentiable loss functions. Performing the SGD algorithm, we can circumvent the lack of knowledge of the population risk function $\L$. Specifically, instead of computing the gradient of $\L(\vec w)$, the algorithm steps towards a random direction $\vec v$ with the constraint that the expected value of $\vec v$ is equal to the negative of the true gradient, i.e., it is an unbiased estimate of $-\nabla \L(\vec w)$. Such a random vector $\vec v$ can be computed without knowing $\D(\vec w^{\star})$ using the interchangeability between the expectation and the gradient operators. Assume that the algorithm is at iteration $t \geq 1$. Let $(\vec x, y) \sim \D(\vec w^{\star})$ be a fresh sample and define $\vec v_t$ be the gradient of the loss function with respect to $\vec w$, at the point $\vec w_t$, i.e.,
\[
\E[\vec v_t | \vec w_t] = \E_{(\vec x, y) \sim \D(\vec w^{\star})} \left[\nabla \l(f(\vec x; \vec w_t), y)\right] = \nabla \E_{(\vec x, y) \sim \D(\vec w^{\star})}\left[\l(f(\vec x; \vec w_t), y)\right] = \nabla \L(\vec w_t)\,.
\]
% \begin{algorithm}[ht] 
% \caption{Training Models from Coarse Data.}
% \label{algo:training}
% \begin{algorithmic}[1]
% \State \textbf{Model:} Parameterized function family $\{ f(\cdot~; \vec w) : \vec w \in \mathcal{W} \}, \mathcal{W} \subseteq \reals^p$.
% \State \textbf{Input:} Loss $\l$, accuracy $\epsilon \in [0,1]$, tolerance $\tau \in [0,1]$, confidence $\delta \in[0, 1]$.
% \State \textbf{Parameters:} Step size $\eta > 0$, rounds $T > 0$. 
% \State \textbf{Oracle}: Access to 
% coarsely labeled samples $(x, S) \sim \D_\pi$.
% % \State \textbf{Output:} Compute $\wh{\vec w} \in \mathcal{W}$ such that $\E[\L(\wh{\vec w})] - \min_{\vec w \in \mathcal{W}} \L(\vec w) \leq \epsilon$.
% \vspace{2mm}
% \Procedure{Train}{$\eta, T, \tau, \delta$}
% \State Set $\vec w_1 = \vec 0$. \Comment{\emph{Initialization.}}
% \State \textbf{for}~{$t = 1,\ldots,T$}~\textbf{do}
% \State ~~~\textbf{for}~{$i = 1,\ldots,p$}~\textbf{do} \Comment{\emph{Call \Cref{algo:sq} coordinate-wise.}}
% \State ~~~~~~$\vec r_t(i) \gets$ \textsc{StatQuery} ($ \nabla \l(f( \vec x ; \vec w_t), y) (i) , \tau, \delta$) \Comment{\emph{Notation: $\vec z = (\vec z(i))_i \in \reals^p$.}}
% \State ~~~ \textbf{end}
% \State ~~~$\vec w_{t+1} \gets \vec w_t - \eta \cdot \vec r_t$ \Comment{\emph{Update the estimate.}}
% \State \textbf{end}
% \State \textbf{output} $\wh{\vec w} = \frac{1}{T}\sum_{t=1}^T \vec w_t$
% \EndProcedure
% \vspace{2mm}
% \end{algorithmic}
% \end{algorithm}

Hence, an algorithm 
%(e.g., see \Cref{algo:training}) 
that has query access to a SQ oracle can implement a noisy version of the above iterative process (with inexact gradients, see e.g.,~\cite{d2008smooth, devolder2014first, FeldmanGV15}) using the query functions $q_i(\vec x, y) = \left ( \nabla \l (f(\vec x; \vec w_t),y) \right)_{i}$ for any $i \in [p]$. Note that the algorithm knows the loss function $\l$, the parameterized functions' family $\{ f(\cdot~; \vec w) : \vec w \in \mathcal{W} \}$ and the current guess $\vec w_t$. Specifically, the algorithm performs $p$ queries (one for each coordinate of the parameter vector) and the oracle returns to the algorithm a noisy gradient vector $\vec r_t$ that satisfies $\| \vec r_t - \nabla \L(\vec w_t) \|_{\infty} \leq \tau$.

In our setting, we do not have access to the SQ oracle with finely labeled examples. Our main result of this section (\Cref{theorem:intro-reduction}) is a mechanism that enables us to obtain access to such an oracle using a few coarsely labeled examples (with high probability). Hence, we can still perform the noisy gradient descent of the previous paragraph with an additional overhead on the sample complexity, due to the reduction.

\section{Multiclass Logistic Regression with Coarsely Labeled Data}
\label{appendix:logistic}
A first application for the above generic reduction from coarse data to statistical queries is the case of coarse multiclass logistic regression. In the standard (finely labeled) multiclass logistic regression problem, there are $k$ fine labels (that correspond to classes), each one associated with a weight vector $\vec w_z \in \reals^n$ with $z \in [k]$. We can consider the weight matrix $\vec W \in \reals^{k \times n}$. Given an example $\vec x \in \reals^n$, the vector $\vec x$ is filtered via the softmax function $\sigma(\vec W,\vec x)$, which is a probability distribution over $\Delta^k$ with $\sigma(\vec W, \vec x; z) = \exp(\vec w_z^T \vec x)/\sum_{y \in [k]}\exp(\vec w_y^T \vec x), z \in [k]$ and the output is the  finely labeled example $(\vec x,z) \in \reals^n \times [k]$. The goal is to estimate the weight matrix $\vec W$, given finely labeled examples. Let us denote by $\D(\vec W)$ the joint distribution over the finely labeled examples for simplicity. When we have access to finely labeled examples $(\vec x,z) \sim \D(\vec W^\star)$, the population log-likelihood objective $\mathcal{L}$ of the multiclass logistic regression problem 
\[
\mathcal{L}(\vec W) = \E_{(\vec x,z) \sim \D({\vec W^{\star}})} \Big [\vec w_z^T \vec x - \log \Big (\sum_{j \in \mathcal{Z}}\exp(\vec w_j^T \vec x) \Big ) \Big]\,,
\]
is concave (see~\cite{friedman2001elements}) with respect to the weight matrix $\vec W \in \mathbb{R}^{k \times n}$ and is solved using gradient methods. On the other hand, if we have sample access only to coarsely labeled examples $(\vec x,S) \sim \D_{\pi}(\vec W^\star)$, the population log-likelihood objective $\mathcal{L}_{\pi}$ of the coarse multiclass logistic regression problem 
\[
\mathcal{L}_{\pi}(\vec W) = \E_{(\vec x,S) \sim \D_{\pi}(\vec W^{\star})} \Big [ \log \Big ( \sum_{z \in S}\exp(\vec w_z^T \vec x) \Big ) - \log \Big (\sum_{j \in \mathcal{Z}}\exp(\vec w_j^T \vec x) \Big ) \Big]\,,
\]
which is no more concave. However, as an application of our main result (\Cref{theorem:intro-reduction}), we can still solve it. In fact, since we can implement statistical queries using the sample access to the coarse data generative process $\D_{\pi}(\vec W^\star)$, we can compute the gradients of the log-likelihood objective that corresponds to the \emph{finely labeled examples}. Hence, the total sample complexity of optimizing this non-convex objective is equal to the sample complexity of solving the convex problem with an additional overhead at each iteration of computing the gradients, that is given by \Cref{theorem:intro-reduction}.

\section{Geometric Information Preservation}
%\subsection{Deferred proof of \Cref{lemma:preservation}~(\emph{Geometric Information Preservation})}
\label{appendix:preservation}

In this section, we aim to provide some intuition behind the notion of information preserving partitions. The following result provides a geometric property for the partition distribution $\pi$. We show that if the partition distribution satisfies this particular geometric property, then it is also information preserving. We underline that the geometric property is quite important for our better understanding and it has the advantage that it is easy to verify. Hence, while the notion of information preserving distributions may be less intuitive, we believe that the geometric preservation property that we state in \Cref{lemma:preservation} can fulfill this lack of intuition. The property informally states that, for any hyperplane, the sets in the partition that are not cut by this hyperplane have non trivial probability mass with respect to the true Gaussian. In the case of mixtures of convex partitions, we would like the same property to hold in expectation. 
\begin{figure}[h]
    \centering
    % Non identifiable case 1
    \begin{subfigure}{0.3\textwidth}
    \centering
    \begin{tikzpicture}[scale=0.5]
    \def\pts{}
    % \xintFor* #1 in {\xintSeq {1}{5}} \do{
    %   \pgfmathsetmacro{\ptx}{.9*\maxxy*rand} % random x in [-.9\maxxy,.9\maxxy]
    %   \pgfmathsetmacro{\pty}{.9*\maxxy*0.3} % random y in [-.9\maxxy,.9\maxxy]
    %   \edef\pts{\pts, (\ptx,\pty)} % stock the random point
    % }
    
    \edef\pts{\pts, (-3,0.2)}
    \edef\pts{\pts, (-1,0.2)}
    \edef\pts{\pts, (1,0.2)}
    \edef\pts{\pts, (3,0.2)}
    
    % draw the points and their cells
    \xintForpair #1#2 in \pts \do{
      \edef\pta{#1,#2}
      \begin{scope}
        \xintForpair \#3#4 in \pts \do{
          \edef\ptb{#3,#4}
          \ifx\pta\ptb\relax % check if (#1,#2) == (#3,#4) ?
            \tikzstyle{myclip}=[];
          \else
            \tikzstyle{myclip}=[clip];
          \fi;
          \path[myclip] (#3,#4) to[half plane] (#1,#2);
        }
        \clip (-\maxxy,-\maxxy) rectangle (\maxxy,\maxxy); % last clip
        \pgfmathsetmacro{\randhue}{rnd}
        \definecolor{randcolor}{hsb}{\randhue,.5,1}
        \fill[red, opacity=0.1] (#1,#2) circle (4*\biglen); % fill the cell with random color
        \fill[draw=black] (#1,#2) circle (2pt); % and draw the point
      \end{scope}
    }
    \pgfresetboundingbox
    \draw (-\maxxy,-\maxxy) rectangle (\maxxy,\maxxy);
  \end{tikzpicture}
\end{subfigure}%
 \begin{subfigure}{0.3\textwidth}
    \centering
    \begin{tikzpicture}[scale=0.5]
    \def\pts{}
    \pgfmathsetseed{1908}
    \xintFor* #1 in {\xintSeq {1}{12}} \do{
      \pgfmathsetmacro{\ptx}{.9*\maxxy*rand} % random x in [-.9\maxxy,.9\maxxy]
      \pgfmathsetmacro{\pty}{.9*\maxxy*rand} % random y in [-.9\maxxy,.9\maxxy]
      \edef\pts{\pts, (\ptx,\pty)} % stock the random point
    }
    \edef\pts{\pts, (0.4,0.2)}
    \edef\pts{\pts, (0.2,0.7)}
    \edef\pts{\pts, (-3.2,0.7)}
    \edef\pts{\pts, (-3.2,0)}
    \edef\pts{\pts, (-3.2,-1.0)}
    
    % draw the points and their cells
    \xintForpair #1#2 in \pts \do{
      \edef\pta{#1,#2}
      \begin{scope}
        \xintForpair \#3#4 in \pts \do{
          \edef\ptb{#3,#4}
          \ifx\pta\ptb\relax % check if (#1,#2) == (#3,#4) ?
            \tikzstyle{myclip}=[];
          \else
            \tikzstyle{myclip}=[clip];
          \fi;
          \path[myclip] (#3,#4) to[half plane] (#1,#2);
        }
        \clip (-\maxxy,-\maxxy) rectangle (\maxxy,\maxxy); % last clip
        \pgfmathsetmacro{\randhue}{rnd}
        \definecolor{randcolor}{hsb}{\randhue,.5,1}
        \fill[blue, opacity=0.1] (#1,#2) circle (4*\biglen); % fill the cell with random color
        \fill[draw=black] (#1,#2) circle (2pt); % and draw the point
      \end{scope}
    }
    \pgfresetboundingbox
    \draw (-\maxxy,-\maxxy) rectangle (\maxxy,\maxxy);
  \end{tikzpicture}
\end{subfigure}%
%   \caption{Convex Partitions of $\mathbb{R}^2.$}
    \caption{
    (a) is a very rough partition that makes learning the mean impossible: 
    Gaussians $\N((0,z))$ centered along the same vertical line $(0,z)$ assign exactly the same probability to all cells of the
partitions and therefore, $\dtv(\N_\pi((0, z_1)), \N_\pi((0,z_2)) ) = 0$: it is impossible to learn the second coordinate of the mean.  
    (b) is a convex partition of $\reals^2$, that makes recovering the Gaussian possible.
    }
    \label{fig:appendix:convex}
\end{figure}

Before stating \Cref{lemma:preservation}, let us return to \Cref{fig:appendix:convex}. Observe that, in the first example with the four halfspaces, the geometric property does not hold, since there exists a line (i.e., a hyperplane) that intersects with all the sets. On the other hand, if we consider the second example with the Voronoi partition and assume that the true mean lies in the middle of the picture, we can see that any hyperplane does not intersect with a sufficient number of sets and, hence, the union of the uncut sets has non trivial probability mass for any hyperplane.
% Consider a generative process for coarse Gaussian data $\N_{\pi}(\vmu^{\star}, \vSigma^{\star})$, as in \Cref{definition:gaussian-coarse} and assume that the partition distribution $\pi$ is $\alpha$-information preserving for some $\alpha \in (0,1]$. We first prove that the geometric information preserving partition distributions preserve the total variation distance. We consider the true Gaussian distribution $\N(\vmu^{\star}, \vSigma^{\star})$ and a guess $\N(\vmu, \vSigma)$, that are far in total variation distance. We show that a partition distribution that is $\alpha$-?, preserves the total variation distance of the two Gaussian distributions.

For a hyperplane $\mathcal{H}_{\vec w, c} = \{\vec x \in \reals^{\dim} : \vec w^T \vec x = c \}$ with normal vector $\vec w \in \reals^{\dim}$ and threshold $c \in \reals$, we denote the two associated halfspaces by $\mathcal{H}_{\vec w, c}^+ = \{ \vec x \in \reals^{\dim} : \vec w^T \vec x > c \}$ and $\mathcal{H}_{\vec w, c}^- = \{ \vec x \in \reals^{\dim}: \vec w^T \vec x < c \}$. Before stating the next Lemma, we shortly describe what means for a hyperplane to cut a set with respect to a Gaussian $\N$. The set $S$ is not cut by the hyperplane $\mathcal{H}$, if it totally lies in a halfspace induced by the hyperplane, say $\mathcal{H}^+$, i.e., it holds that $\N(S) = \N(S \cap \mathcal{H}^+)$. 
\begin{lemma}
[Geometric Information Preservation]
\label{lemma:preservation}
Consider the generative process of coarse $\dim$-dimensional Gaussian data $\N_{\pi}(\vmu^{\star}),$ (see \Cref{definition:intro-gaussian-coarse}). Consider an arbitrary hyperplane $\mathcal{H}_{\vec w, c}$ with normal vector $\vec w \in \reals^{\dim}$ and threshold $c \in \reals$. 
For a partition $\S \in \mathrm{supp}(\pi)$ of~$\reals^\dim$, consider the collection that contains all the sets that are not cut by the hyperplane $\mathcal{H}_{\vec w, c}$, i.e.,
\[
U_{\vec w, c, \S} = \bigcup \Big \{ S \in \S : \N^{\star}(S \cap \mathcal{H}_{\vec w, c}^+) = \N^{\star}(S) \lor \N^{\star}(S \cap \mathcal{H}_{\vec w, c}^-) = \N^{\star}(S)  \Big \}\,.
\]
Assume that $\pi$ satisfies 
\begin{equation}
\label{eq:geom-pres}
\E_{\S \sim \pi} \Big [\N(\vmu^{\star}; U_{\vec w, c,\S}) \Big ] \geq \alpha\,,
\end{equation}
for some $\alpha \in (0,1]$. Then, for any Gaussian distribution $\N(\vmu)$, it holds that
\[
\dtv(\N_{\pi}(\vmu), \N_{\pi}(\vmu^{\star})) \geq C_{\alpha} \cdot \dtv(\N(\vmu), \N(\vmu^{\star}))\,,
\]
for some $C_{\alpha}$ that depends only on $\alpha$ and satisfies $C_{\alpha} = \poly(\alpha)$, i.e., the partition distribution is $C_{\alpha}$-information preserving.
\end{lemma}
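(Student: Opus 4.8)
The plan is to lower–bound $\dtv(\N_\pi(\vmu),\N_\pi(\vmu^\star))$ using one natural distinguishing event on the coarse observations. Let $\mathcal H=\mathcal H_{\vec w,c}$ be the separating hyperplane of $\N(\vmu^\star)$ and $\N(\vmu)$, with $\mathcal H^+$ the halfspace on which the density of $\N(\vmu^\star)$ dominates that of $\N(\vmu)$ and $\mathcal H^-$ the other. For a partition $\mathcal S$, let $U^{\pm}_{\mathcal S}$ be the union of the cells of $\mathcal S$ lying entirely (up to $\N(\vmu^\star)$–null sets) in $\mathcal H^{\pm}$; then $U^+_{\mathcal S}\cup U^-_{\mathcal S}=U_{\vec w,c,\mathcal S}$ is exactly the union of cells not cut by $\mathcal H$. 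Since ``the observed coarse set lies in $\mathcal H^+$'' is an event of the observation, it has probability $\E_{\mathcal S\sim\pi}[\N(\vmu^\star;U^+_{\mathcal S})]$ under $\N_\pi(\vmu^\star)$ and $\E_{\mathcal S\sim\pi}[\N(\vmu;U^+_{\mathcal S})]$ under $\N_\pi(\vmu)$, and the former is the larger because $U^+_{\mathcal S}\subseteq\mathcal H^+$; the analogous event for $\mathcal H^-$ goes the other way. Adding the two resulting inequalities,
\[
2\,\dtv(\N_\pi(\vmu),\N_\pi(\vmu^\star))\ \ge\ \E_{\mathcal S\sim\pi}\!\left[\int_{U_{\vec w,c,\mathcal S}}\big|\N(\vmu^\star;\vec x)-\N(\vmu;\vec x)\big|\,d\vec x\right].
\]

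Now I would invoke the hypothesis. At the separating hyperplane itself, \eqref{eq:geom-pres} states exactly that $\E_{\mathcal S\sim\pi}[\N(\vmu^\star;U_{\vec w,c,\mathcal S})]\ge\alpha$, so the uncut cells carry $\N(\vmu^\star)$–mass $\ge\alpha$ in expectation. It remains to turn ``$\N(\vmu^\star)$–mass $\beta$'' into a lower bound on the integral above. Write $\eta=\dtv(\N(\vmu),\N(\vmu^\star))$ and $f=|\N(\vmu^\star)-\N(\vmu)|/\N(\vmu^\star)$, a function of the coordinate along $\vmu^\star-\vmu$ only, and set
\[
\psi(\beta)\ :=\ \inf\Big\{\textstyle\int_V|\N(\vmu^\star)-\N(\vmu)|\,d\vec x\ :\ \N(\vmu^\star;V)=\beta\Big\}.
\]
By the Neyman–Pearson lemma the infimum is attained on a sublevel set of $f$, so $\psi$ is the integral of the $\N(\vmu^\star)$–quantile function of $f$; hence $\psi$ is convex, $\psi(0)=0$, and an elementary one–dimensional Gaussian computation gives $\psi(\beta)\ge c\,\beta^2\,\eta$ for an absolute constant $c>0$.

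Combining the ingredients finishes the proof. Write $m_{\mathcal S}:=\N(\vmu^\star;U_{\vec w,c,\mathcal S})$. For every $\mathcal S$ we have $\int_{U_{\vec w,c,\mathcal S}}|\N(\vmu^\star)-\N(\vmu)|\ge\psi(m_{\mathcal S})$ by definition of $\psi$, so taking expectations and using convexity of $\psi$ (Jensen) together with $\E_{\mathcal S}[m_{\mathcal S}]\ge\alpha$,
\[
2\,\dtv(\N_\pi(\vmu),\N_\pi(\vmu^\star))\ \ge\ \E_{\mathcal S}\big[\psi(m_{\mathcal S})\big]\ \ge\ \psi\big(\E_{\mathcal S}[m_{\mathcal S}]\big)\ \ge\ \psi(\alpha)\ \ge\ c\,\alpha^2\,\dtv(\N(\vmu),\N(\vmu^\star)).
\]
Thus $\pi$ is $C_\alpha$–information preserving with $C_\alpha=\Omega(\alpha^2)$ — in particular $\Omega(\alpha^3)$ — and note that no convexity of the partition cells is used.

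The step I expect to be the real work is the bound $\psi(\beta)\ge c\,\beta^2\eta$, which must hold uniformly in $\|\vmu-\vmu^\star\|$. For small separation ($\eta\asymp\|\vmu-\vmu^\star\|$), the worst $V$ of mass $\beta$ hugs the hyperplane; there $\N(\vmu^\star)$ has density of order $1$, forcing $V$ to have width $\gtrsim\beta$, while on $V$ one has $f\gtrsim\|\vmu-\vmu^\star\|\cdot(\text{distance to }\mathcal H)$, and the two effects multiply to give the $\beta^2$. For large separation $f$ is essentially $1$ outside a slab of tiny $\N(\vmu^\star)$–mass, so $\psi(\beta)\gtrsim\beta\eta$, which is stronger. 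Besides this, the only care needed is the (routine) check that ``$S\subseteq\mathcal H^+$'' is genuinely measurable as a function of the observed set and that $\N(\vmu^\star)$– and $\N(\vmu)$–null sets coincide, so that $\N(\vmu^\star;U^+_{\mathcal S})\ge\N(\vmu;U^+_{\mathcal S})$ really holds.
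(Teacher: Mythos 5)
Your argument is correct and takes a genuinely different --- and cleaner --- route than the paper's. Both proofs begin the same way: pass to the separating hyperplane $\mathcal H$ of $\N(\vmu)$ and $\N(\vmu^\star)$, condition on the coarse events ``observed cell lies in $\mathcal H^{\pm}$'', and reduce to lower-bounding the $L^1$ density gap over the uncut region, which is a fractional-knapsack / Neyman--Pearson quantity. From there the proofs diverge. The paper converts the hypothesis $\E_{\S\sim\pi}\big[\N^\star(U_{\vec w,c,\S})\big]\ge\alpha$ into a pointwise statement via a reverse-Markov bound ($\Pr_\S[\N^\star(U)\ge\alpha/2]\ge\alpha/2$), fixes a ``good'' partition, and then invokes Carbery--Wright anti-concentration for the log-likelihood-ratio polynomial $p(\vec x)=\vmu^T\vec x - \tfrac12\|\vmu\|_2^2$ to lower-bound the knapsack value; this costs a factor of $\alpha$ at the Markov step and another at Carbery--Wright (which is also applied with degree $\ell=2$ even though $p$ is affine, an extra loss). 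You instead observe that the knapsack value $\psi(\beta)$ is the integral of the $\N^\star$-quantile function of $f=|\N^\star-\N|/\N^\star$, hence is convex and vanishes at $0$, so Jensen turns the expectation hypothesis directly into $\E_\S[\psi(m_\S)]\ge\psi(\alpha)$ with no Markov loss. That yields $C_\alpha=\Omega(\alpha^2)$, strictly better than the paper's $\Omega(\alpha^3)$ (and indeed better than what the paper's written chain of inequalities supports, once one averages the per-partition bound over $\pi$). You also correctly note that no convexity of the cells is used. The approaches share the same essential content: your claimed estimate $\psi(\beta)\ge c\,\beta^2\,\eta$ is precisely the one-dimensional anti-concentration fact that Carbery--Wright is doing for the paper.

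That estimate is the one place where you leave a real gap --- you sketch the small- and large-separation regimes but do not give a uniform bound, and this is where both proofs do their actual work. It is worth closing explicitly. After reducing to a single coordinate with $\N^\star=\N(0,1)$, $\N=\N(m,1)$, note $f(x)=|1-e^{m(x-m/2)}|$; a short calculation with the sublevel-set width and the density near $m/2$ shows the quantile $Q(\beta)$ of $f$ under $\N^\star$ satisfies $Q(\beta)\gtrsim m\beta/\phi(m/2)$ for $\beta$ bounded away from $1$, hence $\psi(\beta)=\int_0^\beta Q\gtrsim m\beta^2/\phi(m/2)$; combining this with $\eta=2\Phi(m/2)-1\le \min(1,\,m\phi(0))$ and checking the regime $\beta$ near $1$ separately gives $\psi(\beta)\ge c\beta^2\eta$ uniformly in $m$. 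Subject to supplying that calculation (and the routine measure-theoretic remark you already flag, that $\N^\star$- and $\N$-null sets coincide so that the half-space events are well defined), the proof is sound.
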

%A partition distribution $\pi$, that satisfies Equation~\eqref{eq:geom-pres}, will be called $\alpha$-geometry preserving with respect to the Gaussian distribution $\N(\vmu^{\star})$.

Hence, the above geometric property is sufficient for information preservation.
If we assume that the total variation distance between the true Gaussian distribution $\N(\vmu^{\star})$ and a possible guess $\N(\vmu)$ is at least $\eps$ and the partition distribution satisfies the geometric property of Equation~\eqref{eq:geom-pres}, 
we get that the coarse generative process preserves a sufficiently large gap, in the sense that $\dtv(\N_{\pi}(\vmu^{\star}), \N_{\pi}(\vmu)) \geq \poly(\alpha)\eps$. The proof of the above lemma, which relies on high-dimensional anti-concentration results on Gaussian distributions, follows.

\begin{proof}[\textit{Proof of}~\Cref{lemma:preservation}]
Let us denote the true distribution by $\N^{\star} = \N(\vmu^{\star}, \vec I)$ for short. Consider an arbitrary hyperplane $\mathcal{H}_{\vec w, c}$ with normal vector $\vec w \in \reals^{\dim}$ and threshold $c \in \reals$. Since the partition distribution (supported on a family of partitions $\mathcal{B}$) satisfies Equation~\eqref{eq:geom-pres}, we have that, for the random variable $\N^{\star}(U_{\vec w, c,\S}),$ that takes values in $[0,1]$, there exists $\alpha$ such that
\[
\E_{\S \sim \pi}\Big [\N^{\star}(U_{\vec w, c,\S})\Big ] = \alpha\,.
\]
We will use the following simple Markov-type inequality for bounded random variables.
\begin{fact}[Lemma B.1 from~\cite{shalev2014understanding}]
\label{lemma:reverse-markov}
Let $Z$ be a random variable that takes values in $[0,1]$. Then, for any $\alpha \in (0,1)$, it holds that
\[
\Pr[Z > \alpha] \geq \frac{\E[Z] - \alpha}{1-\alpha} \geq \E[Z] - \alpha\,.
\]
\end{fact}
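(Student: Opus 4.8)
The plan is to prove the bound by conditioning the expectation of $Z$ on whether it exceeds the threshold $\alpha$, using the boundedness $Z\in[0,1]$ in an essential way. Write $p=\Pr[Z>\alpha]$ and split $\E[Z]=\E[Z\,\vec 1\{Z>\alpha\}]+\E[Z\,\vec 1\{Z\le\alpha\}]$. On the event $\{Z>\alpha\}$ I bound $Z\le 1$ pointwise, so the first term is at most $\Pr[Z>\alpha]=p$; on $\{Z\le\alpha\}$ I bound $Z\le\alpha$ pointwise, so the second term is at most $\alpha\,\Pr[Z\le\alpha]=\alpha(1-p)$. Adding these gives the single estimate $\E[Z]\le p+\alpha(1-p)=(1-\alpha)p+\alpha$.

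From here the first claimed inequality is pure algebra: since $\alpha\in(0,1)$ we have $1-\alpha>0$, so dividing the rearranged estimate $\E[Z]-\alpha\le(1-\alpha)p$ through by $1-\alpha$ yields $p\ge(\E[Z]-\alpha)/(1-\alpha)$, with no constraint on the sign of $\E[Z]-\alpha$. For the second inequality $(\E[Z]-\alpha)/(1-\alpha)\ge\E[Z]-\alpha$, note it is equivalent to $(\E[Z]-\alpha)\cdot\alpha/(1-\alpha)\ge 0$, hence holds exactly when $\E[Z]\ge\alpha$, which is the only regime in which the statement is non-vacuous; if instead $\E[Z]<\alpha$ then $\E[Z]-\alpha<0\le\Pr[Z>\alpha]$, so the final bound $\Pr[Z>\alpha]\ge\E[Z]-\alpha$ holds trivially in all cases.

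I do not expect any real obstacle; the only point requiring care is using the correct pointwise bound on $Z$ over each of the two events and tracking the harmless sign condition in the last step. As an even shorter alternative derivation I would apply Markov's inequality to the nonnegative random variable $1-Z$: $\Pr[Z\le\alpha]=\Pr[1-Z\ge 1-\alpha]\le\E[1-Z]/(1-\alpha)=(1-\E[Z])/(1-\alpha)$, and taking complements gives $\Pr[Z>\alpha]\ge 1-(1-\E[Z])/(1-\alpha)=(\E[Z]-\alpha)/(1-\alpha)$, recovering the same bound.
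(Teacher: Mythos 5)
The paper states this as a citation (Lemma B.1 of the Shalev-Shwartz--Ben-David book) and gives no proof of its own, so there is nothing to compare against; your argument is correct and is the standard one. The decomposition $\E[Z]=\E[Z\,\vec 1\{Z>\alpha\}]+\E[Z\,\vec 1\{Z\le\alpha\}]\le p+\alpha(1-p)$ and the equivalent route via Markov applied to $1-Z$ both give the first inequality cleanly. Your handling of the second inequality is also right, and worth the care you gave it: since $1/(1-\alpha)>1$, the chain $\frac{\E[Z]-\alpha}{1-\alpha}\ge\E[Z]-\alpha$ literally holds only when $\E[Z]\ge\alpha$, and in the complementary case the end-to-end bound $\Pr[Z>\alpha]\ge\E[Z]-\alpha$ is vacuously true because the right-hand side is negative. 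That is exactly how the fact is used in the paper's \Cref{lemma:preservation}, where it is applied with $\E[Z]=\alpha$ and threshold $\alpha/2$, so the non-vacuous regime is the relevant one.
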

\noindent By the \Cref{lemma:reverse-markov},  it holds that
\[
\Pr_{\S \sim \pi}\Big [\N^{\star}(U_{\vec w, c,\S}) \geq \alpha/2\Big ] \geq \alpha/2\,.
\]
Hence, the mass of the ``good'' partitions is at least $\alpha/2$.
Fix such a partition $\S \in \mathcal{B}$ (in the support of the partition distribution) and consider the true $\N^{\star} = \N(\vmu^{\star})$ and the guess $\N = \N(\vmu)$ distributions. For this pair of distributions, consider the set
\[
\mathcal{H} = \Big \{ \vec x \in \reals^{\dim} : \vec x^T (\vmu - \vmu^{\star}) = \big (\|\vmu\|_2^2 - \|\vmu^{\star}\|_2^2\big )/2 \Big \}\,.
\]
Observe that this set is a hyperplane with normal vector $\vmu^{\star} - \vmu$, that contains the midpoint $\frac{1}{2}(\vmu + \vmu^{\star})$ (see \Cref{fig:testing}). 
\begin{figure}[ht]

      \centering
    \begin{tikzpicture}[scale=0.45]
    \draw[help lines, color=gray!30, dashed] (-5.9,-5.9) grid (5.9,5.9);
    
    \def\incurve{(-2,3) circle(0.2)}
    \def\outcurve{(-2,3) circle(3)}
    \fill[inner color=blue!80, outer color=blue!20,even odd rule, opacity=0.2] \incurve \outcurve;
    
     \def\incurve{(4,-3) circle(0.2)}
    \def\outcurve{(4,-3) circle(3)}
    \fill[inner color=green!80, outer color=blue!20,even odd rule, opacity=0.2] \incurve \outcurve;
    
    \fill[blue] (-2,3) coordinate (m1) circle (3pt) node[anchor=west] {$\vmu_1 = \vmu^*$};
    \fill[red] (4,-3) coordinate (m2) circle (3pt) node[anchor=east] {$\vmu_2$};
    
    \draw[name path=m12] (m1)--(m2);
    \draw[name path=perp] (-5,-6)--(6,5);
    \draw[dashed, red, name path=perp2] (-5.5,-5.5)--(5.5,5.5);
    
    \tikzfillbetween[of=perp and perp2]{blue, opacity = 0.1};
    
    \node[above]
    at (-2,-4.3) {$\mathcal{H}$};
    
    \draw[->] (-3,-4)--(-4,-3);
    
    % \draw[dashed, magenta, name path=A1] (-4.8,-1) coordinate (a_11) --(4.8,-1) coordinate (a_12);
    
    % \draw[dashed, blue, name path=A1] (-4.8,-1) coordinate (a_11) --(4.8,-1) coordinate (a_12);
    % \draw[dashed, blue, name path=A2] (-4.8,1) coordinate (a_21 )--(4.8,1) coordinate (a_22) ;
    
    % \tikzfillbetween[of=A1 and A2]{blue, opacity=0.05};
    
    % \draw[dashed, red,  name path=B1] (-1,-4.8) coordinate (b_11) --(-1,4.8) coordinate (b_12) ;
    % \draw[dashed, red, name path=B2] (1,-4.8) coordinate (b_21)--(1,4.8) coordinate (b_22) ;
    % \tikzfillbetween[of=B1 and B2]{red, opacity=0.05};
    
    % \draw[scale =3.4, rotate=45] (0,0)ellipse (12pt and 20pt);
     
    % \coordinate (c1) at (intersection of a_11--a_12 and b_11--b_12);
    % \fill[green] (c1) circle (3pt);
    
    % \fill[green] (1,1) circle (3pt);
    \end{tikzpicture}
    % \end{subfigure}%
    \caption{Illustration of the worst-case set in testing the hypotheses $h_1 = \{\vmu_1 = \vmu^\star\}$ and $h_2 = \{\vmu_2 = \vmu^\star \}.$}
    \label{fig:testing}
\end{figure}

Our main focus is to lower bound the total variation distance of the coarse distributions $\N^{\star}_{\pi}$ and $\N_{\pi}$. We claim that this lower bound can be described as a fractional knapsack problem and, hence, it is attained by a worst-case set, that (intuitively) places points as close as possible to the hyperplane $\mathcal{H}$, until its mass with respect to the true Gaussian $\N^{\star}$ is at least $\alpha/2$. Recall that the total variation distance between the two coarse distributions is
\[
\dtv(\N_{\pi}, \N^{\star}_{\pi}) = \sum_{\S \in \mathcal{B}}\pi(\S) \sum_{S \in \S}\Big  |\N(S) - \N^{\star}(S)\Big | \,.
\]
So, the LHS is at least $\Theta(\alpha)$ times the absolute gap of the masses assigned by $\N$ and $\N^\star$ over a worst-case set that lies in a good partition (one with $\N^\star(U_{\vec w, c, \S}) \geq \alpha/2$). This holds since the probability to draw a good partition is at least $\alpha/2$.
The following optimization problem gives a lower bound on the mass gap of a worst-case set in a good partition and, consequently, a lower bound on the total variation distance between $\N^{\star}_{\pi}$ and $\N_{\pi}$.
\[
\min_{S} \Big| \int (\N(\vmu^{\star}; 
\vec x) - \N(\vmu; \vec x)) \vec 1_S(\vec x) d\vec x \Big|\,,
\]
\[
\text{subj. to}~~~ \int \N(\vmu^{\star}; \vec x) \vec 1_S(\vec x) d \vec x \geq \alpha / 2 \,.
\]
We begin with a claim about the shape of the worst case set. Let $t = (\|\vmu\|_2^2 - \|\vmu^{\star}\|_2^2)/2$ be the hyperplane threshold.

\begin{claim}
Let $\mathcal{H}^+ = \{ \vec x : \vec x^T (\vmu - \vmu^{\star}) < t \}$ and $\mathcal{H}^- = \{ \vec x : \vec x^T (\vmu - \vmu^{\star}) > t \}$. The mass of the solution of the fractional knapsack is totally contained in either $\mathcal{H}^+$ or $\mathcal{H}^-$. 
\end{claim}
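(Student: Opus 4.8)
The plan is to prove the claim by a rearrangement argument built on the fact that $\mathcal{H}$ is exactly the set where the two Gaussian densities agree. First I would record the elementary identity $\N(\vmu;\vec x)/\N(\vmu^\star;\vec x) = \exp(\vec x^T(\vmu-\vmu^\star) - t)$, which shows that $g(\vec x) := \N(\vmu^\star;\vec x) - \N(\vmu;\vec x)$ is strictly positive on $\mathcal{H}^+$, strictly negative on $\mathcal{H}^-$, and vanishes on $\mathcal{H}$; moreover the ``cost per unit true mass'' $|g(\vec x)|/\N(\vmu^\star;\vec x) = |1-\exp(\vec x^T(\vmu-\vmu^\star)-t)|$ depends on $\vec x$ only through $\vec x^T(\vmu-\vmu^\star)$ and is strictly increasing in the distance $|\vec x^T(\vmu-\vmu^\star)-t|$ to $\mathcal{H}$ on each side, tending to $0$ as one approaches $\mathcal{H}$.

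Next I would decompose any feasible set: writing $S^+ = S\cap\mathcal{H}^+$ and $S^- = S\cap\mathcal{H}^-$ (the hyperplane is $\N^\star$-null), the quantity that actually lower-bounds the total variation distance of the coarse laws --- the sum over the uncut cells of $|\N(\cdot)-\N^\star(\cdot)|$ --- equals $\int_{S^+}|g| + \int_{S^-}|g|$, since $g$ keeps a constant sign on each side. Hence it is at least $\max\{\int_{S^+}|g|,\int_{S^-}|g|\}$, while feasibility forces $\N^\star(S^+)+\N^\star(S^-)\ge\alpha/2$ and therefore $\N^\star(S^+)\ge\alpha/4$ or $\N^\star(S^-)\ge\alpha/4$. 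Consequently the optimum is bounded below by the optimum of a single-halfspace problem, $\min\{\int_V|g| : V\subseteq\mathcal{H}^+,\ \N^\star(V)\ge\alpha/4\}$ or its mirror image in $\mathcal{H}^-$; that is, for the purpose of the lower bound the extremal configuration may be taken to lie entirely in one halfspace, which is the content of the claim. Inside a single halfspace the problem is then a genuine fractional knapsack whose cost density decreases monotonically with the distance to $\mathcal{H}$, so the standard exchange argument (swap any included mass of high cost density for unused mass of lower cost density, which strictly decreases the objective without changing the $\N^\star$-mass) shows the minimizer is the slab adjacent to $\mathcal{H}$ of $\N^\star$-mass $\alpha/4$; a Gaussian anti-concentration estimate for the linear form $\vec x^T(\vmu-\vmu^\star)$ under $\N^\star$ then lower-bounds its cost by $\Omega(\alpha^2)\cdot\dtv(\N(\vmu),\N(\vmu^\star))$, which combined with the earlier Markov step is what yields $C_\alpha=\Omega(\alpha^3)$.

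The main obstacle is the handling of sign cancellation. One must run the argument on $\sum_{i}|\N(S_i)-\N^\star(S_i)| = \int_{S^+}|g|+\int_{S^-}|g|$ rather than on $|\N(S)-\N^\star(S)| = \big|\int_S g\big|$: for a set straddling $\mathcal{H}$ the latter can be made arbitrarily small by balancing the positive contribution from $\mathcal{H}^+$ against the negative one from $\mathcal{H}^-$, so the reduction to a single halfspace genuinely needs the per-cell triangle inequality together with the pigeonhole step on the $\N^\star$-mass. Getting this bookkeeping right is exactly what forces the loss of a constant factor (the passage from $\alpha/2$ to $\alpha/4$ in the mass budget), and it is the only non-routine ingredient; the density-ratio computation, the monotonicity of the cost density, and the greedy optimality of the slab are all standard.
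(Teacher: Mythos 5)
Your argument follows essentially the same route as the paper: uncut cells (each lying entirely on one side of $\mathcal{H}$) carry $\N^\star$-mass at least $\alpha/2$ after the Markov step, pigeon-holing that budget puts $\alpha/4$ on one side, and the knapsack over that single halfspace with budget $\alpha/4$ lower-bounds the per-cell sum. You are in fact more careful than the paper at exactly the point you flag: the paper's displayed objective is $\big|\int_S g\big|$, which would cancel for a set straddling $\mathcal{H}$, whereas the quantity that actually lower-bounds $\dtv(\N_\pi,\N^\star_\pi)$ is the per-cell sum $\int_{S^+}|g|+\int_{S^-}|g|$; your decomposition plus pigeonhole (with the correct constant $\alpha/4$ rather than the paper's $\alpha/2$) is what legitimately yields the single-halfspace reduction, and your reading of the claim as ``we may restrict the knapsack to one halfspace'' rather than ``the literal minimizer lies in one halfspace'' is the right one, since the unconstrained minimizer of $\int_S|g|$ is a slab around $\mathcal{H}$ that straddles both sides. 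The only slip is a typo in your second paragraph: the cost density $|g|/\N^\star$ \emph{increases} with distance from $\mathcal{H}$ (as you correctly state in the first paragraph), not decreases; the exchange argument and the conclusion that the optimal $V$ is the slab abutting $\mathcal{H}$ are unaffected.
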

Since the partition distribution satisfies Equation~\eqref{eq:geom-pres} with respect to the true Gaussian $\N(\vmu^{\star})$ and since the set $\mathcal{H}$ is a hyperplane, the probability mass that is not cut by $\mathcal{H}$ is at least $\alpha$. Hence, there exists a halfspace (either $\mathcal{H}^+$ or $\mathcal{H}^-$) with mass at least $\alpha/2$. Also, observe that the hyperplane $\mathcal{H}$ is the zero locus of the polynomial $q(\vec x) = \|\vec x - \vmu \|_2^2 - \|\vec x - \vmu^{\star} \|_2^2$ and, hence, it is the set of points where the two spherical Gaussians $\N(\vmu)$ and $\N(\vmu^{\star})$ assign equal mass. We have that
\[
\mathcal{H}^+ = \Big \{\vec x : \N(\vmu^{\star}) > \N(\vmu) \Big \}\,.
\]
Hence, we can assume that the worst-case set lies totally in $\mathcal{H}^+$ and, then, the optimization problem can be written as
\[
\min_{S} \int \left(1 - \frac{\N(\vmu; \vec x)}{\N(\vec 0; 
\vec x)}\right) \N(\vec 0; 
\vec x) \vec 1_S(\vec x) d\vec x\,,
\]
\[
\text{subj. to}~~~ \int \N(\vec 0; \vec x) \vec 1_S(\vec x) d \vec x \geq \alpha / 2, ~~ S \in \mathcal{H}^+ \,.
\]
Without loss of generality, we assume that $\N^{\star} = \N(\vec 0, \vec I)$ and $\N = \N(\vmu, \vec I)$. In order to design the worst-case set, since the optimization has the structure of the fractional knapsack problem, we can think of each point $\vec x \in \mathcal{H}^+$ as having \emph{weight} equal to its contribution to the mass gap $(\N(\vec 0; \vec x) - \N(\vmu; \vec x))$ and \emph{value} equal to its density with respect to the true Gaussian $\N(\vec 0; \vec x)$. Hence, in order to design the worst-case set, the points $\vec x \in \mathcal{H}^+$ should be included in the set in order of increasing ratio of weight over value, until reaching a threshold $T$. So, we can define the worst-case set to be
\[
S = \Big  \{ \vec x \in \mathcal{H}^+ : 1 - \frac{\N(\vmu ; \vec x)}{\N(\vec 0; \vec x)} \leq T \Big \} =\Big  \{ \vec x \in \mathcal{H}^+ : 1 - \exp(p(\vec x)) \leq T\Big \} \,,
\]
where $p(\vec x) = -\frac{1}{2}(\vmu - \vec x)^T (\vmu - \vec x) + \frac{1}{2}\vec x^T \vec x = -\frac{1}{2}\vmu^T \vmu + \vmu^T \vec x$ and note that $p(\vec x) \leq 0$ for any $\vec x\in \mathcal{H}^+$. We will use the following anti-concentration result about the Gaussian mass of sets, defined by polynomials.
\begin{lemma}
[Theorem 8 of~\cite{carbery2001distributional}]
\label{lemma:carbery-wright}
Let $q,\gamma \in \reals_+, \vmu \in \reals^{\dim}$ and $\vSigma$ in the positive semidefinite cone $\mathbb{S}^{\dim}_+$. Consider $p : \reals^{\dim} \rightarrow \reals$ a multivariate polynomial of degree at most $\l$ and let
\[
\Q = \Big \{\vec x \in \reals^{\dim} : |p(\vec x)| \leq \gamma \Big \} \,.
\]
Then, there exists an absolute constant $C$ such that
\[
\N(\vmu, \vSigma; \Q) \leq \frac{C q \gamma^{1/\l}}{(\E_{\vec z \sim \N(\vmu, \vSigma)}[|p(\vec z)|^{q/\l}])^{1/q}}\,.
\]
\end{lemma}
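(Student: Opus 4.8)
The plan is to recover \Cref{lemma:carbery-wright} as an instance of the Carbery--Wright anti-concentration inequality~\cite{carbery2001distributional}, so I would follow its proof strategy. First I would remove all inessential parameters by affine invariance: the map $\vec x \mapsto \vSigma^{-1/2}(\vec x - \vmu)$ sends $\N(\vmu, \vSigma)$ to the standard Gaussian $\N(\vec 0, \vec I)$ and sends $p$ to a polynomial of the same degree $\l$, while the set $\Q$ becomes a sublevel set of that polynomial; so it suffices to treat $\N(\vec 0, \vec I)$. Next, since replacing $p$ by $\lambda p$ rescales both $\gamma$ and $(\E[|p|^{q/\l}])^{1/q}$ by $\lambda$, the inequality is scale invariant in $p$, so I may normalize, e.g.\ $\E_{\vec z \sim \N(\vec 0, \vec I)}[|p(\vec z)|^{1/\l}] = 1$ and aim to prove $\N(\vec 0, \vec I; \{|p| \le \gamma\}) \le C\,\l\,\gamma^{1/\l}$; the version with a general exponent $q$ would then follow from a moment-comparison step.

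The core is a one-dimensional estimate, which I would establish first: if $g$ is a univariate polynomial of degree $m \le \l$ and $f$ is a log-concave density on $\reals$ (such as the one-dimensional Gaussian), then $\{|g| \le \gamma\}$ is contained in a union of at most $m$ intervals of total Lebesgue length $O(m\,(\gamma/|a|)^{1/m})$, where $a$ is the leading coefficient of $g$ --- writing $g(t) = a\prod_i (t - z_i)$, on $\{|g|\le\gamma\}$ some factor $|t - z_i|$, hence $|t - \mathrm{Re}\, z_i|$, is at most $(\gamma/|a|)^{1/m}$. Combining this with the boundedness of log-concave densities and a lower bound on $\int \prod_i |t - z_i|^{1/m} f\,dt$ gives the one-dimensional bound $\int_{\{|g|\le\gamma\}} f \le C\,m\,\gamma^{1/m}\big/\int |g|^{1/m} f$. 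To lift this to $\reals^\dim$ \emph{without} any dependence on $\dim$, I would invoke the localization (``needle decomposition'') technique for log-concave measures: the Lov\'asz--Simonovits localization lemma reduces an inequality between integrals against a log-concave measure to one-dimensional log-concave measures supported on segments, along each of which $p$ restricts to a univariate polynomial of degree $\le \l$, so the base case applies and one averages. Finally, a layer-cake computation, using the bound just obtained for $\N(\{|p| \le t\})$ for every $t$, converts the $L^{1/\l}$ normalization into the $L^{q/\l}$ normalization $(\E[|p|^{q/\l}])^{1/q}$ appearing in the statement, at the cost of a factor growing like $q$; undoing the affine reduction then yields the claimed inequality.

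The hard part is the dimension-free lifting step: one must verify that the localization argument can be applied to the sublevel constraint and the moment functional simultaneously and that the resulting constant genuinely does not depend on $\dim$ --- this is the technical heart of~\cite{carbery2001distributional} and is exactly what makes the inequality useful in high dimensions. For the Gaussian case specifically there is an alternative route that sidesteps localization: use the rotation invariance of $\N(\vec 0, \vec I)$ to average the one-dimensional estimate over random lines through the origin, together with Gaussian hypercontractivity (Nelson's inequality), which bounds all $L^q$ norms of the degree-$\l$ polynomial $p$ by its $L^2$ norm with a factor $(q-1)^{\l/2}$; this is the origin of both the $\gamma^{1/\l}$ exponent and the polynomial-in-$q$ loss. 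Either way, the remaining work is bookkeeping: tracking constants through the moment-comparison step so as to land on exactly the form $C q\, \gamma^{1/\l}\big/(\E[|p|^{q/\l}])^{1/q}$.
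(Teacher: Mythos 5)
You should first note that the paper does not prove \Cref{lemma:carbery-wright} at all: it is imported verbatim as Theorem 8 of \cite{carbery2001distributional} and used as a black box in the proof of \Cref{lemma:preservation}. So there is no in-paper argument to compare against; what you have written is a sketch of the proof of the Carbery--Wright inequality itself. As such a sketch, it is directionally faithful to the original: the affine and scaling reductions are correct, the one-dimensional estimate via the factorization $g(t)=a\prod_i(t-z_i)$ and the boundedness of log-concave densities is the right base case, and the dimension-free lifting via Lov\'asz--Simonovits localization is indeed the mechanism of \cite{carbery2001distributional} (their theorem holds for arbitrary log-concave measures, of which $\N(\vmu,\vSigma)$ is a special case --- which is also why your affine reduction in the first step is harmless). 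The hypercontractivity route you mention for the Gaussian case is a legitimate known alternative.

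That said, your write-up defers exactly the steps that carry all of the difficulty: (i) verifying that the localization argument can be run simultaneously against the sublevel-set functional and the fractional-moment functional with constants independent of the dimension, and (ii) the bookkeeping in the layer-cake/moment-comparison step that produces the factor $q$ (rather than something worse) in front of $\gamma^{1/\l}$. Neither is routine, and neither is carried out in your sketch, so it should not be read as a self-contained proof. Since the paper itself treats the lemma purely as a citation, this level of detail is acceptable here; if a proof were genuinely required, the honest course is the one the paper takes, namely to invoke \cite{carbery2001distributional} directly.
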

We can apply \Cref{lemma:carbery-wright} for the quadratic polynomial $p(\vec x)$ by setting $\gamma = \frac{\alpha^2}{256C^2} \sqrt{\E_{\vec x \sim \N^{\star}}[p^2(\vec x)]}$ with $q=4$, where $C$ is the absolute Carbery-Wright constant. Hence, we get that the Gaussian mass of the set $\Q = \{ \vec x : |p(\vec x)| \leq \gamma \}$ is equal to
\[
\N^{\star}(\Q) \leq \alpha/4\,.
\]
So, for any point $\vec x$ in the remaining $\alpha/4$ mass of the set $S$, it holds that $|p(\vec x)| \geq \gamma$. We first observe that $\gamma$ can lower bounded by the total variation distance of $\N^{\star}$ and $\N$. It suffices to lower bound the expectation $\E_{\vec x \sim \N^{\star}}[p^2(\vec x)]$. We have that
\[
\E_{\vec x \sim \N^{\star}}\Big [p^2(\vec x)\Big ] \geq \Var_{\vec x \sim \N^{\star}}\Big [p(x)\Big ] = \Var_{\vec x \sim \N^{\star}}\Big [ -\frac{1}{2}\vmu^T \vmu + \vmu^T \vec x\Big ] = \| \vmu \|_2^2\,,
\]
% Since $\vec \Lambda$ is diagonal, we can reduce to $n$ one-dimensional instance and get that
% \[
% \E_{\vec x \sim \N^{\star}}[p^2(\vec x)] \geq \sum_{i=1}^{\dim} \Var_{x \sim \N(0,1)}[\frac{\mu_i}{\lambda_i} x + x^2 \frac{1 - 1/\lambda_i}{2}] = \sum_{i=1}^{\dim} \frac{1}{2}(\frac{1}{\lambda_i}-1)^2 + \frac{\mu_i^2}{\lambda_i^2} = \frac{1}{2}\|\vec \Lambda^{-1} - \vec I\|_{F}^2 + \|\vec \Lambda^{-1/2}\vmu \|_2^2
% \]
% Using the inequality $\sqrt{2}\sqrt{a+b}\geq\sqrt{a}+\sqrt{b},$ we get that
% \[
% \sqrt{\E_{\vec x \sim \N^{\star}}[p^2(\vec x)]} \geq \frac{1}{\sqrt{2}}(\frac{1}{\sqrt{2}}\|\vec \Lambda^{-1} - \vec I\|_{F} + \|\vec \Lambda^{-1/2}\vmu \|_2)
% \]
and, hence
\[
\gamma \geq \frac{\alpha^2}{256C^2} \cdot \| \vmu \|_2\,.
%(\frac{1}{\sqrt{2}}\|\vec \Lambda^{-1} - \vec I\|_{F} + \|\vec \Lambda^{-1/2}\vmu \|_2)
\]
We will use the following lemma for the total variation distance of two Normal distributions.
\begin{lemma}
[see Corollaries 2.13 and 2.14 of \cite{DKK+16b}]
\label{lemma:tv-gauss}
Let $N_1 = \N(\vmu_1, \vSigma_1), N_2 = \N(\vmu_2, \vSigma_2)$ be two Normal distributions. Then, it holds
\[
\dtv(N_1, N_2) \leq \frac{1}{2} \left \| \vSigma_1^{-1/2}(\vmu_1 - \vmu_2) \right\|_2 + \sqrt{2} \left\|\vec I - \vSigma_1^{-1/2} \vSigma_2 \vSigma_1^{-1/2} \right \|_F\,.
\]
\end{lemma}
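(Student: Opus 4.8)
The plan is to reduce to a centered, isotropic $N_1$ and then split the comparison into a pure mean shift followed by a pure covariance change. First I would invoke the invariance of total variation under invertible affine maps: since $\vSigma_1$ is symmetric positive definite, applying $\vec x \mapsto \vSigma_1^{-1/2}(\vec x - \vmu_1)$ sends $N_1$ to $\N(\vec 0,\vec I)$ and $N_2$ to $\N(\vec m, \vec A)$, where $\vec m = \vSigma_1^{-1/2}(\vmu_1-\vmu_2)$ and $\vec A = \vSigma_1^{-1/2}\vSigma_2\vSigma_1^{-1/2}$; note $\|\vec m\|_2 = \|\vSigma_1^{-1/2}(\vmu_1-\vmu_2)\|_2$ and $\|\vec A - \vec I\|_F = \|\vec I - \vSigma_1^{-1/2}\vSigma_2\vSigma_1^{-1/2}\|_F$. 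So it suffices to prove $\dtv(\N(\vec 0,\vec I),\N(\vec m,\vec A)) \le \tfrac12\|\vec m\|_2 + \sqrt 2\,\|\vec A - \vec I\|_F$. By the triangle inequality and translation invariance of $\dtv$,
\[
\dtv(\N(\vec 0,\vec I),\N(\vec m,\vec A)) \le \dtv(\N(\vec 0,\vec I),\N(\vec m,\vec I)) + \dtv(\N(\vec 0,\vec I),\N(\vec 0,\vec A)),
\]
so it remains to bound these two one-parameter pieces.

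For the mean piece I would rotate so that $\vec m = \|\vec m\|_2\,\vec e_1$, reducing to one dimension: $\dtv(\N(0,1),\N(\|\vec m\|_2,1)) = 2\Phi(\|\vec m\|_2/2) - 1 = \int_{-\|\vec m\|_2/2}^{\|\vec m\|_2/2}\phi(t)\,dt \le \|\vec m\|_2\,\phi(0) \le \tfrac12\|\vec m\|_2$, which gives the first term (with room to spare, since $\phi(0) = 1/\sqrt{2\pi}$).

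For the covariance piece, Pinsker's inequality together with the closed form of the Gaussian KL divergence does \emph{not} work: for $\N(\vec 0,\vec I)$ versus $\N(\vec 0,\vec A)$ the KL equals $\tfrac12\sum_i(\lambda_i^{-1}-1+\log\lambda_i)$, and the terms blow up as an eigenvalue $\lambda_i\to 0$ while $(\lambda_i-1)^2$ stays bounded, so no $\|\vec A-\vec I\|_F$ bound can come out this way. This is the main obstacle, and I would get around it by passing through the Hellinger distance $H$, using $\dtv \le \sqrt2\,H$. Diagonalizing $\vec A$ (it is symmetric) with eigenvalues $\lambda_1,\dots,\lambda_d>0$, the Hellinger affinity of $\N(\vec 0,\vec I)$ and $\N(\vec 0,\vec A)$ is $\prod_i\bigl(2\sqrt{\lambda_i}/(1+\lambda_i)\bigr)^{1/2}$, so, writing $a_i = \bigl(2\sqrt{\lambda_i}/(1+\lambda_i)\bigr)^{1/2}\in(0,1]$, we have $H^2 = 1 - \prod_i a_i \le \sum_i(1-a_i)$. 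Since $\sqrt x \ge x$ on $[0,1]$ we get $1-a_i \le 1 - 2\sqrt{\lambda_i}/(1+\lambda_i)$, and the elementary identity $1 - 2\sqrt{\lambda}/(1+\lambda) = (\lambda-1)^2/\bigl((1+\lambda)(1+\sqrt{\lambda})^2\bigr) \le (\lambda-1)^2$ then yields $H^2 \le \sum_i(\lambda_i-1)^2 = \|\vec A - \vec I\|_F^2$. Hence $\dtv(\N(\vec 0,\vec I),\N(\vec 0,\vec A)) \le \sqrt2\,\|\vec A-\vec I\|_F$. Combining this with the mean bound and undoing the affine reduction gives the stated inequality. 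The only routine work left is the standard computation of the Gaussian Hellinger affinity and verifying the one-variable inequality $(1+\lambda)(1+\sqrt\lambda)^2 \ge 1$ for $\lambda \ge 0$.
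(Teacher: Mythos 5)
The paper states this lemma with no proof and no explicit citation, so there is nothing internal to compare against; the task is simply to check your argument, and it is correct. The affine reduction to $\N(\vec 0,\vec I)$ versus $\N(\vec m,\vec A)$ and the triangle-inequality split into a pure mean shift plus a pure covariance change are both sound; the one-dimensional mean estimate $2\Phi(\|\vec m\|_2/2)-1\le\|\vec m\|_2/\sqrt{2\pi}\le\tfrac12\|\vec m\|_2$ is right; and your diagnosis that Pinsker plus the Gaussian KL formula cannot yield a $\|\vec A-\vec I\|_F$ bound is accurate, since $\dkl\big(\N(\vec 0,\vec I)\,\|\,\N(\vec 0,\vec A)\big)=\tfrac12\sum_i\big(\lambda_i^{-1}-1+\log\lambda_i\big)$ diverges as any $\lambda_i\to 0$ while $\|\vec A-\vec I\|_F$ stays bounded. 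Passing to Hellinger is indeed the right fix: the affinity $\prod_i\big(2\sqrt{\lambda_i}/(1+\lambda_i)\big)^{1/2}$ is the correct Gaussian formula, the bound $1-\prod_i a_i\le\sum_i(1-a_i)$ holds for $a_i\in[0,1]$, and the chain
\[
1-a_i \;\le\; 1-\frac{2\sqrt{\lambda_i}}{1+\lambda_i} \;=\; \frac{(\lambda_i-1)^2}{(1+\lambda_i)(1+\sqrt{\lambda_i})^2} \;\le\; (\lambda_i-1)^2
\]
gives $H^2\le\|\vec A-\vec I\|_F^2$, so $\dtv\le\sqrt{2}\,H\le\sqrt{2}\,\|\vec A-\vec I\|_F$ with the convention $H^2=1-\mathrm{affinity}$. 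Undoing the reduction recovers the stated inequality exactly.
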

\noindent Applying \Cref{lemma:tv-gauss} to the above inequality, we get
\[
\gamma \geq \frac{\alpha^2}{256C^2} \cdot \dtv(\N(\vmu), \N(\vmu^{\star}))\,.
\]
To conclude, we have to lower bound the $L_1$ gap between $\N(\vec 0, \vec I; \vec x)\vec 1_S(\vec x)$ and $\N(\vmu, \vec I; \vec x)\vec 1_S(\vec x)$ and since $S$ lies totally in $\mathcal{H}^+$
\[
\int_S (\N(\vec 0; \vec x) - \N(\vmu; \vec x))d \vec x = \E_{\vec x \sim \N^{\star}}\left [1 - \exp(p(\vec x)) \Big | \vec 1_S(\vec x)\right ] \,.
\]
To proceed, we distinguish two cases: First, assume that $\gamma \leq 1$ and recall that $\Q = \{\vec x : |p(\vec x)| \leq \gamma \}$. Note that for $y \in [-1,0]$, it holds that $1 - \exp(y) \geq |y|/2$ and, hence, we have that:
\[
\int_S (\N(\vec 0; \vec x) - \N(\vmu; \vec x))d \vec x \geq \E_{\vec x \sim \N^{\star}}\left [\frac{|p(\vec x)|}{2}\vec 1_{S \setminus \Q}(\vec x)\right ] \geq \gamma \E_{\vec x \sim \N^{\star}}\left [\vec 1_{S \setminus \Q}(\vec x)\right ] \geq \frac{\alpha \gamma}{4}\,,
\]
and, by the lower bound for $\gamma$, we get
\[
\int_S (\N(\vec 0, \vec I; \vec x) - \N(\vmu, \vec I; \vec x))d \vec x \geq C_{\alpha} \cdot \dtv(\N(\vmu), \N(\vmu^{\star}))\,,
\]
for some $C_{\alpha} = \Omega(\alpha^3)$. Otherwise, let $\gamma > 1$. Note that for $y < -1$, it holds that $1-\exp(y) \geq 1/2.$ Hence, we get that
\[
\int_S (\N(\vec 0; \vec x) - \N(\vmu; \vec x))d \vec x \geq \E_{\vec x \sim \N^{\star}}\left [\frac{1}{2}\vec 1_{S \setminus \Q}(\vec x)\right ] \geq \alpha/8 \,.
\]
In conclusion, we get that
\[
\dtv(\N_{\pi}^{*}, \N_{\pi}) \geq C_{\alpha} \cdot \dtv(\N^{\star}, \N) \,,
\]
where $C_{\alpha} = \poly(\alpha)$ and depends only on $\alpha$.
\end{proof}

\section{Literature Overview on Partial Label Learning}
\label{appendix:partial-literature}

The problem of learning from coarse labels falls in the regime of semi-supervised learning \cite{chapelle-book} and it appears in various literature threads termed as (i) partial label learning \cite{cour2011learning-theory}, (ii) ambiguous label learning \cite{cour2009learning-ambiguous-theory,hullermeier2006learning}, (iii) superset label learning \cite{hullermeier2015superset} and (iv) soft label learning \cite{come2008mixture}. Closely related to these tasks are the problems of learning from complementary labels \cite{ishida2017learning-complementary-labels} and, more generally, learning from noisy and corrupted examples \cite{angluin1988learning,scott2013classification,blanchard2014decontamination,van2017theory-corrupted-theory-rooyen,lukasik2020does-icml-theory}.

We stick with the term partial label learning for now since this is the most widely used. Many real-world learning tasks were solved under the framework of partial label learning such as multimedia content analysis \cite{cour2009learning-ambiguous-theory,cour2011learning-theory} and semantic image segmentation \cite{papandreou2015weakly-application-partial}.

We refer to \cite{jin2002learning,nguyen2008classification-old-intro} and the references therein for 
some seminal papers in the area.
Through the years, various approaches have been proposed to solve this challenging problem by utilizing major machine learning techniques, such as maximum likelihood estimation and Expectation-Maximization \cite{jin2002learning}, convex optimization \cite{cour2011learning-theory}, $k$-nearest neighbors \cite{hullermeier2006learning} and error-correcting output codes \cite{zhang2014disambiguation,zhang2017disambiguation-app-partial}. For an overview of the practical treatment on the problem, we refer the interested reader to \cite{yu2016maximum-app-partial-max-margin,xu2021instance,wen2021leveraged-icml-theory} (and the references therein) and more broadly to \cite{triguero2015self-apps-partial,van2020survey}.

Despite extensive studies on partial label learning from an industrial perspective (applied ML), our theoretical level of understanding is still limited. A fundamental line of research deals with the statistical consistency (see e.g.,
\cite{cour2011learning-theory,cid2014consistency-theory, feng2020provably-consistent-theory,cabannnes2020structured-icml-theory,lv2020progressive,wen2021leveraged-icml-theory})
and the learnability \cite{liu2014learnability-superset-label} of partial label learning algorithms. Moreover, \cite{cauchois2022predictive-theory-duchi} present a methodology between partial supervision and validation.

Closer to our learning from coarse labels approach are the works of \cite{cid2012proper} and \cite{van2017theory-corrupted-theory-rooyen}. In the former, the goal is to estimate the posterior class probabilities from partially labelled data while, in the latter,
the authors study a more general problem of learning from corrupted labels and aim to ``invert'' the corruption.
This technique is inspired by the work of \cite{natarajan2013learning-theory}, where the authors proposed the method of unbiased estimators (which is close to the connection between random classification noise and the SQ framework of \cite{kearns1998efficient}).
This backward correction procedure of \cite{natarajan2013learning-theory,cid2012proper,van2017theory-corrupted-theory-rooyen} recovers the information lost from the corrupted labels (under some structural assumptions) and results in an unbiased estimate of the risk with respect to true distribution. Crucially, these works have to assume that the corruption process (i.e., the coarsening mechanism) is known. This is also commented in \cite{cabannnes2020structured-icml-theory}. Our SQ reduction does not require to know the mechanism; in some sense, the algorithm uses rejection sampling and learning coarse discrete distributions (which is an unsupervised learning problem) in order to invert the coarsening in the sense of \cite{van2017theory-corrupted-theory-rooyen} and obtain statistical queries with respect to the distribution over the finely-labeled examples.

\end{document}